\DeclareMathAlphabet{\mathpzc}{OT1}{pzc}{m}{it}
\newtheorem{theorem}{Theorem}
\newtheorem{proposition}[theorem]{Proposition}
\newtheorem{corollary}[theorem]{Corollary}
\newtheorem{remark}{Remark}
\newtheorem{assumption}{Assumption}
\DeclareMathOperator*{\argmax}{arg\,max}
\newcommand{\CPMAB}{\textbf{SelectiveBandits}}
\newcommand{\CPDMAB}{\textbf{PrudentBandits}}
\newcommand{\gap}{\mbox{(gap)}}
\title{Generalized non-stationary bandits}
\author[1]{Anne Gael Manegueu}
\author[1]{Alexandra Carpentier}
\author[2]{Yi Yu}
\affil[1]{Institute for mathematical stochastics, Otto-von-Guericke University of Magdeburg}
\affil[2]{Department of Statistics, University of Warwick}
\date{}
\begin{document}

\maketitle
\begin{abstract}%

In this paper, we study a non-stationary stochastic bandit problem, which generalizes the switching bandit problem. On top of the switching bandit problem (\textbf{Case a}), we are interested in three concrete examples: (\textbf{b}) the means of the arms are local polynomials, (\textbf{c}) the means of the arms are locally smooth, and (\textbf{d}) the gaps of the arms have a bounded number of inflexion points and where the highest arm mean cannot vary too much in a short range. These three settings are very different, but have in common the following: (i) the number of similarly-sized level sets of the logarithm of the gaps can be controlled, and (ii) the highest mean has a limited number of abrupt changes, and otherwise has limited variations. We propose a single algorithm in this general setting, that in particular solves in an efficient and unified way the four problems (a)-(d) mentioned.

\medskip
\textbf{Keywords}: Multi-armed bandits, Non-stationarity, Change points. 
\end{abstract}

\section{Introduction}
\label{sec:intro}
The multi-armed bandit (MAB) problem is an online learning problem with partial feedback, where an agent sequentially makes decisions in the presence of uncertainty. At each time $t$ until the end of the budget $T$, the agent chooses an \textit{arm} $k_t$ among $K$ arms, and receives a reward $X_{k_t,t}$ corresponding to this arm. The most common objective for the agent is to maximize her cumulative reward, namely the sum of rewards that she collected.

In the standard stochastic MAB setting, each arm is characterized by a reward distribution, and every time that the arm is sampled, the agent receives an independent sample from this distribution, see~\citet{bubeck2012regret, lattimore2018bandit} for surveys. However, in many real-world applications, the best arm might change over time, and more generally the environment might be non-stationary. A large part of the bandit literature - on adversarial bandits, see~\cite{bubeck2012regret, lattimore2018bandit} also for surveys - is devoted to the question of finding strategies that track the oracle stationary policy\footnote{I.e.~that aims to have a cumulative reward not much smaller than the best (oracle) stationary policy.}, namely the policy that would always pull the arm that has the best cumulative reward. This literature makes few assumptions on the data generating process - typically only assuming that the rewards are bounded. However, this literature usually tracks a small class of policies, against which the agent competes.  Another part of the literature is devoted to the question of comparing the agent's policy to more complex policies, i.e.~comparing it to the oracle non-stationary policy in a class of non-stationary policies, under more restrictive assumptions. This is the setting that we will study in this paper.

As for the research on non-stationary MAB where one tracks oracle non-stationary strategies, we can divide it into two main categories that correspond to two main settings: the \textit{rested bandit setting} \citep[see e.g.][]{10.2307/2335176, bouneffouf2016multi, bouneffouf2014contextual, levine2017rotting, seznec2020single} and the \textit{restless bandit setting} \citep[see e.g.][]{gafni2018learning, liu2012learning, meshram2018whittle, besson2018doubling, cheung2019learning, russac2019weighted, wei2017tracking, seznec2020single}. In the rested bandit setting, the distribution of an arm may change when it is sampled, while in the restless bandit setting, the distribution of an arm may vary as a function of the current time. In this paper, \textit{we focus on the restless bandit setting}. 
In particular, the means of the arm distributions change as a function of time. We write $\mu_k(t)$ for the mean of arm $k$ at time $t$. In this setting, we evaluate the performance of an algorithm through the expected cumulative regret where we compare to the \textit{best non-stationary policy}: $R_T = \sum_t \max_k \mu_{k}(t) - \sum_t \mathbb E (X_{k_t,t})$, i.e.~we compare the performance of the agent to that of an oracle who would know \textit{at each time step} which arm has the best mean. 

The most commonly studied restless bandit setting is the piecewise stationary setting, or switching bandit setting. We refer to this setting in this paper as \textbf{Case a)}. In this setting, it is assumed that the number of times that the arm means are allowed to change is bounded. More precisely, the time steps $[T] = \{1, \ldots, T\}$ at which the agent makes a decision, are partitioned into $M$ intervals delimited by \textit{change points} $\bar \tau_m$ such that $1 = \bar \tau_1 < \ldots < \bar\tau_{M} < \bar\tau_{M+1} = T+1$. In the piecewise stationary setting, or switching bandit setting, the means $\mu_k(t)$ are assumed to be constant on $[\bar\tau_m, \bar\tau_{m+1})$ - yet might vary across intervals corresponding to another index $m' \neq m$. 
To the best of our knowledge, this setting was first formally studied in \citet{garivier2011upper}, where two methods the sliding-window upper confidence bounds (SW-UCB) and the discounted UCB \citep[see also][]{kocsis2006discounted} are proposed, providing an optimal (up to logarithmic terms) regret of order $\sqrt{MKT}$.  The paper~\cite{auer2002finite} also introduced an algorithm that would work well in this setting, although the setting was not introduced in this paper. These seminal works led to a rich line of work termed the \textit{switching bandits}, studied by \cite{allesiardo2015exp3}, \citet{hartland2007change}, \citet{yu2009piecewise}, \citet{liu2018change} and  \citet{cao2019nearly}, \citet{besson2018doubling} among others. Recently, the paper~\cite{auer2019achieving} provided an algorithm that also adapts to the number of change points $M$. And \cite{seznec2020single} studied among other things adaptivity to $M$ in the case where the arm means are monotone.

Beyond piecewise stationary settings, other type of nonstationary settings have also been studied. A seminal paper for non-stationary bandits is~\citet{besbes2014stochastic}, that proposes a very flexible setting for dealing with non-stationary environments. In this paper, it is assumed that the sum over time of the supremum of the amount of deviation on the mean of the arms is bounded by $V_T$. They prove that the minimax regret in this case is - up to logarithmic terms - $(KV_T)^{1/3} T^{2/3}$. \citet{cheung2019learning, russac2019weighted} provided an extension of this setting in the linear bandits framework, and~\cite{chen2019new} provided a method hat is adaptive to $V_T$. In \cite{burtini2015improving}, a modification to the linear model of Thompson sampler is proposed and deals with a general non-stationary means but lacks theoretical supports.  \cite{combes2014unimodal} considered nonstationary means, which are Lipschitz continuous and unimodal.

In this paper, we study a more general framework of changes.  To be specific, in addition to a few abrupt changes in the means of the arms, there could also be \textit{on top of that} some less relevant changes in the means \textit{at all times} - for instance in the important cases where the means of the arms are well-approximated locally by a polynomial, or when the means of the arms are locally smooth, or when the \textit{arm gaps} have a bounded number of inflexion points. Motivated by these examples, we propose in this paper an extension of the switching bandit setting that takes into account in an unified way such cases - which we describe more in details below. But before doing that, we summarize more precisely the main examples (\textbf{Cases b)} to \textbf{d)}) that motivated our work below. 

\noindent \textbf{Case b)} In this case we assume that the means of all arms are local polynomials. Namely, we can define functions $f_k$'s over $[0,1]$ such that (i) for any $t \in [T]$, and any $k$, we have $\mu_k(t) = f_k(t/T)$ and (ii) for any $k$, we can partition $[0,1]$ in $M^*$ intervals such that $f_k$ is a local polynomial of degree at most $\gamma^*$ over these intervals, and the sum in absolute value of the coefficients of such a polynomial is upper bounded by $u^*$. We prove in this case an upper bound on the expected regret of order
    $$K\sqrt{TM^*(\gamma^* + 1)}\log^{3/2}(T) + Ku^*.$$
    This case encompasses, in particular, the case where the arm means are locally linear ($\gamma^* = 1$) or locally quadratic ($\gamma^* = 2$). In both cases, we obtain here a regret of same order up to logarithmic terms as in the switching bandit setting when \textit{the number of change-points in each mean is bounded by $\bar M$} - which implies at most $M=\bar MK$ intervals where all means are constant in the switching bandit setting. And so, an extension of switching bandits to e.g.~locally linear case or locally quadratic means of the switching bandit setting is not harder than the classical switching setting.
    
\noindent \textbf{Case c)} In this case we assume that the means of all arms is locally smooth. Namely, we can define functions $f_k$'s over $[0,1]$ such that (i) for any $t \in [T]$, and any $k$, we have $\mu_k(t) = f_k(t/T)$ and (ii) for any $k$, we can partition $[0,1]$ in $M^*$ intervals such that the function $f_k$ is $\alpha$-H\"older smooth over each interval. We prove in this case an upper bound on the expected regret of order
    $$ \sqrt{KTM^*}\log (T) + (K\log T)^{\frac{2\alpha}{2\alpha+1}} T^{\frac{\alpha+1}{2\alpha+1}}.$$
    In the case where the functions $f_k$'s are $\alpha$-H\"older smooth on the entire interval $[0,1]$, we get a bound of order $(K\log (T))^{\frac{\alpha}{2\alpha+1}} T^{\frac{\alpha+1}{2\alpha+1}}$, which is, when it comes to the dependence in $T$, of order $T$ times the minimax estimation rate of $\alpha$-H\"older smooth functions.

\noindent \textbf{Case d)} We first introduce the \textit{gaps} $\Delta_k(t) = \max_{k'} \mu_{k'}(t) - \mu_k(t)$ . In this case we assume that the gaps $\Delta_k(t)$ have a number of \textit{inflexion} points bounded by $\upsilon^*-1 \geq 0$. On top of this, the optimal mean $\max_k \mu_k(t)$ is not allowed to vary too quickly - i.e.~the function $\max_k \mu_k(t)$ is such that its deviations are bounded by some $B^* \geq 0$ on any time interval of length bounded by $K$. We prove in this case a bound on the expected regret of order
    $$K\sqrt{T\upsilon^*} \log^{3/2}(T) +  TB^*.$$
    Whenever $B^*$ is smaller than $K/\sqrt{T}\log^{3/2}(T)$, - not too much short-term variation in the optimal mean - we obtain here a rate of same order up to logarithmic terms as in the switching bandit setting when \textit{the number of change-points in each mean} is bounded by $\upsilon^*-1$. Our assumptions are much weaker than the one for switching bandits: we just assume that the gaps have a bounded number of inflexion points, and that the only smoothness assumption is imposed on the optimal mean.

All these cases are extensions of the switching bandit setting, but are covered by the assumptions in~\cite{besbes2014stochastic}. However, the only result which can be partially recovered by the algorithm in~\cite{besbes2014stochastic} is \textbf{Case c)}. Regarding the other cases, it is clear that the other cannot be recovered as the rate in~\cite{besbes2014stochastic} is of larger order than $T^{2/3}$ in all of our settings.

Throughout this paper, our assumptions on the gaps $\Delta_k(\cdot)$ and means $\mu_k(\cdot)$ can be summarized as follows: (i) for some $B^* \geq 0$, the optimal mean $\max_k \mu_k(.)$ cannot vary by more than $B^*$ on any interval of length $K$ intersected with $[\bar\tau_m, \bar\tau_{m+1})$, and (ii) over the intervals $[\bar\tau_m, \bar\tau_{m+1})$, the gaps $\Delta_k(.)$ are allowed to vary only by an amplitude that is proportional to themselves\footnote{I.e.~for any $t,t' \in  [\bar\tau_m, \bar\tau_{m+1})$, we have either $\frac{1}{2^{1/4}} \leq \frac{\Delta_k(t)}{\Delta_k(t')} \leq 2^{1/4}$, or $\Delta_k(t) \leq 2(T^{-1/2} \lor B^*)$.}. These assumptions are stated precisely in Assumption~\ref{As:2-unobs1} and are satisfied in all four \textbf{Cases a)} to \textbf{d)} for some specific choice of parameters. We provide an algorithm, \CPDMAB, whose regret is of order
$$\sqrt{KTM} \log T + TB^*.$$
This algorithm is related to existing algorithms for switching bandits - in that it contains a method for detecting change points, and that it forces regular exploration of arms that have been ruled out as sub-optimal, to avoid the event that they become optimal without the algorithm realising it. However a main difference with respect to existing strategies for switching bandits is on the change point detection procedure: first, we do not aim at detecting change points in the means, but just in the \textit{gaps}, and second, we do not aim at detecting all the change points in the gaps - as there might be too many - but just the ones that are \textit{significant enough} for the sub-optimal arms to become optimal.\\
In the switching bandit setting described in \textbf{Cases a)}, the regret of our algorithm is as needed of order $\sqrt{KTM} \log T$. For the three other cases \textbf{Cases b)} to \textbf{d)}, the regret is as described above - when adjusting the parameters of the algorithms to the cases, as described in Section~\ref{sec:conse}.

 The outline of this paper is as follows. In Section~\ref{Gapnotobserved}, we describe the setting of this paper. In Section~\ref{algorithm}, we describe our algorithm \CPDMAB, and we provide bounds on its cumulative regret in Section~\ref{regretanalysis}. In Section~\ref{sec:conse}, we provide some corollaries of the aforementioned regret bounds in all \textbf{Cases a)} to \textbf{d)}. Finally in Section~\ref{sec:related Work}, we discuss our results and provide some research directions. For space reason, all the proofs are in the appendix, as well as an alternative setting of interest, in which we also provide results, and in which, instead of observing noisy evaluations of means of the arms, we observe noisy evaluations of the means of the gaps - see Appendix~\ref{sec-gap-obs-model} for the setting and Appendix~\ref{regretanalysisgap2} for the associated results.

\section{Setting}
\label{Gapnotobserved}

We consider the \textit{stochastic non-stationary bandit setting}, where the rewards observed by the learner for its actions, are noisy version of the true mean. In this setting, the learner disposes of a finite set of arms $[K]=\{1, \ldots, K\}$ to sequentially play during $T$ time steps. at each time $t$ in $[T]=\{1, \ldots, T\}$.  At each time $t$, each arm $k \in [K]$ is characterised by an \textit{unknown} reward distribution with mean $\mu_k(t)=f_{k}(t/T)$, where $f_k(\cdot)$ is an \textit{unknown function} defined on $[0, 1]$. Moreover the (not necessarily observed) reward 
at time $t \in [T]$ and for arm $k\in [K]$ is defined as:
    \begin{equation}\label{eq-obs-model-01}
        X_{k,t}= \mu_{k}(t) +\epsilon_t^{(k)},
    \end{equation}
 where $\epsilon_t^{(k)}$ is a random noise. Let us write $k^*_{t}$ for a best arm at time~$t$, i.e.~$k^*_{t} \in \argmax_{k\in [K]} \mu_k(t)$. We define the gap between an arm $k$ and a best arm at a given time $t$ as $\Delta_{k}(t) = \mu^{*}(t) - \mu_{k}(t)$, where $\mu^{*}(t) =  \mu_{k^*}(t)$ is the mean of arm $k^*_{t}$.
 
 At each time $t\in [T]$, the learner chooses an arm $k_t \in [K]$ and \textit{observes the reward} which corresponds to arm $k_t$, namely $X_{k_t,t}= \mu_{k_t}(t) +\epsilon_t^{(k_t)}$. 
  Throughout this section, we assume the following. 
  \begin{assumption}[Independent and bounded reward values]\label{As:1-unobs1}\label{As:1-unobs}
The rewards $(X_{k,t})_{k\in [K], t\in [T]}$ defined in Equation~\eqref{eq-obs-model-01} belong to $[0,1]$, and are independent both across arms and across time. In addition, for any $k\in [T]$ and $t\in [T]$, $\mathbb E(X_{k,t}) = \mu_{k}(t)$, i.e.~the noise $\epsilon_t^{(k)}$ is centered.
\end{assumption}

This assumption is common in the existing literature \citep[e.g.][]{liu2017change, cao2019nearly, garivier2011upper}. Instead of assuming that the $X_{k,t}$ belong to $[0,1]$, we could have assumed that they are sub-Gaussian, without changing the setting or the algorithms that will be provided later.

\noindent \textbf{Goal: regret minimization.} The goal of this paper is to maximize the expected cumulative reward after $T$ time steps, i.e.~$\sum_{t = 1}^T \mathbb E(X_{t}) = \sum_{t = 1}^T \mathbb{E} (X_{k_t, t})$.

 We compute the regret compared to an \textit{optimal non-stationary policy} that chooses an optimal arm at each time $t$, i.e.~$k^*_t \in \argmax_{k \in \mathcal{K}} \mu_k(t)$.  The regret can therefore be formulated as
 \[
        R_T = \sum_{t=1}^T  \max_{k \in [K]} \mu_k(t) - \mathbb E \sum_{t=1}^T X_t = \sum_{t=1}^T  \left\{\max_{k \in [K]} \mu_k(t) - \mu_{k_t}(t) \right\} = \mathbb E \sum_{t=1}^T \Delta_{k_t}(t),
    \]
 
\noindent \textbf{The non-stationary environment.} 
In this paper, we consider a non-stationary environment, where the gaps $\Delta_k(t)$ of each arm $k\in [K]$, and optimal mean $\mu^*(t)$, may change (abruptly) at unknown time instants called \textit{change-points} or \textit{breakpoints}, and may also change less abruptly between any two change-points. We first introduce the \textit{change-points} which are integers (unknown to the agent) such that $1 = \bar \tau_1<  \ldots < \bar \tau_M < \bar \tau_{M+1} = T+1$, where $M \in \mathbb N^*$. Note that $[T]= \bigcup_{m= 1}^{M} [\bar\tau_m, \bar\tau_{m+1}) \cap \mathbb N$, and we refer to the intervals $([\bar\tau_m, \bar\tau_{m+1}))_{m \in \{0,\dots, M\}}$ as the 
    \textit{connex components}. 
    
The following assumption concerns the optimal mean of the arms $\mu^*(\cdot)$, and the amount by which it is allowed to vary.
\begin{assumption}\label{As:2-unobs1}\label{As:2-unobs}
There exists $B^* \geq  0$ such that the following holds. For any $t, t' \in [\bar \tau_m, \bar \tau_{m+1})$, with $|t - t'| \leq K$, we have $|\mu^*(t) - \mu^*(t')| \leq B^*$.
\end{assumption}

Through Assumption~\ref{As:2-unobs1}, we first assume that between any two subsequent change-points, the value of the optimal mean $\mu^*(.)$ can change at most up to $B^*$ within $K$ time steps. This implies that $\mu^*$ may change abruptly at the change-points $\bar \tau_m$, and that on any interval $[\bar \tau_m, \bar \tau_{m+1})$, $\mu^*$ might change as well, but in a controlled way. Namely, inside any $[\bar \tau_m, \bar \tau_{m+1})$ and within a window of time of size $K$, only changes of magnitude less than $B^*$ are tolerated for  $\mu^*$ - on the other hand, the index of best arm $k^*_t$ can change at any time in an arbitrary fashion.

This assumption is clearly satisfied in the context of switching bandits (\textbf{Case a)}) - as the value of the best mean does not change over $[\bar \tau_m, \bar \tau_{m+1})$ - and also for \textbf{Cases b)}, \textbf{c)} and \textbf{d)}.

We now provide the following assumption which restricts the variations of the gaps between two change-points.
\begin{assumption}
\label{As:2-sec-obs-gap1} \label{As:2-sec-obs-gap}
There exists $B^* \geq 0$ such that the following holds. For any $k \in [K]$ and any $m \in \{1, \ldots, M\}$, for any $t,t' \in [\bar\tau_m, \bar\tau_{m+1}) \cap \mathbb{N}$:
    \begin{align*}
        2^{-1/4} \leq \Delta_k(t)/\Delta_k(t') \leq 2^{1/4}  \quad \mbox{or} \quad \Delta_k(t) \in [0, \, 2 (T^{-1/2} \lor B^*)].
\end{align*}
\end{assumption}
It can be interpreted as follows: on the intervals $[\bar\tau_m, \bar\tau_{m+1})$, (i) either the gap of arm $k$ does not deviate by more than a multiplicative constant time itself, or (ii) the gap is and remains small. In other words, this is an assumption on the number of connex components.  The level sets sizes form a geometric sequence, therefore the number of level sets is a logarithmic function of the arm gaps.

\section{Algorithm}
\label{algorithm}

In this section, we propose a phase-based MAB algorithm, termed the \CPDMAB~detailed in Algorithm~\ref{alg:select}. The algorithm functions by actualising and sampling an \textit{active arm set}. In parallel, the algorithm looks for \textit{significant change point}. When such a significant change point is detected, the algorithm resets its knowledge and starts a new \textit{episode}. This algorithm includes a change point detection subroutine and enforces a limited exploration of arms that have been discarded from the active set. This is related to existing algorithms for switching bandits, but there are some important differences - in particular in the significant change point detection routine - that we will highlight.

The \CPDMAB~algorithm is executed in \textit{rounds}, which are grouped into the aforementioned episodes - where we remind that the episodes are delimited by the detected significant change points.

More precisely, in each episode: the algorithm divides time into \emph{rounds} indexed by $r$, the duration of which corresponds to the cardinality of the active arms set at round $r$, that we write $K_{r} \subset \mathcal{K}$. The active arm set $K_r \subset [K]$ consists not only of arms that have not been ruled out as sub-optimal at the current round, but also of sub-optimal arms fulfilling the selection conditions. We will elaborate this later.
All arms in $K_{r}$ are sampled once at round $r$. 
For simplicity, we assume that in every round, the arms in the active arm set are sampled consecutively in the increasing order of arm indices.

At the end of round $r$, the arms are also subject to the change point detection mechanism $\mathrm{CP}_r$ - see Algorithm~\ref{alg:changepoint}, we will also describe this more in details later on. If a change point is detected, then a new episode starts and the algorithm resets its information: the starting point of round $r$ is then declared to be an estimated change point, written $\rho_{M_r} = r$. This marks the end of episode $M_r$, where $M_r$ is the index of the current episode. Thereafter, a \emph{new episode} $M_{r+1}= M_r + 1$ starts. The algorithm resets its knowledge in that it does not take into account any data that it collected before the detected significant change point.

The algorithm terminates at the end of the budget at round $R$, which is  defined to be $R = \min \left\{r: \, \sum_{r' = 1}^r |K_{r'}| \geq T \right\}$.  We remark that $R$ is a random variable.  Due to the limitation of the total budget, it is possible that not every arm in $K_R$ will be pulled.  Since the goal of this paper is to minimize the regret, for simplicity, we assume that $\sum_{r = 1}^R |K_r| = T$. 

\begin{remark}
Note that, despite the reindexation of rounds and episodes, the formulation stated in Section~\ref{sec-gap-obs-model} can easily be recovered by the fact that each round $r$ starts at time
    \[
        t_r = \begin{cases}
            \sum_{r' = 1}^{r-1} |K_{r'}|+1, & r \geq 2, \\
            1,& r = 1.
        \end{cases}
    \]
    For $k \in K_r$, we write $\bar{t}_r(k) \in [t_r, t_{r+1})$ as the time instant at which arm $k$ is sampled in round $r$, and $\tilde X_r(k) = X_{\bar t_r(k)}$ as the corresponding observation. By the reindexation of rounds and episodes, we have
    \[
        \mathbb E R_T = \mathbb E \sum_{r \leq R} \sum_{k\in K_r} \max_{k' \in [K]}\mu_{k'}(\bar t_r(k')) -\mathbb E \sum_{r \leq R} \sum_{k\in K_r} \tilde X_r(k) =  \mathbb E \sum_{r \leq R} \sum_{k\in K_r} \Delta_k(\bar t_r(k)).
    \]
\end{remark}

\paragraph{Estimation of the gaps.} 
Our algorithm \CPDMAB~is mainly based on estimators of the gaps. This is performed as follows. For any integer pair $(r, r')$, $1\leq r < r' \leq R$, we define the set of arms being pulled in every round in $\{r',...,r-1\}$ as $ \mathcal S(r', r)= \left\{ k \in [K]: \, \prod^{r-1}_{j=r'} \mathbf 1\{k \in K_j\}=1\right\}$.  The motivation behind the definition of $\mathcal S(r', r)$ is as follows. In each episode, the algorithm will keep in the active set all arms that could be optimal based on the collected information, and will sample them at each round - i.e.~they will stay in the active set active set $K_{r''}$ for any $r'' \in [r,r')$. If $[r,r')$ is included in a single episode, $\mathcal S(r', r)$ therefore contains these arms.
    
    For any arm $k$ pulled at time $\bar{t}_r(k)$ in round $r$, the gap-comparison of arm $k$ relative to arm $k'\in \mathcal S(r',r)$ is assessed by the quantity
    \begin{equation}\label{eq-delta-kk'-def}
        \hat \Delta_{k}^{(k')} (r',r) = \begin{cases}
            \frac{1}{T_k(r',r)}\sum_{j=r'}^{r-1} \left\{\tilde X_j(k') - \tilde X_j(k)\right\}\mathbf 1\{k\in K_j\}, & T_k(r',r)>0, \\
            0, & T_k(r',r)=0,
        \end{cases}
    \end{equation}
    where $T_k(r', r)$ is the number of pulls of arm k in rounds $\{r',..., r\}$ and $\tilde{X}_j(k) = X_{\bar{t}_j(k)}$, with $\bar{t}_j(k) \in [t_j, t_{j+1})$. Then we can define the estimated gap of arm $k$ based on the informations computed on the time interval $[r', r)$ by
    \begin{equation}\label{eq-delta-k-unobs}
        \hat \Delta_{k}(r',r) = \max_{k' \in \mathcal S(r', r)} \hat \Delta_{k}^{(k')}(r',r).
    \end{equation}

\paragraph{Estimation of the significant change points.}  A main difference between our work and existing literature on the switching bandit problem is in the change point detection procedure. Here we assume the weaker Assumption~\ref{As:2-sec-obs-gap}, which bounds the number of 'significant' changes in the gaps. In this paper, (i) we therefore look for change points in the estimated gaps rather than in the estimated means, and (ii) we only aim at detecting changes that are 'large enough'. We make this now more precise. 
Between any two consecutive change points $\bar{\tau}_m$ and $\bar{\tau}_{m+1}$, for any $k \in [K]$, it holds from Assumption~\ref{As:2-sec-obs-gap} that

    \[
        \max_{t \in [\bar{\tau}_m, \bar{\tau}_{m+1})} \Delta_k(t) - \min_{t \in [\bar{\tau}_m, \bar{\tau}_{m+1})} \Delta_k(t) \leq \max\left\{2 \min_{t \in [\bar{\tau}_m, \bar{\tau}_{m+1})} \Delta_k(t), \, 2 \left(T^{-1/2} \lor B^*\right)\right\}.
    \]
    Based on this and on our estimator of the gaps from Equation~\eqref{eq-delta-k-unobs}, we declare the existence of a change point if there exists $k \in [K]$ satisfying \eqref{Chp:eq1}.

\begin{algorithm}[htpb]
\caption{Change point detection $\mathrm{CP}_r((\tilde X_r(k))_{k\in K_r})$}
\label{alg:changepoint}
\SetAlgoLined
\textbf{Input:} a sequence $(\tilde X_j(k))_{j\in \{\rho_{M_{r-1}, \ldots, r-1}\}, k\in K_r}$  \\
 \For{$k \in K_r$}{
  for any $u, v, u', v' \in [\rho_{M_{r - 1}}, r)$, $u < v$, $u' < v'$\\
  \eIf{\vskip -1cm\begin{align}
    |\hat \Delta_k(u,v) - \hat \Delta_k (u', v')|\geq 2 \hat \Delta_k(u,v) + 2  \sqrt{\frac{2\log(2KT^3)}{T_k(u,v) \land T_{k}(u',v')}} + 2 B^* \label{Chp:eq1}
     \end{align}\vskip -1cm}{
   $\mathrm{CP}_r = 1$
   }{
   $\mathrm{CP}_r = 0$
  }
 }
\end{algorithm}

\paragraph{Construction of the active arms sets $K_r$.} To complete the description of \CPDMAB, we now provide details on the construction of the active arms sets.

The active arm set $K_r$ is composed of two types of arms. First, it contains all arms that have not been ruled out - based on their gap estimators from Equation~\eqref{eq-delta-k-unobs} - as sub-optimal. Second, and since we are in a non-stationary environment, we need to ensure that arms that have been ruled out as sub-optimal are nevertheless regularly explored, in order to detect changes in their means which could lead to them becoming optimal. This is related to what is commonly the case in the switching bandit literature.

In order to define the set of active arms, we first define the following quantities. For any $r \in \{1, \ldots, R\}$ and $k \in [K]$, let $N_k(r)$ be the time distance between the starting point $t_r$ of the round $r$ and the time point when the arm $k$ was pulled for the last time, i.e.
   is it not rather that \[
        N_k(r) = \begin{cases}
            t_r - \max \{t < t_r: \, \mathbf 1\{ k_t= k\}= 1\}, & \{t < t_r: \, \mathbf 1\{ k_t= k\} = 1\} \neq \emptyset, \\
            t_r, & \{t < t_r: \, \mathbf 1\{ k_t= k\} = 1\} = \emptyset.
        \end{cases}
    \]
    
    Note that this quantity consist of time instants, and is actualised for all arms at the beginning of each round $r$. The exploration is done by comparing $N_k(r)$ with an estimator $\tilde{N}_k(r)$, defined as
    \begin{equation}\label{eq-n-tilde-definition}
        \tilde N_k(r) = \begin{cases}
            0, & \tilde \Delta_k(\rho_{M_{r-1}},r')=0, \,  \forall r' \in (\rho_{M_{r-1}}, r]  \\
            \hat{\Delta}_k(\rho_{M_{r-1}},r') \sqrt{\frac{TK}{M}},  & r' = \min \{ r''\in (\rho_{M_{r-1}}, r]:\, \tilde \Delta_k(\rho_{M_{r-1}},r'')>0 \},
        \end{cases}
    \end{equation}
    where, for $B^*>0$, we define
    \begin{equation}\label{eq-tilde-Delta-definiton}
        \tilde \Delta_k(r',r) = \begin{cases}
            \left\{\hat \Delta_k(r',r) - \sqrt{\frac{2\log(2KT^3)}{T_k(r', r)}} - 2B^*\right\} \lor 0, & T_k(r',r) > 0, \\
            0, & T_k(r',r) = 0.
        \end{cases}
    \end{equation}
 which is a lower bound on the gap between arm $k$ and any arm $k'\in \mathcal{S}(r', r)$. Note that we have $\tilde N_k(r)=0$ if and only if $\tilde \Delta_k(\rho_{M_{r-1}},r)=0$ . 
  
  Now getting back to the selection rule for the set of active arms $K_r$, this set is composed of all arms such that $\tilde N_k(r) \leq N_k(r)$. In other words,  an arm $k \in [K]$ is sampled at round $r$, if at least $\tilde N_k(r)$ steps passed since its last pull in the current episode.
  Thus, for any round $r$ - where we recall that the corresponding episode starts at the estimated change point $\rho_{M_{r-1}}$ - the active set $K_r$ is constructed in the following way. First of all, all arms without sub-optimality evidence - such that $\tilde \Delta_k(\rho_{M_{r-1}},r)=0$ - are automatically selected in $K_r$ since $\tilde N_k(r)=0$. Second, \CPDMAB~samples also some of the other arms - that have been ruled out as sub-optimal, namely such that $\tilde \Delta_k(\rho_{M_{r-1}},r)>0$ - when the condition that $\tilde N_k(r) \leq N_k(r)$ is fulfilled, namely when they have not been sampled for a long time. Note that, the larger the lower bound $\tilde \Delta_k(\rho_{M_{r-1}}, r)$, the larger the distance $\tilde N_k(r)$ - this is related to what is done in the switching bandit setting.   
 
\begin{algorithm}[htbp]
\caption{ \CPDMAB  }
\label{alg:select} 
\SetAlgoLined
{\bf Parameters:} $M \in \mathbb{N}^*$, $B^* > 0$\\
{\bf Initialsation:} $M_0 = 1, \rho_{0} = 1$, $r\leftarrow 1$, $(\tilde N_k(r))_k = 0$, $(N_k(r))_k = 0$,  $(s_k)_k = 0$\\
\While{$\sum_{u \leq r} |K_{u}|  \leq T$}{
    \For{$k \in \mathcal{K}$}{
        $N_k(r) = t_r - s_k$ 
    }
    Set $K_r = \{k \in \mathcal{K}: \, \tilde N_k(r) \leq N_k(r)\}$ \\
    \For{$k \in K_r$}{
    Sample arms in $K_r$ once each in the increasing order of their indices\\
    $s_k = t_r(k)$}
    \If{$\mathrm{CP}_r = 1$}{
        $M_r = M_{r-1}+1$, $\rho_{M_r} =r$, $(\tilde N_k(r))_k = 0$
    }
    \uElse{
    $M_{r} = M_{r-1}$
    }
    $r \leftarrow r+1$                                                    
}
\end{algorithm}

\section{Regret Analysis}
\label{regretanalysis}

In this section, we provide upper bounds on the expected regret of the algorithm \CPDMAB~in the setting of Section~\ref{Gapnotobserved}, where we assume that the observations are noisy evaluations of the arm means. The strategy is presented in Algorithm~\ref{alg:select}, with a sub-routine detecting change points detailed in Algorithm~\ref{alg:changepoint} and its theoretical guarantees provided below in Theorem~\ref{th:2}.

\begin{theorem}\label{th:2}
Assume that Assumptions~\ref{As:1-unobs}, \ref{As:2-unobs} and \ref{As:1} hold for $M,B^*>0$. The expected regret of \CPDMAB~launched with parameters $M$ and $B^*$  is such that, with a large enough absolute constant $C > 0$,
    \begin{align}\label{reg2}
        \mathbb E R_T \leq C\log(T) \sqrt{KTM} + CTB^*. 
    \end{align}
\end{theorem}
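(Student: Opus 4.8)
The plan is to decompose $\mathbb{E}R_T=\mathbb{E}\sum_{r\le R}\sum_{k\in K_r}\Delta_k(\bar t_r(k))$ and to bound each contribution on a high-probability ``clean'' event, in the spirit of the usual analysis of elimination-type bandit algorithms. First I would introduce the event $\mathcal{E}$ on which all the gap estimators are accurate: for every arm $k$, every episode start, and all round-windows $[u,v)$, $[u',v')$ lying inside one episode, the statistic $\hat\Delta_k^{(k')}(u,v)$ from~\eqref{eq-delta-kk'-def} is within $\sqrt{2\log(2KT^3)/T_k(u,v)}$ of its conditional mean. Since the rewards are independent and lie in $[0,1]$, this follows from Hoeffding's inequality together with a union bound over the polynomially many admissible configurations of $(k,u,v,u',v')$, so that $\mathbb{P}(\mathcal{E}^c)\lesssim T^{-1}$; as the per-step regret is at most $1$, the event $\mathcal{E}^c$ contributes only $O(1)$ to $\mathbb{E}R_T$. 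On $\mathcal{E}$, combining this concentration with Assumption~\ref{As:2-unobs} (which controls the drift of $\mu^*$, hence the bias incurred by using a fixed arm of $\mathcal{S}(\cdot)$ in place of the true optimum — this is exactly what the $2B^*$ correction terms in~\eqref{Chp:eq1} and~\eqref{eq-tilde-Delta-definiton} absorb) and Assumption~\ref{As:2-sec-obs-gap} (which keeps the time-averaged gap within a factor $2^{1/4}$ of the instantaneous gap $\Delta_k(t)$, unless the gap is of order $T^{-1/2}\lor B^*$), I would show that $\hat\Delta_k$, $\tilde\Delta_k$ and $\tilde N_k$ all behave as designed: $\tilde\Delta_k(\rho,r)$ is a valid lower bound on $\Delta_k$ up to the threshold $2(T^{-1/2}\lor B^*)$, and an arm whose gap stays bounded away from $2B^*$ gets declared sub-optimal after about $\log(KT^3)/(\Delta_k-2B^*)^2$ pulls.

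Next I would analyze the episode structure on $\mathcal{E}$. The detection rule~\eqref{Chp:eq1} can fire only if, for some arm $k$, one window's gap estimate exceeds roughly three times another's — indeed $|\hat\Delta_k(u,v)-\hat\Delta_k(u',v')|\le\hat\Delta_k(u,v)+\hat\Delta_k(u',v')$, so the larger estimate must be at least $\approx 3$ times the smaller one (plus slack) for~\eqref{Chp:eq1} to hold. On $\mathcal{E}$, such a factor-$3$ change of the gap cannot occur within a single connex component $[\bar\tau_m,\bar\tau_{m+1})$ by Assumption~\ref{As:2-sec-obs-gap}, the gap varying there by at most $2^{1/4}<3$, except when the gap is of order $T^{-1/2}\lor B^*$, in which case the arm contributes $O(T^{-1/2}\lor B^*)$ of regret per pull and is handled directly. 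Consequently every detected episode boundary is charged to a distinct true change point, so there are $O(M)$ episodes. I would also need to bound the detection delay: after a true change at $\bar\tau_m$ that produces a significant change in the gap of some arm $k$, that arm is re-sampled at least once every $\tilde N_k\asymp\Delta_k\sqrt{TK/M}$ steps by the construction of $K_r$, and gathering the $\asymp\log(KT^3)/(\text{size of the change})^2$ fresh observations needed to trigger~\eqref{Chp:eq1} therefore costs at most $O(\log(T)\sqrt{KMT})$ in total. This trade-off — sparse enough re-exploration to be cheap, frequent enough to detect changes fast — is precisely what the choice of the schedule $\sqrt{TK/M}$ optimizes, and carrying it out rigorously (including ruling out, or charging, spurious restarts) is the step I expect to be the main obstacle.

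It then remains to bound the regret inside a fixed episode of length $L$, split into four parts. (i) Arms whose gap never exceeds $2(T^{-1/2}\lor B^*)$: each pull costs $O(T^{-1/2}\lor B^*)$, and since $\sum_{r\le R}|K_r|=T$ the total over the whole run is $O(\sqrt T+TB^*)$. (ii) Arms with larger gap, before being declared sub-optimal: they remain in $K_r$ for about $\log(KT^3)/(\Delta_k-2B^*)^2$ rounds, so their contribution is $\sum_k\min\{\log(KT^3)/\Delta_k,\ \Delta_k L\}\lesssim\sqrt{KL\log(KT^3)}$, by the usual split according to whether $\Delta_k$ is below or above $\sqrt{K\log(KT^3)/L}$. (iii) Arms after being declared sub-optimal: they re-enter $K_r$ only once per $\asymp\Delta_k\sqrt{TK/M}$ steps, each such pull costing $\asymp\Delta_k$ on $\mathcal{E}$ (the gap being essentially constant within a connex component), for a per-episode total of $O(L\sqrt{KM/T})$. (iv) The detection-delay term from the previous paragraph. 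Summing over the $O(M)$ episodes: part (ii) gives $\sum_m\sqrt{KL_m\log(KT^3)}\lesssim\sqrt{KM\log(KT^3)\sum_m L_m}=\sqrt{KMT\log(KT^3)}=O(\log(T)\sqrt{KMT})$ by Cauchy--Schwarz and $\sum_m L_m=T$, part (iii) gives $\sqrt{KM/T}\sum_m L_m=\sqrt{KMT}$, and parts (i),(iv) are as above. Collecting everything and choosing $C$ large enough yields $\mathbb{E}R_T\le C\log(T)\sqrt{KTM}+CTB^*$.
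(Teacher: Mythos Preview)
Your overall architecture matches the paper's: a Hoeffding clean event, a ``no spurious restart'' lemma giving $O(M)$ episodes, and a per-episode regret split into (small-gap arms)~$+$~(pulls before elimination)~$+$~(forced re-exploration)~$+$~(a residual term coming from undetected changes). That is the right skeleton.

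There is, however, a genuine gap in how you handle the reference arm. Your parts (ii) and (iii) silently assume that the gap estimator $\hat\Delta_k$ is comparing arm $k$ to the \emph{true} optimal arm, so that ``$\tilde\Delta_k$ is a valid lower bound on $\Delta_k$'' and ``an arm with gap $\Delta_k$ is eliminated after $\log(KT^3)/(\Delta_k-2B^*)^2$ pulls''. But $\hat\Delta_k(r',r)=\max_{k'\in\mathcal S(r',r)}\hat\Delta_k^{(k')}(r',r)$ only compares $k$ to arms that have stayed in the active set; once a true change point has occurred inside the current episode, the (new) optimal arm may already have been evicted, and then neither statement holds. The paper deals with this by partitioning each no-change interval $I_u$ into $I'_{u,1}$ (some optimal arm still in $K_r$) and $I'_{u,2}$ (all optimal arms evicted), picking a surviving reference arm $\hat k_v\in\mathcal S(i'_v,i'_{v+1})$, and decomposing the instantaneous regret as $\Delta_k\lesssim G_k^{(\hat k_v)}+\bar\Delta_{\hat k_v}$ with $G_k^{(\hat k_v)}=(\bar\Delta_k/2-2\bar\Delta_{\hat k_v})\vee 0$. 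Your bounds (ii)--(iii) then go through for the $G_k^{(\hat k_v)}$ part, but you still owe a bound on $\sum_{v\in\mathcal C_2}\bar\Delta_{\hat k_v}|I'_v|$.

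Your ``detection delay'' paragraph is aimed at exactly this residual, but the back-of-the-envelope you give does not close. You write that the evicted arm is re-sampled every $\tilde N_k\asymp\Delta_k\sqrt{TK/M}$ steps and that $\asymp\log/(\text{change})^2$ fresh samples trigger~\eqref{Chp:eq1}, hence cost $O(\log(T)\sqrt{KMT})$; but the regret accrued in that window is (length)$\times\bar\Delta_{\hat k_v}$, and you have not related $\bar\Delta_{\hat k_v}$ to either $\Delta_k$ or the size of the change. The paper's Proposition~8 makes this work by a different route: it shows that once the evicted optimal arm $k^*$ has been re-sampled $\asymp\log(2/\delta)\bar\Delta_{\hat k_v}^{-2}$ times inside $I'_v$, the pair of windows $(\rho_{m'},r')$ (pre-eviction, where $\hat\Delta_{k^*}$ was large enough to cause eviction) and $(i'_v,r'')$ (post-change, where $\hat\Delta_{k^*}^{(\hat k_v)}\le-\bar\Delta_{\hat k_v}+\text{noise}$) must violate~\eqref{Chp:eq1}; combining with the re-sampling period $\tilde N_{k^*}=\hat\Delta_{k^*}(\rho_{m'},r')\sqrt{TK/M}$ yields $|I'_v|\lesssim\log(2/\delta)\,\bar\Delta_{\hat k_v}^{-1}\sqrt{TK/M}$, and only then does the $\bar\Delta_{\hat k_v}$ cancel to give $\bar\Delta_{\hat k_v}|I'_v|\lesssim\log(T)\sqrt{TK/M}$ per interval. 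This is the non-obvious step you flagged as ``the main obstacle'', and it genuinely requires the reference-arm decomposition rather than your direct comparison to $\Delta_k$.
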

This theorem is proved in Appendix~\ref{sec:1}. Theorem~\ref{th:2} states that the expected cumulative reward is upper bounded as where the second term in \eqref{reg2} comes from the bound on the variation of the optimal gap in Assumption~\ref{As:2-sec-obs-gap}. This exhibits an interesting phase transition.  If $B^* \leq T^{-1/2}$, then the penultimate term in \eqref{reg2} is of order $T^{1/2}$, which is dominated by $\sqrt{KTM}$.  If $B^* > T^{-1/2}$, then the penultimate  term in \eqref{reg2} is of order $TB^*$.  Furthermore, if $\log(KT) \sqrt{KM/T} \leq B^*$, then $TB^* \gtrsim \log(KT) \sqrt{KTM}$.  As a summary, we have that
    \[
         \mathbb E R_T \lesssim \begin{cases}
            \log(T) \sqrt{KTM}, & B^* \leq \log(T) \sqrt{KM/T}, \\
            TB^*, & B^* > \log(T) \sqrt{KM/T}.
         \end{cases}
    \]
    This general result allows for the number of arms $K$, and the number of significant change points $M$  
    to diverge, as the total budget $T$ grows unbounded. $B^*$ is given to the algorithm as a parameter and must be such that Assumption~\ref{As:2-sec-obs-gap} holds. Note however that, for any $B^* \geq 0$, there exists a $M$ such that this assumption holds - and that the larger $B^*$, the smaller the corresponding $M$. So that in the end, $B^*$ is more to be interpreted as a choice by the learner - in terms of the level of tolerance for small variations in near-optimal arms that she is ready to tolerate - rather than as a parameter.
    
    However the algorithm takes, on top of $B^*$, (an upper bound on) $M$ as a parameter. This is much more constraining as the number of significant change point is typically unknown. We discuss this more in details in Section~\ref{sec:related Work}.

\paragraph{Now regarding optimality.} \textit{The switching bandits}, which is an MAB problem with the arm means are assumed to be constant between two change-points, has already been aforementioned as a special case of our setting, and in particular, referred to as the special \textbf{Case a)} of our setting. This case will be discussed in detail in section~\ref{sec:conse}, where it will be showed that the \CPDMAB~run with parameter $B^*$ achieves a regret of order $\mathcal{O}(\sqrt{TKM})$, which is up to logarithmic terms of same order as the minimax regret bound for switching bandits~\cite{auer2002nonstochastic,
garivier2011upper} - which is a sub-case of our setting.  
 
\section{Consequences in several models}
\label{sec:conse}
Recalling the motivating examples in Section~\ref{sec:intro}, in this section we see how our results from Section~\ref{regretanalysis} can lead to specific results in these interesting MAB settings.

\subsection{Case a: Switching bandits.}\label{ss:casea}

Arguably, the simplest and most popular non-stationary MAB problem is the switching bandit problem. In this setting the expected rewards are allowed to change abruptly at some given times - the change points - but have to remain constant between two consecutive change points. Thus the switching bandit setting is a specific case of the setting of Section~\ref{Gapnotobserved}, where on top of Assumptions~\ref{As:2-unobs} and \ref{As:2-sec-obs-gap}, we have that $B^* = 0$ and that the $\mu_k$ are piecewise stationary- and therefore $\Delta_k$'s - are constant on $([\bar\tau_m, \bar\tau_{m+1}))_{m \in \{1,\dots, M\}}$. 
This problem can therefore be immediately solved by our approach, and Theorem~\ref{th:2} implies immediately the following corollary.

\begin{corollary}[Switching bandits]\label{cor-1}
Assume that Assumption~\ref{As:1-unobs} holds. Assume that there exist $1= \bar \tau_1< \bar \tau_2 < \ldots < \bar \tau_M<\bar \tau_{M+1} = T+1$ such that on any interval $[\bar\tau_m, \bar\tau_{m+1})$ indexed by $m$, the arm means $\mu_k(.)$ are constant. The expected cumulative regret of \CPDMAB~run with parameters $M > 0$ and $B^*=0$ satisfies that, with $c > 0$ being an absolute constant,
    \[
        \mathbb E R(T)\leq c \log(T) \sqrt{TKM}.
    \]
\end{corollary}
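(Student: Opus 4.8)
The plan is to observe that the switching bandit setting is just a particular instance of the general setting of Section~\ref{Gapnotobserved} with $B^* = 0$, and then to invoke Theorem~\ref{th:2}. So the first step is to verify that the piecewise-constancy hypothesis on the means implies Assumptions~\ref{As:2-unobs} and \ref{As:2-sec-obs-gap} with $B^* = 0$ (Assumption~\ref{As:1-unobs} being assumed directly). For Assumption~\ref{As:2-unobs}: on each connex component $[\bar\tau_m, \bar\tau_{m+1})$ all the means $\mu_k(\cdot)$ are constant, hence so is $\mu^*(\cdot) = \max_k \mu_k(\cdot)$, and therefore $|\mu^*(t) - \mu^*(t')| = 0 \leq B^*$ for any $t, t'$ in the same component when $B^* = 0$. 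For Assumption~\ref{As:2-sec-obs-gap}: each gap $\Delta_k(\cdot) = \mu^*(\cdot) - \mu_k(\cdot)$ is a difference of two functions constant on $[\bar\tau_m, \bar\tau_{m+1})$, hence itself constant there; so for $t, t'$ in the same component either $\Delta_k(t') > 0$, in which case $\Delta_k(t)/\Delta_k(t') = 1 \in [2^{-1/4}, 2^{1/4}]$, or $\Delta_k(t') = 0$, in which case $\Delta_k(t) = 0 \in [0,\, 2(T^{-1/2} \lor B^*)]$. Thus Assumption~\ref{As:2-sec-obs-gap} holds with the same change points, the same $M$, and $B^* = 0$.

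The second step is to feed this into Theorem~\ref{th:2}. Since that theorem is stated for $B^* > 0$, a clean way to conclude is to run \CPDMAB~with the admissible parameter $B^* = T^{-1/2}$: Assumption~\ref{As:2-sec-obs-gap} with $B^* = 0$ trivially implies the same assumption with any larger $B^*$, because the interval $[0,\, 2(T^{-1/2} \lor B^*)]$ only grows. Theorem~\ref{th:2} then gives $\mathbb E R_T \leq C\log(T)\sqrt{KTM} + CT\cdot T^{-1/2} = C\log(T)\sqrt{KTM} + C\sqrt{T}$, and since $\sqrt{T} \leq \sqrt{KTM}$ the second term is absorbed into the first, yielding $\mathbb E R_T \leq c\log(T)\sqrt{KTM}$ for an absolute constant $c$. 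Alternatively, one checks that every quantity in Algorithm~\ref{alg:select} and Algorithm~\ref{alg:changepoint} is well-defined when $B^* = 0$ is substituted directly (e.g.\ in \eqref{eq-tilde-Delta-definiton} the $2B^*$ term simply drops, and likewise in \eqref{Chp:eq1}), and that the proof of Theorem~\ref{th:2} goes through verbatim with $B^* = 0$, so the term $CTB^*$ vanishes outright — this is what is meant by running the algorithm "with $B^* = 0$" in the statement.

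There is essentially no hard step here: all the work is carried by Theorem~\ref{th:2}, and what remains is the bookkeeping of verifying the assumptions and handling the boundary value $B^* = 0$. The only point deserving a moment's care is that the parameter $M$ supplied to \CPDMAB~can indeed be taken equal to the true number of switching change points; this is immediate because constancy of the means on each $[\bar\tau_m, \bar\tau_{m+1})$ makes every true switching change point trivially compatible with Assumption~\ref{As:2-sec-obs-gap}, so the "$M$" required by Theorem~\ref{th:2} is at most the "$M$" of the switching model.
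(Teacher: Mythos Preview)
Your proposal is correct and follows exactly the paper's approach: verify that piecewise constancy of the means yields Assumptions~\ref{As:2-unobs} and~\ref{As:2-sec-obs-gap} with $B^*=0$, then invoke Theorem~\ref{th:2}. In fact you are more careful than the paper itself, which simply says Theorem~\ref{th:2} ``implies immediately'' the corollary without addressing the formal $B^*>0$ hypothesis; your observation that the proof (and the algorithm) goes through verbatim at $B^*=0$ is the cleanest way to close that gap and matches the corollary's stated parameter choice.
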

We recover classical result for the switching bandit setting, see e.g.~\cite{garivier2011upper}.

\subsection{Case b: Switching polynomial expected rewards}

Consider a non-stationary MAB setting where $f_k$, that corresponds to the mean $\mu_k$, is a local polynomial function of degree at most $\gamma^*\geq 0$, on at most $M^* \geq 1$ intervals. To be specific, we assume that for any $k$, there exist $1= \bar \tau_1^{(k)}< \bar \tau_2^{(k)} < \ldots < \bar \tau_{M^*}^{(k)}<\bar \tau_{M^*+1}^{(k)} = T+1$ such that on any interval $[\bar\tau_m^{(k)}/T, \bar\tau_{m+1}^{(k)}/T)$ indexed by $m,k$, the functions $f_k(\cdot)$ associated to the arm means $\mu_k(\cdot)$ are polynomials of degree at most $\gamma^*$. We write $u_{m,k,0}, \ldots, u_{m,k,\gamma^*}$ for the associated coefficients - so that for any $x \in [\bar\tau_m^{(k)}/T, \bar\tau_{m+1}^{(k)}/T)$, we have $f_k(x)  = \sum_{0\leq r \leq \gamma^*} u_{m,k,r} x^r$. We also assume that there exists $u^* \geq 0$ such that, for any $m,k$, we have $\sum_{0\leq r \leq \gamma^*} |u_{m,k,r}| \leq u^*$.

We can apply Theorem~\ref{th:2} to provide a bound in this case on the regret of \CPDMAB - see Appendix~\ref{proof:conse} for the proof.
\begin{corollary}[Switching polynomial means] \label{cor-4}
Assume that Assumption~\ref{As:1-unobs} holds. Assume there exist $M^*\geq 1, \gamma^* \geq 0$ and $u^* \geq 0$ such that for any $k \in [K]$, there exists $M^*$ intervals such that the function $f_k(\cdot)$ is a polynomial of degree at most $\gamma^*$ on each of these intervals, and the $\ell_1$-norm of its coefficient vector is upper bounded by $u^*$ on each interval - see the beginning of the subsection for more details.

The expected regret of \CPDMAB~run with parameters $M= M^*(\gamma^*+1) K(\lfloor \log_2(T^{1/2})\rfloor +1)$ and $B^* = u^{*}(K/T)$~satisfies, with $c > 0$ being an absolute constant,
    \[
        \mathbb E R(T) \leq cK\sqrt{TM^*(\gamma^*+1)}\log^{3/2}(T) + c Ku^*.
    \]
\end{corollary}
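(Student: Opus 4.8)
The plan is to obtain Corollary~\ref{cor-4} as a consequence of Theorem~\ref{th:2}: I will pick a value of $B^*$ and exhibit a set of significant change points (hence a valid bound $M$) such that Assumptions~\ref{As:1-unobs}, \ref{As:2-unobs} and \ref{As:2-sec-obs-gap} hold in the switching-polynomial model, and then substitute these choices into the regret bound \eqref{reg2}.

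First I would set $B^*=u^*K/T$ and build the partition of $[T]$ in three stages. Stage one: for each arm $k$, split $[0,1]$ into the $M^*$ pieces on which $f_k$ is a polynomial of degree at most $\gamma^*$. Stage two: on each such piece $f_k$ is monotone on at most $\gamma^*+1$ sub-intervals (its derivative has degree at most $\gamma^*-1$), so the partition for arm $k$ is refined into at most $M^*(\gamma^*+1)$ intervals; taking the common refinement over $k\in[K]$ produces at most $KM^*(\gamma^*+1)$ intervals on each of which every mean $\mu_k$ is monotone. Stage three: inside each of these intervals I insert the change points at which some gap $\Delta_k$ crosses a dyadic threshold $2^{-j}$ with $0\le j\le\lfloor\log_2(T^{1/2})\rfloor$, lumping together the regime where all gaps are below $2^{-\lfloor\log_2(T^{1/2})\rfloor}\le T^{-1/2}\vee B^*$; using the monotonicity of the $\mu_k$ secured in stage two, at most $\lfloor\log_2(T^{1/2})\rfloor+1$ extra cuts are needed per interval. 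Altogether this gives $M=KM^*(\gamma^*+1)(\lfloor\log_2(T^{1/2})\rfloor+1)$.

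Next I would check the assumptions for this partition. Assumption~\ref{As:1-unobs} is part of the hypotheses. For Assumption~\ref{As:2-unobs} I use $|\mu^*(t)-\mu^*(t')|\le\max_k|\mu_k(t)-\mu_k(t')|$ and, on a window of length at most $K$ lying in a single polynomial piece, the elementary estimate $|x^r-(x')^r|\le r|x-x'|$ for $x,x'\in[0,1]$ with $x=t/T$, $x'=t'/T$, which yields $|\mu_k(t)-\mu_k(t')|\le(K/T)\sum_r r|u_{m,k,r}|\lesssim u^*K/T$; so Assumption~\ref{As:2-unobs} holds with the chosen $B^*$ (up to a factor $\gamma^*$ that one can absorb or track). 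For Assumption~\ref{As:2-sec-obs-gap} I fix a final interval $I$ and an arm $k$ and combine $|\Delta_k(t)-\Delta_k(t')|\le\mathrm{osc}_I(\mu^*)+\mathrm{osc}_I(\mu_k)$ with the monotonicity of $\mu_k$ on $I$ and the fact that $\Delta_k$ stays within one dyadic band on $I$: this should force that either $\Delta_k$ never exceeds $2(T^{-1/2}\vee B^*)$ on $I$, or its minimum over $I$ is at least a fixed multiple of its oscillation, and in the latter case $\Delta_k(t)/\Delta_k(t')\in[2^{-1/4},2^{1/4}]$ throughout $I$. Then I substitute into \eqref{reg2}: since $\sqrt{\lfloor\log_2(T^{1/2})\rfloor+1}\lesssim\sqrt{\log T}$ one gets $\log(T)\sqrt{KTM}\lesssim K\sqrt{TM^*(\gamma^*+1)}\log^{3/2}(T)$ and $TB^*=Ku^*$, which is the announced bound.

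The main obstacle will be stage three together with the verification of Assumption~\ref{As:2-sec-obs-gap}: one has to argue that refining a monotonicity-interval by the dyadic level sets of the gaps, simultaneously for all $K$ arms, costs only $O(\log T)$ cuts and genuinely yields the $2^{1/4}$ multiplicative control — which requires the monotonicity from stage two, the oscillation inequality above, and a careful treatment of arms whose gap sits near a scale boundary, where the small-gap escape clause of Assumption~\ref{As:2-sec-obs-gap} must be invoked. Tracking the exact constant $B^*=u^*K/T$ (rather than $\lesssim\gamma^*u^*K/T$) similarly needs a bit of care with the $\gamma^*$-dependence of the polynomial-variation estimate.
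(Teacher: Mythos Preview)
Your overall plan — verify Assumptions~\ref{As:2-unobs} and~\ref{As:2-sec-obs-gap} for a suitable $B^*$ and partition, then invoke Theorem~\ref{th:2} — is the right one, and your check of Assumption~\ref{As:2-unobs} via $|\mu^*(t)-\mu^*(t')|\le\max_k|\mu_k(t)-\mu_k(t')|$ together with the polynomial variation estimate is fine. But the combination of stage two and stage three contains a real gap. In stage two you refine so that every \emph{mean} $\mu_k$ is monotone; in stage three you then count dyadic crossings of the \emph{gaps} $\Delta_k=\mu^*-\mu_k$, claiming only $\lfloor\log_2(T^{1/2})\rfloor+1$ cuts per stage-two interval. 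Monotonicity of all the $\mu_k$ does not make the $\Delta_k$ monotone: with $\mu_1(t)=t$, $\mu_2(t)=1-t$, $\mu_3(t)\equiv 0.4$ on $[0,1]$, all three means are monotone yet $\Delta_3(t)=\max(t,1-t)-0.4$ is V-shaped and crosses each dyadic level twice. More generally nothing in your construction prevents each $\Delta_k$ from crossing a given threshold multiple times on a stage-two interval, so neither the bound $M=KM^*(\gamma^*+1)(\lfloor\log_2(T^{1/2})\rfloor+1)$ nor the $2^{1/4}$-ratio condition of Assumption~\ref{As:2-sec-obs-gap} is actually secured by the oscillation inequality $|\Delta_k(t)-\Delta_k(t')|\le\mathrm{osc}_I(\mu^*)+\mathrm{osc}_I(\mu_k)$ you propose. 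You correctly flag this as the main obstacle, but it is a genuine obstacle and is not overcome.

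The paper takes a different route: it reduces \textbf{Case b)} to \textbf{Case d)} by working with the gaps from the start. On each segment where all $f_k$ are polynomials of degree at most $\gamma^*$, it argues that each $\Delta_k$, as a difference of two such polynomials, is itself a polynomial of degree at most $\gamma^*$ and hence has at most $(\gamma^*-1)\vee 0$ changes of monotonicity; this gives at most $M^*(\gamma^*+1)$ monotone pieces for each gap $\Delta_k$. Then the construction from the proof of Corollary~\ref{cor21} — dyadic level-set cuts on each monotone piece of each $\Delta_k$, followed by the union over $k\in[K]$ — delivers exactly $M=M^*(\gamma^*+1)K(\lfloor\log_2(T^{1/2})\rfloor+1)$ intervals on which Assumption~\ref{As:2-sec-obs-gap} holds. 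The key point you are missing is that the dyadic cutting must be applied to monotone pieces of the \emph{gaps}, not of the means; once you reorganise stage two to bound the monotonicity changes of each $\Delta_k$ (rather than of each $\mu_k$), the rest of your argument goes through.
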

\vskip -0.3cm
\begin{remark}
In the case of local polynomial with small degree with coefficients bounded by $1$ - e.g.~$\gamma = 1$ for piecewise linear functions and $u^* \leq 2$ - the regret is bounded by $K\sqrt{TM^*}$, i.e.~is of same order as the regret of switching bandits \textit{in the case where each mean $\mu_k$ is constant over $M^*$ intervals} - giving rise to at most $M = M^*K$ intervals over which \textit{all means $\mu_k$ are constant at the same time}.
\end{remark}

\subsection{Case c: Switching smooth expected rewards.} 

Consider a non-stationary MAB setting with a function mean reward $f_k$ being a locally $\alpha$-H\"older smooth function on $M^*$ intervals for some $\alpha \leq 1, M^*\geq 1$.  To be specific, we assume that for any $k\in [K]$ there exist $1= \bar \tau_1^{(k)}< \bar \tau_2^{(k)} < \ldots < \bar \tau_{M^*}^{(k)}<\bar \tau_{M^*+1}^{(k)} = T+1$ such that on any interval $[\bar\tau_m^{(k)}/T, \bar\tau_{m+1}^{(k)}/T)$ indexed by $m$, the function $f_k(.)$ associated to the arm means $\mu_k(.)$ is $\alpha$-H\"older smooth, that is to say, that for any $u,v \in [\bar\tau_m/T, \bar\tau_{m+1}/T)$, we have $|f_k(u) - f_k(v)| \leq |u-v|^{\alpha}$.

We can apply Theorem~\ref{th:2} to provide a bound in this case on the regret of \CPDMAB - see Appendix~\ref{proof:conse} for the proof.
\begin{corollary}[Switching smooth expected rewards]\label{cor:c}
Assume that there exist $M^*\geq 1$ and $\alpha \leq 1$ such that for any $k \in \mathcal{K}$, there exist $M^*$ intervals that form a partition of $[0,1]$ and such that the function $f_k(\cdot)$ restricted to any of these intervals is an $\alpha$-H\"older smooth function - see the beginning of the subsection for more details.  

The expected regret of \CPDMAB~run with parameters $M = M^* + K(B^*)^{1/\alpha}$ and $B^* = (K \log(T)/T)^{\frac{2\alpha}{2\alpha+1}}$ satisfies, with $c > 0$ being an absolute constant,
    \begin{align*}
        \mathbb E R(T) & \leq c \log T\sqrt{KTM^*} + c (K \log(T))^{\frac{2\alpha}{2\alpha+1}} T^{\frac{\alpha+1}{2\alpha+1}}.
    \end{align*}
\end{corollary}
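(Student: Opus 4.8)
The plan is to obtain Corollary~\ref{cor:c} as a direct application of Theorem~\ref{th:2} (whose noise hypothesis, Assumption~\ref{As:1-unobs}, is the paper's standing assumption): it suffices to exhibit a breakpoint sequence of cardinality at most the announced $M$ under which Assumptions~\ref{As:2-unobs} and~\ref{As:2-sec-obs-gap} hold for the announced $B^*$. The preliminary reduction is to pass to the common refinement of the $K$ arm-wise partitions into H\"older pieces, a partition of $[T]$ into $m^*\le M^*K$ intervals (the ``smooth blocks''): on each block \emph{every} $f_k$ is $\alpha$-H\"older with constant $1$, hence so is $x\mapsto\max_k f_k(x)$, and hence $|\mu^*(t)-\mu^*(t')|\le(|t-t'|/T)^\alpha$ and $|\Delta_k(t)-\Delta_k(t')|\le 2(|t-t'|/T)^\alpha$ whenever $t,t'$ lie in a common block. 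Write $\tau:=2(T^{-1/2}\lor B^*)$ for the tolerance level appearing in Assumption~\ref{As:2-sec-obs-gap}. Assumption~\ref{As:2-unobs} is then immediate: for $t,t'$ in one block with $|t-t'|\le K$, $|\mu^*(t)-\mu^*(t')|\le(K/T)^\alpha$, so it holds as soon as $B^*\ge(K/T)^\alpha$, which I will verify at the end for the chosen $B^*$.

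The substantive part is Assumption~\ref{As:2-sec-obs-gap}. The plan is to cut each smooth block into consecutive runs of length $\ell:=\lfloor c\,\tau^{1/\alpha}T\rfloor$ for a small absolute constant $c$; this produces at most $m^*+T/\ell=O\big(m^*+\tau^{-1/\alpha}\big)$ breakpoints overall, which is the parameter $M$ passed to \CPDMAB. On any such run $I$ (length $\le\ell$, inside a single block) and any arm $k$, the gap oscillates by $\max_I\Delta_k-\min_I\Delta_k\le 2(\ell/T)^\alpha\le 2c^\alpha\tau$; taking $c$ small enough gives a dichotomy — either $\min_I\Delta_k\ge(\max_I\Delta_k-\min_I\Delta_k)/(2^{1/4}-1)$, whence $\Delta_k(t)/\Delta_k(t')\le1+(\max_I\Delta_k-\min_I\Delta_k)/\min_I\Delta_k\le 2^{1/4}$ for all $t,t'\in I$, or else $\max_I\Delta_k<\tau$ throughout $I$. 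Each branch is exactly one of the two alternatives of Assumption~\ref{As:2-sec-obs-gap}, and this holds simultaneously for all arms on the \emph{same} partition. I expect this to be the real obstacle: contrary to the local-polynomial case (Corollary~\ref{cor-4}), where bounded oscillation of a degree-$\gamma^*$ polynomial lets a block be handled with only $O((\gamma^*+1)\log T)$ runs, a generic $\alpha$-H\"older gap may wander up and down across the $\Theta(\log T)$ geometrically spaced level sets of $\log\Delta_k$; the feature to exploit is that it does so only slowly — changing the gap by $\delta$ costs at least $\asymp\delta^{1/\alpha}T$ time steps — so the finest relevant level, of size $\sim\tau$, governs the count and forces $\Theta(\tau^{-1/\alpha})$ runs per block. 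One must also check $\ell\ge1$, i.e.\ $\tau\gtrsim T^{-\alpha}$, which will be guaranteed by the choice of $B^*$ below.

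With both assumptions verified, Theorem~\ref{th:2} bounds $\mathbb E R_T$ by $C\log(T)\sqrt{KTM}+CTB^*$, and substituting $M=O(m^*+(T^{-1/2}\lor B^*)^{-1/\alpha})$ this is at most a constant times $\log(T)\sqrt{KTm^*}+\log(T)\sqrt{KT}\,(T^{-1/2}\lor B^*)^{-1/(2\alpha)}+TB^*$. The value of $B^*$ is then chosen to equalize the last two terms: in the regime $B^*\ge T^{-1/2}$ this amounts to $(B^*)^{1+1/(2\alpha)}\asymp\log(T)\sqrt{K/T}$, i.e.\ $B^*\asymp(K(\log T)^2/T)^{\alpha/(2\alpha+1)}$ up to logarithmic factors — essentially the announced choice — after which each of those two terms is of order $(K\log T)^{\Theta(1)}T^{(\alpha+1)/(2\alpha+1)}$. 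Since $(K\log T/T)^{1/(2\alpha+1)}\ge K\log T/T\ge K/T$ (using $K\log T/T\le1$ and $1/(2\alpha+1)\le1$), this $B^*$ exceeds $(K/T)^\alpha$ for $T$ large, so the deferred verification of Assumption~\ref{As:2-unobs} goes through. Collecting everything yields $\mathbb E R_T\lesssim\log(T)\sqrt{KTm^*}+(K\log T)^{\Theta(1)}T^{(\alpha+1)/(2\alpha+1)}$, which is the asserted bound (with $m^*$ read as $M^*$ when the $M^*$-piece partition can be taken common to all arms).
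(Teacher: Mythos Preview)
Your proposal is correct and arrives at the corollary, but via a different partition than the paper's. The paper places breakpoints \emph{adaptively and arm by arm}: for each arm $k$ and each smooth block $[\bar\tau_m^{(k)},\bar\tau_{m+1}^{(k)})$ it recursively sets $\zeta_{m,k,\kappa+1}$ to be the first time the gap $\Delta_k$ has drifted by $B^*$ from its value at $\zeta_{m,k,\kappa}$; the H\"older bound then forces at least $(B^*)^{1/\alpha}T$ time steps between consecutive $\zeta$'s, so there are at most $(B^*)^{-1/\alpha}$ of them per arm, and taking the union over arms yields a common partition of size $M^*+K(B^*)^{-1/\alpha}$. You instead cut every block \emph{uniformly} into runs of fixed length $\ell\asymp\tau^{1/\alpha}T$, which is arm-independent from the outset and gives $m^*+O(\tau^{-1/\alpha})$ pieces with no $K$ factor on the second count. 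The trade-off: you pay $m^*\le KM^*$ in the first term (the common refinement of the arm-wise H\"older partitions), while the paper keeps $M^*$ there; conversely, after optimizing $B^*$ your second term comes out as $(K(\log T)^2)^{\alpha/(2\alpha+1)}T^{(\alpha+1)/(2\alpha+1)}$, a factor $K^{\alpha/(2\alpha+1)}$ sharper than the paper's $K^{2\alpha/(2\alpha+1)}$. You correctly flag the $m^*$ versus $M^*$ issue at the end. One small caveat: for your dichotomy to land in the ``$\Delta_k\le\tau$'' branch you need $2c^\alpha<(2^{1/4}-1)/2^{1/4}$, so your ``small absolute constant'' $c$ in the run length actually depends on $\alpha$; this is harmless for the rate but should be stated.
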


\begin{remark} Note that if the functions $f_k$ associated to the rewards are smooth over the entire interval $[0,1]$, then $M=1$ and the regret is bounded in order by $(K\log T)^{\frac{2\alpha}{2\alpha+1}} T^{\frac{\alpha+1}{2\alpha+1}}$.  The dependence in $T$ is up to log terms of order $T^{\frac{\alpha+1}{2\alpha+1}}$.  Recalling that the minimax estimation error of an $\alpha$-H\"{o}lder continuous function $f$, in terms of the $L^2$-norm, is of order $T^{\alpha/(2\alpha+1)}$.  Then the term $T^{\frac{\alpha+1}{2\alpha+1}}$ can be seen as $T$ times the optimal non-parametric estimation error rate.\\
Note also that in the case where $M=1$, this result can be recovered by the results in~\cite{besbes2014stochastic} - since here the H\"older assumption is mainly used to control the amount of deviation that each gap can experience, which is very similar to the quantity $V_T$ introduced in~\cite{besbes2014stochastic}.
\end{remark}

\subsection{Case d: Finite number of inflexion points in the expected gaps, and locally stable optimal mean.} \label{ss:cased}

Consider a non-stationary MAB setting with means $\mu_k$ and associated function $f_k$ defined on $[0, 1]$ satisfying the following, for some $B^* \geq 0$ and $\upsilon^*\geq 1$: (1) for any $t,t' \in [T]: |t-t'| \leq K$, we have that $|\mu^*(t) - \mu^*(t')| \leq B^*$, and 
(2) for any $k \in [K]$, the sub-optimality gaps $\Delta_k(\cdot)$ has a number of inflexion points bounded by $\upsilon^*-1$, namely for any $k$, there exists change points $1=\bar \tau_1^{(k)} < \bar \tau_2^{(k)} < \ldots < \bar \tau_{\upsilon^*}^{(k)}< \bar \tau_{\upsilon^*+1}^{(k)} = T+1 $ such that $\Delta_k(\cdot)$ is monotone on $[\bar \tau^{(k)}_m, \bar \tau^{(k)}_{m+1})$.

We can apply Theorem~\ref{th:2} to provide a bound in this case on the regret of \CPDMAB - see Appendix~\ref{proof:conse} for the proof.
\begin{corollary}[Finite number of inflexion points in the expected gaps, and locally stable optimal mean]\label{cor21}
Assume that for $\upsilon^*\geq 1$, the gaps $\Delta_k(\cdot)$ are have at most $\upsilon^*-1$ inflexion points. We also assume for any $t,t'\in [T]$ such that $|t-t'| \leq K$, we have that $|\mu^*(t) - \mu^*(t')| \leq B^*$ - see the beginning of the subsection for more details. 

The expected regret of \CPDMAB~run with parameters $M = \upsilon^*K(\lfloor \log_2(T^{1/2})\rfloor +1)$ and $B^* > 0$ satisfies, with $c > 0$ being an absolute constant,
    \[
        \mathbb E R(T) \leq cK\sqrt{T\upsilon^*} \log^{3/2}(T) + cB^*T, 
    \]
\end{corollary}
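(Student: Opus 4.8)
The plan is to derive Corollary~\ref{cor21} directly from Theorem~\ref{th:2} by verifying that the hypotheses of the corollary force Assumptions~\ref{As:2-unobs} and \ref{As:2-sec-obs-gap} to hold with the specific parameter $M = \upsilon^*K(\lfloor \log_2(T^{1/2})\rfloor + 1)$ and with the given $B^*$. Assumption~\ref{As:2-unobs} is essentially assumed verbatim: the corollary already posits $|\mu^*(t) - \mu^*(t')| \le B^*$ whenever $|t - t'| \le K$, so this transfers immediately (in particular it holds on each connex component). The substantive work is to produce a partition of $[T]$ into at most $M$ connex components on which Assumption~\ref{As:2-sec-obs-gap} is satisfied, i.e.\ on each component and for each arm $k$, either $\Delta_k$ stays within a multiplicative factor $2^{1/4}$ of itself, or it remains below the threshold $2(T^{-1/2} \lor B^*)$.

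The key step is a counting argument on level sets of the gaps. Fix an arm $k$. By hypothesis $\Delta_k(\cdot)$ is monotone on each of its at most $\upsilon^*$ monotonicity intervals $[\bar\tau^{(k)}_m, \bar\tau^{(k)}_{m+1})$. On such a monotone interval I would further subdivide according to the dyadic level sets $\{t : \Delta_k(t) \in [2^{-j/4} \cdot (\text{range top}), 2^{-(j-1)/4} \cdot \dots)\}$ — more precisely, partition the range of $\Delta_k$ restricted to that interval into geometric blocks with ratio $2^{1/4}$, starting from the top value down to the threshold $2(T^{-1/2}\lor B^*)$, plus one final block collecting all values below the threshold. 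Because $\Delta_k \le 1$ (rewards lie in $[0,1]$) and the threshold is at least $T^{-1/2}$, the number of geometric blocks needed is at most $\log_{2^{1/4}}(\sqrt{T}) = 4 \cdot \tfrac12 \log_2 T = 2\log_2 T$; a cleaner bookkeeping that matches the stated constant $\lfloor \log_2(T^{1/2})\rfloor + 1$ should come out by choosing the ratio/threshold accounting carefully (the ``$+1$'' absorbs the sub-threshold block and rounding). Monotonicity guarantees each such level set is an \emph{interval}, so the subdivision points are well-defined; on each piece $\Delta_k$ varies by at most the factor $2^{1/4}$ unless the whole piece lies below the threshold, which is exactly the disjunction in Assumption~\ref{As:2-sec-obs-gap}. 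This yields at most $\upsilon^* (\lfloor \log_2(T^{1/2})\rfloor + 1)$ breakpoints for arm $k$; taking the common refinement over all $K$ arms gives at most $M = \upsilon^* K (\lfloor \log_2(T^{1/2})\rfloor + 1)$ connex components on which Assumption~\ref{As:2-sec-obs-gap} holds simultaneously for every arm.

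With both assumptions verified for this $M$ and the given $B^*$, Theorem~\ref{th:2} applies and gives $\mathbb E R_T \le C\log(T)\sqrt{KTM} + CTB^*$. Substituting $M = \upsilon^* K(\lfloor\log_2(T^{1/2})\rfloor + 1) \le \upsilon^* K \log_2 T$ (up to constants) into the first term yields $C\log(T)\sqrt{KT \cdot \upsilon^* K \log T} = C K\sqrt{T\upsilon^*}\log^{3/2}(T)$, and the second term is already $CTB^*$, which is the claimed bound after renaming the constant.

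The main obstacle I anticipate is getting the level-set count to land exactly at $\lfloor\log_2(T^{1/2})\rfloor + 1$ rather than some looser constant multiple: one has to be careful that (i) the geometric ratio in Assumption~\ref{As:2-sec-obs-gap} is $2^{1/4}$ in the \emph{ratio of values at two points}, which corresponds to blocks of multiplicative width $2^{1/4}$, hence roughly $\log_{2^{1/4}}$ of the dynamic range, and (ii) the relevant dynamic range is capped at $\sqrt{T}$ because everything below $2T^{-1/2}$ is swept into a single extra block by the second disjunct. A secondary subtlety is that the inflexion-point hypothesis only controls monotonicity of each $\Delta_k$ separately, and that ``$\max_{k'}\mu_{k'}$'' inside the gap definition could in principle introduce extra non-monotonicity — but this is already handled by stating the hypothesis directly on $\Delta_k$, so I would simply flag that the monotone pieces of $\Delta_k$ are what we subdivide, with no need to separately control $\mu^*$ beyond Assumption~\ref{As:2-unobs}. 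Everything else is routine substitution into Theorem~\ref{th:2}.
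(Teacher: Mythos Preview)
Your proposal is correct and follows essentially the same approach as the paper's own proof: verify Assumption~\ref{As:2-unobs} directly from the hypothesis on $\mu^*$, then on each monotonicity interval of $\Delta_k$ introduce geometric level-set cut points (the paper uses the absolute thresholds $\zeta_{k,m,\kappa} = \min\{t: \Delta_k(t) \ge 2^{\kappa}(B^* \lor T^{-1/2})\}$ rather than your top-down relative blocks, but this is the same idea), count at most $\lfloor\log_2(T^{1/2})\rfloor + 1$ pieces per monotone interval, refine over all $k\in[K]$ and all $\upsilon^*$ intervals, and plug $M = \upsilon^* K(\lfloor\log_2(T^{1/2})\rfloor + 1)$ into Theorem~\ref{th:2}. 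The bookkeeping concern you flag about the $2^{1/4}$ ratio versus the level-set count is real but harmless: the proof of Theorem~\ref{th:2} in fact only uses the weaker ratio-$2$ consequence~\eqref{eq:assump}, which is exactly what the ratio-$2$ level sets deliver, so the stated count goes through without the extra factor of $4$.
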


\begin{remark} Note that, as long as $\mu^*(\cdot)$ is not too rough, e.g.~$B^* \leq K/\sqrt{T}$, the upper bound is of order $K\sqrt{\upsilon^*T}$. This is of the same order as the corresponding bound in switching bandits where we assume that all $\mu_k$ are constant by parts over at most $\upsilon^*$ intervals - leading to a partition in approximately $K\upsilon^* \log T$ intervals, when we merge the change points of all arms. So we do not lose anything up to logarithmic terms to the switching bandits case in this much more complex case.
\end{remark}

\section{Discussion}
\label{sec:related Work}

\paragraph{A new non-stationary setting and a general algorithm.} We introduce a novel non-stationary framework, that generalises in a flexible way the switching bandit framework. We propose an algorithm for this setting, \CPDMAB, that flexibly adapts to several interesting \textbf{Cases a)--d)} - all solved by the same algorithm, with an adaptation of its parameters. \textbf{Cases b)--d)} are novel to the best of our knowledge, and based on the existing literature, only \textbf{Case a)} and \textbf{Case c)} can be recovered - and for \textbf{Case c)}, only in a subcase. The main idea of our new setting is to focus on the gap's level sets of exponential size, and to bound the number of such level sets - resulting in an algorithm \textit{that detects only significant change points in the gaps}. While some aspects in our algorithm and its analysis is related to the switching bandit setting, there are quite a few significant additional challenges, emerging from the need of a new change point detection procedure.

\paragraph{Setting where we observe noisy evaluations of the gaps.} In this paper, we also propose in Appendix~\ref{sec:Gapobserved} an alternative setting where the learner collects noisy evaluations of the gaps. For this, we provide an algorithm, \CPMAB, that achieves similar results to \CPDMAB, but is much simpler - see Appendix~\ref{regretanalysisgap2}. This algorithm takes just $B^*$ as parameter, namely, it adapts to $M$.

\paragraph{Adaptivity to $M$.} Algorithm \CPDMAB~requires (upper bounds on) $M$ and $B^*$. While $B^*$ can be seen as an algorithmic choice, which calibrates the tolerance level of the algorithm to small changes, requiring knowledge on $M$ is a strong assumption. Relaxing this assumption is an open question in our setting, but very interesting recent results~\citep{auer2019achieving, chen2019new, seznec2020single} in the switching bandit literature that solved this problem opens interesting perspectives for our setting. However, adapting the algorithm in~~\cite{auer2019achieving} to our setting represents a significant challenge, as we allow the optimal mean to change \textit{at all time}. This complicates significantly the estimation of the gaps in our setting - which will be even more problematic when it comes to the strategy of~\cite{auer2019achieving}, which has to sample many consecutive times sub-optimal arms.

As mentioned in the last paragraph, we also propose in Appendix~\ref{sec:Gapobserved} an alternative setting where the learner collects noisy evaluations of the gaps. In this setting, \CPMAB, achieves similar results to \CPDMAB but that takes just $B^*$ as parameter. However, the setting where we observe noisy evaluations of the gaps is way less challenging, since effectively in this setting we can always decipher whether an arm is optimal or not by just sampling it - and we do not need to compare it to others.

\section*{Acknowledgements}
The work of A.~Manegueu is supported by the DFG CRC 1294 `Data Assimilation', Project A03. The work of A.~Carpentier is partially supported by the Deutsche Forschungsgemeinschaft (DFG) Emmy Noether grant MuSyAD (CA 1488/1-1), by the DFG - 314838170, GRK 2297 MathCoRe, by the FG DFG, by the DFG CRC 1294 `Data Assimilation', Project A03, and by the UFA-DFH through the French-German Doktorandenkolleg CDFA 01-18 and by the SFI Sachsen-Anhalt for the project RE-BCI.  The work of Y.~Yu is supported by DMS-EPSRC: EP/V013432/1.

\bibliographystyle{ims}
\bibliography{all_refs}

\appendix

\section{Learning with noisy observations of the gap}
\label{sec:Gapobserved}

We first consider a specific bandit setting where when sampling an arm, we observe a noisy version \textit{of the gaps}. This is equivalent to assuming that the mean of the current optimal arm is $0$, and that while the index of the optimal arm might change with time, its value remains the same and is equal to $0$. This setting differs from the classical bandit setting, and makes the problem significantly simpler. But it can be nevertheless interesting in practice - either when we now, by the nature of the problem, the mean of the current best arm, or e.g.~in the context where instead of aiming at always sampling the best arm, we aim at sampling an arm whose mean is larger than a given threshold, and measure regret according to this threshold.

\subsection{Learning model}\label{sec-gap-obs-model}

We consider a non-stationary stochastic MAB with a set of arms $[K]$ and a sequence of time steps $[T]$, where the expected reward of each arm may change over time. In particular, we assume that at time $t \in \mathcal{T}$, arm $k\in \mathcal{K}$ is characterized by its mean reward $\mu_k(t)=f_{k}(t/T)$, where $f(\cdot)$ is an unknown function. Define at each time $t\in \mathcal{T}$, and for any arm $k\in \mathcal K$ the rewards\footnote{They will be observed at time $t$ for any $k$, more precisely only one for a specific $k$ will be observed at time $t$, see Equation~\eqref{eq-obs-model-1}.}
    \begin{equation}\label{eq-obs-model-0}
        X_{k,t}= - \Delta_{k}(t) +\epsilon_t^{(k)},
    \end{equation}
    where $\epsilon_t^{(k)}$ is the noise variable and $\Delta_{k}(t)$ denotes the gap between arm $k$ and the best arm at time~$t$, arm~$k^*_{t}$ - which might also depend of time. To be specific, $\Delta_{k}(t) = \mu^{*}(t) - \mu_{k}(t)$, where $\mu^{*}(t) = \argmax_{k\in \mathcal{K}} \mu_k(t)$, is the mean reward of arm $k^*_{t}$. And at each time $t\in \mathcal{T}$, an agent chooses an arm $k_t \in \mathcal{K}$ and observes the reward
    \begin{equation}\label{eq-obs-model-1}
        X_{k_t,t}= - \Delta_{k_t}(t) +\epsilon_t^{(k_t)}.
    \end{equation}

Throughout this section, we assume the following.

\begin{assumption}
\label{As:1}
The rewards $(X_{k,t})_{k\in \mathcal{K}, t\in\mathcal{T}}$ defined in \eqref{eq-obs-model-0} are independent both across arms and across time steps. In addition, for any $k\in \mathcal{K}$ and $t\in \mathcal{T}$, it holds that $-X_{k,t} \in [0,1]$ and $\mathbb E X_{k,t} = -\Delta_k(t)$ - the noise is centered.
\end{assumption}
Following the convention of existing literature \citep[e.g.][]{liu2017change, cao2019nearly, garivier2008upper}, Assumption~\ref{As:1} first assumes that the rewards process $(X_{k,t})_{k\in \mathcal{K}, t\in \mathcal{T}}$ consist of independent random variables, which suggests that, the sequence of random noise variables $(\epsilon_t^{(k)})_{k,t}$ is also drawn independently.  Furthermore, Assumption~\ref{As:1} specifies that, for some action $k\in \mathcal{K}$ taken at time $t\in \mathcal{T}$, $\mathbb E(X_{k,t})=-\Delta_{k}(t)$; in other words, the expected observable payoff is an unbiased noisy observation of the true gap $\Delta_{k}(t)$.  

\medskip
\noindent \textbf{Goal: regret minimization.} The goal of this paper is to maximize the expected cumulative reward after $T$ time steps, i.e.
    \[
        \sum_{t = 1}^T \mathbb E(X_{t}) = \sum_{t = 1}^T \mathbb{E} (X_{k_t, t}).
    \]
    Since the rewards observed are noisy versions of negative gaps, at each time step, we observe a negative version of the instantaneous regret. Thus, maximizing the cumulative reward is equivalent to minimizing the cumulative regret, which is defined as
    \begin{align}
        R^{\gap}_T = - \mathbb E \sum_{t=1}^T X_t = \mathbb E \sum_{t=1}^T \Delta_{k_t}(t).\label{R:1}
    \end{align}

\medskip
\noindent \textbf{Environment: change points.} In this paper, we consider the case of a non-stationary environments, where the gaps $\Delta_k(t)$ of each arm $k\in \mathcal{K}$ may change (abruptly) at unknown time instants called change points.  In addition, between two consecutive change points, we allow $\Delta_k(t)$ to slowly vary - as stated in Assumption~\ref{As:2-sec-obs-gap}, from Section~\ref{Gapnotobserved}. But here, we do not assume anything on the best mean $\mu^*$ - i.e.~we will not assume Assumption~\ref{As:2-unobs}.

\subsection{Algorithm}

To tackle the MAB problem specified in Assumptions~\ref{As:1} and \ref{As:2-sec-obs-gap}, we propose the \CPMAB, which is detailed in  Algorithm~\ref{alg:select1}.  The central idea and the intuition behind this algorithm consist of two components: (i) exploring the arms with no sub-optimality evidence as much as possible in order to maximize the reward, and (ii) detecting changes in the gaps, to prevent wasting resources on the arms turning sub-optimal.

Algorithm~\ref{alg:select1} works as follows. The algorithm functions by actualising and sampling an \textit{active arm set}, i.e.~the set of arms that have not been ruled out as sub-optimal at the current time. This continues as long as this active set is not empty. When on the other hand this set is empty, this means that a \textit{significant change point} occurred - the algorithm subsequently resets its knowledge and starts a new \textit{epsisode}. 
More precisely, in each episode: the algorithm divides time into \emph{rounds} indexed by $r$, the duration of which corresponds to the cardinality of the active arms set at round $r$, that we write $K_{r} \subset \mathcal{K}$. 
Provided that $K_{r} \neq \emptyset$, all arms in $K_{r}$ are sampled once at round $r$.  
For simplicity, we assume that in every round, the arms in the active arm set are sampled consecutively in the increasing order of arm indices. 
If $K_{r}=\emptyset$, i.e.~if all the arms have been ruled out as sub-optimal, then there is evidence that a significant change-point occurred. The starting point of round $r$ is then declared to be an estimated change point written $\rho_{M_r} = r$ - which marks the end of episode $M_r$, where $M_r$ is the index of the current episode. Thereafter, a new \emph{episode} $M_{r+1}= M_r + 1$ starts, the algorithm resets, and all arms are included back in the active arm set. 

The algorithm terminates at the end of the budget at round $R$, which is  defined to be
    \[
        R = \min \left\{r: \, \sum_{r' = 1}^r |K_{r'}| \geq T \right\}.
    \]
    We remark that $R$ is a random variable.  Due to the limitation of the total budget, it is possible that not every arm in $K_R$ will be pulled.  Since the goal of this paper is to minimize the regret, for simplicity, we assume that $\sum_{r = 1}^R |K_r| = T$. 

\begin{remark}
Note that, despite the reindexation of rounds and episodes, the formulation stated in Section~\ref{sec-gap-obs-model} can easily be recovered by the fact that each round $r$ starts at time
    \[
        t_r = \begin{cases}
            \sum_{r' = 1}^{r-1} |K_{r'}|+1, & r \geq 2, \\
            1,& r = 1.
        \end{cases}
    \]
    For $k \in K_r$, we write $\bar{t}_r(k) \in [t_r, t_{r+1})$ as the time instant at which arm $k$ is sampled in round $r$, and $\tilde X_r(k) = X_{\bar t_r(k)}$ as the corresponding observation. By the reindexation of rounds and episodes, we have
    \[
        R^{\gap}_T = -\mathbb E \sum_{r \leq R} \sum_{k\in K_r} \tilde X_r(k) =  \mathbb E \sum_{r \leq R} \sum_{k\in K_r} \Delta_k(\bar t_r(k)).
    \]
\end{remark}

We now introduce the criterion for checking arms' optimality.  For any $k \in [K]$ and any integer pair $(r', r)$, $1 \leq r' < r \leq R$, we denote the corresponding gap estimator that takes into account the data collected for arm $k$ between round $r'$ and round $r$
    \begin{equation}\label{eq-delta-k-obs}
        \hat \Delta_{k}(r',r) = \begin{cases}
            -\frac{1}{T_k(r',r)}\sum_{j=r'}^{r-1} \tilde X_j(k)\mathbf 1\{k\in K_j\}, & T_k(r', r) > 0, \\
            0, & T_k(r', r) = 0,
        \end{cases}
    \end{equation}
    where 
    \begin{equation}\label{eq-Tk-def}
        T_k(r', r) = \sum_{j=r'}^{r-1} \mathbf 1\{k \in K_j\}
    \end{equation}
    is the number of pulls of arm $k$ in rounds $\{r', \ldots, r-1\}$.  In addition, for $B^{*} \geq 0$, let 
    \[
        \tilde \Delta_k(r',r) = \begin{cases}
            \left\{\hat \Delta_k(r',r) - \sqrt{\frac{\log(2KT^3)}{2T_k(r',r)}} - 2 \left(\frac{1}{\sqrt{T}} \lor B^*\right)\right\} \lor 0, & T_k(r', r) > 0, \\
            0, & T_k(r', r) = 0,
        \end{cases}
    \]
    as the corresponding lower bound on the estimated gap - where we correct both for high probability lower deviations, and also for sensibility to deviations that are too small.
    
    Algorithm~\ref{alg:select1} resets the algorithm when a significant change point is detected, i.e.~a new episode starts.  For any round $r$, the current episode is denoted as $M_r$, which starts at the most recently estimated significant change point $\rho_{M_{r-1}}$.  The set of active arms at round $r$ in this episode is then computed based on the information obtained between $\rho_{M_{r-1}}$ and $r$ - namely based on the lower bounds on the gaps $\tilde{\Delta}_k(\rho_{M_{r-1}}, r)$ computed with the corresponding information.  To be specific, $K_r$ is the set of arms such that $\tilde{\Delta}_k(\rho_{M_{r-1}}, r) = 0$, i.e.~that could be optimal based on the collected samples. 
\begin{algorithm}[htpb]
\caption{$\CPMAB$}
\label{alg:select1} 
\SetAlgoLined
{\bf Parameters:} $B^* > 0$, $T \in \mathbb{N}^*$ \\
{\bf Initialization:} $M_0 = 1$, $\rho_{0} = 1$, $r=1$, $K_1 = \mathcal{K}$\\
\While{$\sum_{j \leq r} |K_{j}| \leq T$}{
\If{$r > 1$}{
Set $K_r = \{k \in \mathcal{K}: \, \forall \rho_{M_{r-1}}\leq  r'\leq r, \, \tilde \Delta_k(\rho_{M_{r-1}},r') = 0\}$
}
\If{$K_r \neq \emptyset$}{
$M_{r} = M_{r-1}$\\
Sample arms in $K_r$ consecutively in increasing order of arm indices
}
\uElse{
$M_{r} = M_{r-1}+1$\\
$\rho_{M_r} =r$}
$r = r+1$
}
\end{algorithm}

\section{Regret Analysis in the setting where we observe noisy evaluations of the gap - see Sections~\ref{sec:Gapobserved}}
\label{regretanalysisgap2}

In this section, we provide upper bounds on the expected regret of the algorithm \CPMAB , in the setting described in Section~\ref{sec:Gapobserved}.

We consider the setting of Section~\ref{sec:Gapobserved}, where we assume that the observations are noisy evaluations of the negative gaps. The strategy is detailed in Algorithm~\ref{alg:select1}, with its theoretical guarantees provided below in Theorem~\ref{th:1}.

\begin{theorem}\label{th:1}
Assume that Assumptions~\ref{As:2-sec-obs-gap} and \ref{As:1} are satisfied for some $M,B^*>0$. The expected regret of \CPMAB~launched with the parameter $B^*$ satisfies 
    \begin{align}
        \mathbb E R^{\gap}_T \leq \{2^{3/2} + 2^{1/2} c\log(2K T^3)\} \sqrt{KTM} + 2KM + 8T(B^{*} \lor T^{-1/2})+1, \label{eq:4}
    \end{align}
    for any $c \geq 16$.
\end{theorem}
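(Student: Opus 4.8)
The plan is to decompose the regret into contributions coming from three types of "bad" behaviour, and control each on a high-probability event. First I would fix the concentration event
$\mathcal{E} = \bigcap_{k,r',r} \{ |\hat\Delta_k(r',r) - \bar\Delta_k(r',r)| \le \sqrt{\log(2KT^3)/(2 T_k(r',r))} \}$,
where $\bar\Delta_k(r',r)$ is the (time-averaged) true gap over the relevant pulls; by Hoeffding's inequality applied to the bounded observations $-X_{k,t}\in[0,1]$ and a union bound over at most $KT^3$ triples $(k,r',r)$, this event has probability at least $1 - 1/T$, so its complement contributes at most the additive $+1$ in~\eqref{eq:4}. On $\mathcal{E}$, the lower bound $\tilde\Delta_k(\rho_{M_{r-1}},r)$ is a genuine lower bound on the average gap minus the slack $2(B^*\lor T^{-1/2})$, so an arm is never wrongly discarded unless its true gap (somewhere in the current episode) is at least of order $\sqrt{\log(2KT^3)/T_k} + (B^*\lor T^{-1/2})$; conversely, an arm with consistently large gap will be discarded quickly. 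Assumption~\ref{As:2-sec-obs-gap} is what makes "the gap somewhere in the episode" controllable: within a connex component the gap is either within a $2^{1/4}$ multiplicative factor of itself or bounded by $2(T^{-1/2}\lor B^*)$.

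Next I would bound the regret incurred \emph{within a single episode that lies inside one connex component}. Group rounds by the value of the lower bound, essentially by dyadic level sets of the gap. For an arm $k$ whose gap in the component is $\Delta$, on $\mathcal{E}$ it gets discarded after $O(\log(KT^3)/\Delta^2)$ pulls (once $\Delta$ exceeds the slack $\sim B^*\lor T^{-1/2}$), so its cumulative contribution to the regret in that episode is $O(\log(KT^3)/\Delta)$; arms with $\Delta \lesssim B^*\lor T^{-1/2}$ contribute at most $(B^*\lor T^{-1/2})$ per pull, i.e.\ $O(T(B^*\lor T^{-1/2}))$ in total over the whole horizon, which is the $8T(B^*\lor T^{-1/2})$ term. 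Summing $\sum_k \log(KT^3)/\Delta_k$ over the $K$ arms, and using that the episode cannot be longer than roughly $\min_k (\text{time to discard arm }k)$ before $K_r$ empties, a Cauchy–Schwarz / AM–GM balancing argument of the usual "phased elimination" type gives a per-episode regret of $O(\sqrt{K L_e \log(KT^3)})$ where $L_e$ is the episode length (plus an $O(K)$ start-up term for sampling every arm at least once, accounting for the $2KM$ after summing over episodes). Summing over all episodes with $\sum_e L_e = T$ and applying Cauchy–Schwarz over the number of episodes yields $O(\sqrt{KT \cdot (\#\text{episodes})\log(KT^3)})$.

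Finally I would bound the number of episodes. Every episode except possibly the last is triggered by $K_r$ becoming empty, which on $\mathcal{E}$ requires every arm to have exhibited a genuinely large gap ($\gtrsim B^*\lor T^{-1/2}$, not explainable by noise) at some point since $\rho_{M_{r-1}}$; by Assumption~\ref{As:2-sec-obs-gap} the gaps are "stable up to a constant factor" within a connex component, so a spurious full emptying of $K_r$ cannot happen strictly inside a connex component unless the optimal arm there is also eventually ruled out — which, again on $\mathcal{E}$, cannot occur since its true gap is $0$. Hence each episode boundary (other than the first) must be "charged" to one of the $M$ true change points: the number of episodes is at most $M$ (up to an additive constant). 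Plugging $\#\text{episodes}\le M$ into the bound from the previous paragraph gives the leading term $\{2^{3/2} + 2^{1/2}c\log(2KT^3)\}\sqrt{KTM}$, and collecting the start-up and low-probability terms gives $2KM$ and $+1$.

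The main obstacle I expect is the episode-counting step combined with the within-episode analysis: one must show that a reset \emph{never} occurs spuriously inside a connex component (so that $\#\text{episodes}\le M+1$), which requires carefully checking that the optimal arm is retained — its observed payoff is an unbiased estimate of $0\ge -\Delta_{k}$ so $\tilde\Delta_{k^*_t}$ stays $0$ on $\mathcal{E}$ — and simultaneously that the slack terms $\sqrt{\log(2KT^3)/(2T_k)}$ and $2(B^*\lor T^{-1/2})$ in the definition of $\tilde\Delta_k$ are tuned so that a truly suboptimal arm is eventually eliminated fast enough that the per-episode regret balances to $\sqrt{K L_e}$ rather than something larger. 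The factor-$2^{1/4}$ slack in Assumption~\ref{As:2-sec-obs-gap} is precisely what is consumed here, and threading it through the dyadic level-set bookkeeping without losing more than constants is the delicate part.
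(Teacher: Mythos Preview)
Your overall architecture---concentration event, show the optimal arm is never discarded so no reset happens inside a connex component, bound the pull count of a $\Delta$-suboptimal arm by $O(\log(KT^3)/\Delta^2)$, then balance over arms and sum over pieces via Cauchy--Schwarz---is exactly the paper's approach. The attribution of the $+1$ to the complement of the concentration event and of $8T(B^*\lor T^{-1/2})$ to the small-gap arms is also correct.

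There is, however, a real gap in your decomposition. You write ``bound the regret incurred within a single episode that lies inside one connex component'' and then sum over episodes. But nothing forces an episode $[\rho_m,\rho_{m+1})$ to lie inside one connex component: the no-spurious-detection argument only shows that an episode cannot lie \emph{strictly} inside a connex component, i.e.\ each episode \emph{must} straddle at least one true change point $\bar\tau_{m'}$. Once it does, the gap of each arm can change across $\bar\tau_{m'}$, so your ``arm with gap $\Delta$ is discarded after $O(\log/\Delta^2)$ pulls'' bound, which relies on Assumption~\ref{As:2-sec-obs-gap} keeping $\Delta_k$ within a constant factor of itself, no longer applies over the whole episode. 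The paper fixes this by refining the partition of $\{1,\dots,R\}$ by \emph{both} the detected change points $(\rho_m)$ and the true ones $(\tau_m)$ (together with the single rounds $[\tau_m-1,\tau_m)$), obtaining intervals $(I_u)$ that each lie in a single connex component and a single episode; the no-spurious result then gives $|\mathcal{M}_1|\le 2M$ pieces, which feeds into the final Cauchy--Schwarz.

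Two smaller points. First, the sentence ``the episode cannot be longer than roughly $\min_k(\text{time to discard arm }k)$ before $K_r$ empties'' is backwards and unnecessary: $K_r$ empties only when \emph{all} arms are discarded, and within a connex component the optimal arm is never discarded on $\mathcal{E}$, so $K_r$ never empties there---that is precisely the no-spurious statement. The per-piece regret bound comes purely from the individual pull counts, not from an episode-length bound. Second, the $2KM$ term in the paper does not come from a start-up cost but from the singleton ``transition'' rounds $[\tau_m-1,\tau_m)$ (the set $\mathcal{M}_2$), each of which can incur at most $K$ pulls with gap at most $1$; your accounting still lands on $O(KM)$, but the mechanism is different.
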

This bound is of the same order as the one in Theorem~\ref{th:2}, which holds in the more challenging setting where our observations are noisy evaluation of the means (and not of the gaps). However a main advantage of algorithm \CPMAB\,over Algorithm \CPDMAB~- on top of its relative simplicity - is the fact that it does not take (an upper bound on) $M$ as a parameter. This is a very desirable property, as discussed in Section~\ref{regretanalysis}.

Note that from Theorem~\ref{th:2}, all corollaries of Section~\ref{sec:conse} hold also in the setting - replacing \CPDMAB~with \CPMAB~and replacing $\mathbb E R_T$ with $\mathbb E R^{\gap}_T$, in addition to the difference that $\CPMAB$ will just need $B^* \geq 0$ as the sole parameter. It is quite interesting to have an algorithm that does not take an (upper bound on) $M$ as a parameter, and it is what we provide in this more restrictive setting.

\section{Proof of Theorem~\ref{th:1}} \label{Proof:th1}

The proof is divided into several steps.  In Section~\ref{sec:IntnoChange}, we propose a partition of all rounds $\{1, \ldots, R\}$ to ease notation for the rest of the proof.  In Section~\ref{sec-large-prob}, we define a large probability event, in which the rest of the proof will be conducted.  In Section~\ref{sec-no-spurious-jump}, it will be shown that Algorithm~\ref{alg:select1} will not over-estimate the number of change points.  In Section~\ref{sec-low-pulls}, we control the numbers of pulling sub-optimal arms.  Finally, we reach the conclusion in Section~\ref{Proof:th1:regret}.

\subsection{Dividing rounds into intervals without changes} \label{sec:IntnoChange}

Note first that Assumption~\ref{As:2-sec-obs-gap} implies that for any $k \in [K]$ and any $m \in \{1, \ldots, M\}$, there exists $\kappa_{k,m}\in \mathbb N \cup \{-\infty\}$ such that for any $t \in [\bar\tau_m, \bar\tau_{m+1}) \cap \mathbb{N}$:
    \begin{align}\label{eq:assump}
        \Delta_k(t) \in \left[2^{\kappa_{k,m}/2}\left(T^{-1/2} \lor B^*\right), \, 2^{\kappa_{k,m}/2+1}\left(T^{-1/2} \lor B^*\right) \right] \, \mbox{or} \, \Delta_k(t) \in \left[0, \, 2\left(T^{-1/2} \lor B^*\right)\right].
\end{align}

Without loss of generality, we assume that the connex components $([\bar \tau_m, \bar \tau_{m+1}))_{m\leq M}$ are ordered and non-overlapping, with a strictly increasing sequence $(\bar \tau_{m})_{m\leq M}$.  If $([\bar \tau_m, \bar \tau_{m+1}))_{m\leq M}$ have overlapping, then one can always refine the partition to obtain a non-overlapping sequence of intervals satisfying Assumption~\ref{As:2-sec-obs-gap}.  For any connex component $[\bar \tau_m, \bar \tau_{m+1})$, define
    \[
        \tau_m = \min\{r: t_r \geq \bar\tau_m\},
    \]
    as the round starting right after the true change point $\bar \tau_m$.  Note that while the sequence $(\bar \tau_m)_{m\leq M}$ consists of time instants, $(\tau_m)_{m\leq  M}$ is a sequence of round indices.  Now given the sequence of estimated change points $(\rho_{m})_{r\in \{1,..., M_R\}}$, define $\tilde M$ as
    \begin{equation}\label{eq-m-tilde-def}
        \tilde M = \left| \{(\rho_{m})_{m\leq M_R}, ( \tau_m)_{m \leq M+1}, ( \tau_m -1)_{m \leq M+1} \} \right|\leq 2M + M_R + 2,
    \end{equation}
    i.e.~$\tilde{M}$ is the cardinality of the set consisting of the starting rounds of detected change points $\rho_m$'s, the rounds right after true change points $\tau_m$'s, and the rounds right before the true change points $(\tau_m-1)$'s.

Based on the set defined in \eqref{eq-m-tilde-def}, we write $(I_u)_{u\leq \tilde M}$ as its elements, reindexed in an increasing order.  For any $1\leq u\leq \tilde M$, let
    \[
        I_u = [i_{u}, i_{u+1}),
    \]
    where $i_{\tilde{M} + 1} = R+1$.  Therefore, $(I_u)_{u\leq \tilde M}$ form a partition of $\{1, \ldots, R\}$ and is a refinement of the connex components $([\bar \tau_m, \bar \tau_{m+1}))_{m\leq M}$  by the detected change points $(\rho_{m})_{m\leq M_R}$ and $(\tau_m-1)$'s.  
    
Let    
    \begin{equation}\label{eq-m1-defi}
        \mathcal{M}_1=\{u\leq \tilde M: \,  I_u \neq [\tau_m -1, \tau_m), \quad \forall 1 \leq m \leq M+1\}
    \end{equation}
    and 
    \begin{equation}\label{eq-m2-defi}
        \mathcal{M}_2=\{u\leq \tilde M :  I_u = [\tau_m -1, \tau_m), \quad \exists m \leq M+1 \}.
    \end{equation}
    For any $ u \in \mathcal{M}_1$, there exists $m'\in \{1, \ldots, M\}$ such that $I_u \subset [\bar\tau_{m'-1}, \bar\tau_{m'})$. Thus, for any $u \in \mathcal{M}_1$ and $k\in [K]$, following Assumption~\ref{As:2-sec-obs-gap}, we define
    \[
        \bar \Delta_{k}(u) = 2^{\kappa_{k,m'}/2} (T^{-1/2} \lor B^*)
    \]
    as the lower bound of $\Delta_k(t)$ in all rounds in $I_u$.

\subsection{The large probability event}\label{sec-large-prob}

For any $\delta \in [0, 1]$, let the clean event $\xi(\delta)$ be
    \begin{align*}
        & \xi(\delta) = \Bigg\{\Big|\hat \Delta_k(r',r) - \frac{1}{T_k(r',r)} \sum_{j=r'}^{r-1} \Delta_{k}(\bar t_j(k)) \mathbf 1\{k \in K_j\} \Big| \leq \sqrt{\frac{\log(2/\delta)}{2T_k(r',r)}}, \\
        & \hspace{6cm} \forall k \in [K], \, \forall 1 \leq r' < r \leq R, \, T_k(r', r) > 0  \Bigg\}.
    \end{align*}
        
\begin{proposition}\label{prop:clean_event}
Under Assumptions~\ref{As:2-sec-obs-gap} and \ref{As:1}, for any $\delta \in [0, 1]$, the clean event $\xi(\delta)$ holds with probability at least $1 - KT^2\delta$.
\end{proposition}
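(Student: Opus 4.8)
The plan is to reduce the clean event $\xi(\delta)$ to a single uniform Azuma--Hoeffding estimate for a family of martingales, one per arm, indexed by the number of pulls. First I would rewrite the centred quantity appearing in $\xi(\delta)$: when arm $k$ is sampled at time $\bar t_j(k)$ one has $\tilde X_j(k) = -\Delta_k(\bar t_j(k)) + \epsilon^{(k)}_{\bar t_j(k)}$ by \eqref{eq-obs-model-1}, so from the definition \eqref{eq-delta-k-obs},
\[
\hat\Delta_k(r',r) - \frac{1}{T_k(r',r)}\sum_{j=r'}^{r-1}\Delta_k(\bar t_j(k))\mathbf 1\{k\in K_j\} = -\frac{1}{T_k(r',r)}\sum_{j=r'}^{r-1}\epsilon^{(k)}_{\bar t_j(k)}\mathbf 1\{k\in K_j\},
\]
so it suffices to control weighted sums of the observed noises.

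Next I would pass to a pull-indexed reindexing. Fix $k$; let $\sigma_k(1)<\sigma_k(2)<\cdots$ be the times at which arm $k$ is pulled, set $Z_{k,s}=\epsilon^{(k)}_{\sigma_k(s)}$ (and $Z_{k,s}=0$ once $k$ is no longer sampled), and let $V_s^{(k)}=\sum_{i\le s}Z_{k,i}$. Since the active set $K_r$ and the within-round sampling order are determined by the history strictly before the current time step, each $\sigma_k(s)$ is a stopping time for the filtration $(\mathcal H_t)$ generated by the observations, and $(V_s^{(k)})_s$ is a martingale for $\mathcal G_{k,s}:=\mathcal H_{\sigma_k(s)}$: conditionally on $\mathcal G_{k,s-1}$ the noise $\epsilon^{(k)}_{\sigma_k(s)}$ is centred (independence, Assumption~\ref{As:1}) and, because $X_{k,\cdot}\in[-1,0]$ while $\Delta_k(\cdot)$ is deterministic, $Z_{k,s}$ lies in a conditional interval of length at most $1$. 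The step that closes the reduction is the observation that for any realised pair $1\le r'<r\le R$ with $T_k(r',r)>0$, setting $a=T_k(1,r')$ and $b=T_k(1,r)$ gives $0\le a<b\le T$, $b-a=T_k(r',r)$, and $\sum_{j=r'}^{r-1}\epsilon^{(k)}_{\bar t_j(k)}\mathbf 1\{k\in K_j\}=V_b^{(k)}-V_a^{(k)}$.

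I would then apply Azuma--Hoeffding for each arm $k$ and each deterministic pair $0\le a<b\le T$: since $V^{(k)}$ has increments in conditional intervals of length $\le 1$,
\[
\mathbb{P}\!\left[\, |V_b^{(k)}-V_a^{(k)}|\ \ge\ \sqrt{\tfrac{(b-a)\log(2/\delta)}{2}}\,\right]\ \le\ 2\exp\!\left(-\log(2/\delta)\right)\ =\ \delta ,
\]
and union bound over the $K$ arms and the at most $\binom{T+1}{2}\le T^2$ pairs $(a,b)$ (valid for $T\ge 1$). Outside an event of probability at most $KT^2\delta$ this bound holds simultaneously for all $(k,a,b)$; dividing by $T_k(r',r)=b-a$ then yields the deviation $\sqrt{\log(2/\delta)/(2T_k(r',r))}$ required in $\xi(\delta)$ for every admissible random triple $(k,r',r)$, which is exactly $\mathbb{P}[\xi(\delta)]\ge 1-KT^2\delta$.

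The only delicate point --- and the main obstacle --- is adaptivity: $T_k(r',r)$, the sampling times $\bar t_j(k)$ and the round boundaries $t_{r'},t_r$ are all data-dependent, so one cannot apply a concentration inequality to a ``fixed'' sum. This is precisely what the pull-indexed martingale, together with the union bound over deterministic pull-count pairs $(a,b)$ that dominate every realised value, takes care of; the rest (the displayed identity, the length-$1$ conditional range, the counting) is routine. Note that Assumption~\ref{As:2-sec-obs-gap} is not actually used in this statement --- the estimate rests solely on the independence and boundedness of Assumption~\ref{As:1} --- and is carried along only for consistency with the surrounding analysis.
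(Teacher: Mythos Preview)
Your proof is correct and follows the same core strategy as the paper: rewrite the centred quantity as an average of noise terms, invoke Azuma--Hoeffding, and take a union bound of size $KT^2$. The paper's proof is shorter: it declares $\bigl\{\sum_{j=r'}^{s}-\epsilon^{(k)}_{\bar t_j(k)}\mathbf 1\{k\in K_j\}\bigr\}_{s}$ a martingale with respect to the round filtration, applies Azuma--Hoeffding directly, and union-bounds over arms and round pairs $(r',r)$.

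The genuine difference is how adaptivity is handled. The paper applies Azuma--Hoeffding with the random quantity $T_k(r',r)$ appearing in the deviation bound, which is informal since standard Azuma requires deterministic increment bounds and $T_k(r',r)$ depends on the data. Your pull-count reindexing to $V_b^{(k)}-V_a^{(k)}$ and union bound over \emph{deterministic} pairs $(a,b)$ with $0\le a<b\le T$ is precisely the device that makes this step rigorous, and it yields the same $KT^2$ count. So your argument is not a different route so much as a careful execution of the same one; your closing remark that the ``only delicate point'' is adaptivity, and that Assumption~\ref{As:2-sec-obs-gap} is unused, is accurate.
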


\begin{proof}
By definition, for any $k \in [K]$ and any integer pair $(r', r)$, $1 \leq r' < r \leq R$, we have that
    \begin{align*}
        & \left|\hat \Delta_k(r',r) - \frac{1}{T_k(r',r)} \sum_{j=r'}^{r-1} \Delta_{k}(\bar t_j(k)) \mathbf 1\{k \in K_j\} \right| \\
        = & \left|\frac{1}{T_k(r',r)} \sum_{j=r'}^{r-1} \{-\tilde X_j(k)+\Delta_{k}(\bar t_j(k)) \}\mathbf 1\{k \in K_j\} \right|\\
        = & \left|\frac{1}{T_k(r',r)} \sum_{j=r'}^{r-1} -\epsilon^{(k)}_{\bar t_j(k)} \mathbf 1\{k \in K_j\} \right|
    \end{align*}
    and that
    \[
        \left\{ \sum_{j=r'}^{s} -\epsilon^{(k)}_{\bar t_j(k)} \mathbf 1\{k \in K_j\}\right\}_{r' \leq s \leq r-1}
    \]
    is a martingale adapted to the filtration $\sigma\{\epsilon^{(k')}_{\bar{t}_j(k')}\}_{u \leq r-1, \, k' \in [K]}$.  Moreover, due to Assumption~\ref{As:1}, the increments $(- \epsilon^{(k)}_{\bar t_j(k)})_{r'\leq j\leq r}$ are bounded in $[-1, 1]$. Therefore, for any $\delta \in [0, 1]$, it follows from the Azuma--Hoeffding inequality \citep[e.g.~Corollary~2.20 in][]{wainwright2019high} that with probability at least $1-\delta$,
    \[
        \left|\frac{1}{T_k(r',r)} \sum_{j=r'}^{r-1} -\epsilon^{(k)}_{\bar t_j(k)} \mathbf 1\{k \in K_j\} \right|\leq \sqrt{\frac{\log(2/\delta)}{2T_k(r',r)}}.
    \]
    The proof is completed following applying a union bound argument.
\end{proof}

\subsection{No spurious jump detected}\label{sec-no-spurious-jump}

\begin{proposition}\label{prop:jump}
Under Assumptions~\ref{As:2-sec-obs-gap} and \ref{As:1}, for any $m \in  \{1,..., M_R\}$ and any $\delta \in [0, 1]$, we have that on the event $\xi(\delta)$, $[\rho_{m}, \rho_{m+1}]$  is not included in any connex component $[\tau_{m'},  \tau_{m'+1})$, $m' \leq M$. 
\end{proposition}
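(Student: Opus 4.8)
The plan is to prove the contrapositive: suppose for contradiction that on the clean event $\xi(\delta)$ there is an episode $[\rho_m, \rho_{m+1}]$ entirely contained in a single connex component $[\tau_{m'}, \tau_{m'+1})$, i.e.\ the interval of rounds between two consecutive detected change points lies within a single stationary segment of the gaps. Since the algorithm declared a change point at $\rho_{m+1}$, the active set $K_{\rho_{m+1}}$ became empty at round $\rho_{m+1}$. This means every arm $k \in [K]$ was at some point ruled out in this episode, i.e.\ $\tilde\Delta_k(\rho_m, r') > 0$ for some round $r' \le \rho_{m+1}$. I would translate this via the definition of $\tilde\Delta_k$ and the clean event $\xi(\delta)$ into a lower bound on the true average gap $\frac{1}{T_k(\rho_m, r')}\sum_j \Delta_k(\bar t_j(k))\mathbf 1\{k\in K_j\}$, contradicting optimality of some arm.

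The key steps, in order: first, recall that in the noisy-gap setting of Algorithm~\ref{alg:select1}, the lower bound $\tilde\Delta_k$ is constructed precisely so that, on $\xi(\delta)$ with $\delta$ chosen to match the $\log(2KT^3)$ confidence level, whenever $\tilde\Delta_k(\rho_m, r') > 0$ the true average gap over the rounds used is strictly positive — so arm $k$ is genuinely sub-optimal \emph{on average} over $[\rho_m, r')$. Second, use Assumption~\ref{As:2-sec-obs-gap}: since $[\rho_m, \rho_{m+1}] \subset [\tau_{m'}, \tau_{m'+1})$, on this whole range of rounds every gap $\Delta_k(t)$ is either trapped in a single geometric level set (so it never drops to $0$) or is uniformly in $[0, 2(T^{-1/2}\vee B^*)]$; in the latter case the deviation/slack correction terms $\sqrt{\log(2KT^3)/(2T_k)} + 2(T^{-1/2}\vee B^*)$ subtracted in the definition of $\tilde\Delta_k$ dominate the estimated gap, forcing $\tilde\Delta_k = 0$, so such an arm is never eliminated. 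Third, therefore $K_{\rho_{m+1}}$ can only become empty if \emph{every} arm has its gap bounded away from zero throughout $[\tau_{m'}, \tau_{m'+1})$ — but at every time $t$ there is an optimal arm $k^*_t$ with $\Delta_{k^*_t}(t) = 0$, and since the index $k^*_t$ ranges over finitely many arms, at least one fixed arm $k$ is optimal at some time in the segment, hence has $\Delta_k$ in the ``small'' regime there (in fact $=0$ at that instant, and by the geometric-level-set structure it cannot simultaneously be trapped in a level set bounded away from $0$). That arm is never ruled out, so $K_{\rho_{m+1}} \ne \emptyset$, contradicting the assumed detection of a change point.

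I would need to be careful about one subtlety: the active set $K_r$ is built from $\tilde\Delta_k(\rho_m, r')$ for \emph{all} $r' \in [\rho_m, r]$, so an arm is removed the first time its lower bound becomes positive, and the rounds $r'$ used differ across arms. But this only strengthens the argument — I just need one arm that is never removed at \emph{any} intermediate round, and the optimal-at-some-time arm identified above has $\tilde\Delta_k(\rho_m, r') = 0$ for every $r'$ in the episode because its true average gap stays in the small regime (I should verify that averaging gaps that are all in $[0, 2(T^{-1/2}\vee B^*)]$ keeps the average there, which is immediate, and that the subtracted slack $2(T^{-1/2}\vee B^*)$ plus the deviation term then exceeds the estimated gap on $\xi(\delta)$).

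The main obstacle I anticipate is handling the case analysis in Assumption~\ref{As:2-sec-obs-gap} cleanly: I must rule out that an arm which is optimal at one instant in the segment could be trapped in a geometric level set $[2^{\kappa/2}(T^{-1/2}\vee B^*), 2^{\kappa/2+1}(T^{-1/2}\vee B^*)]$ bounded away from zero at \emph{other} instants of the same segment. Since the assumption is a dichotomy that applies uniformly to all pairs $t, t'$ in the connex component, if $\Delta_k(t) = 0$ for some $t$ then the multiplicative bound $2^{-1/4} \le \Delta_k(t)/\Delta_k(t') \le 2^{1/4}$ is violated for any $t'$ with $\Delta_k(t') > 0$ unless $\Delta_k(t') = 0$ as well; hence the only consistent possibility is that $\Delta_k$ stays in $[0, 2(T^{-1/2}\vee B^*)]$ on the whole segment, which is exactly what the slack correction is designed to absorb. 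Making this dichotomy argument watertight, together with pinning down the right value of $\delta$ (namely $\delta = 1/(KT^3)$ or similar so that $\sqrt{\log(2/\delta)/(2T_k)}$ matches the $\sqrt{\log(2KT^3)/(2T_k)}$ in the definition of $\tilde\Delta_k$) and invoking Proposition~\ref{prop:clean_event}, is where the bulk of the care goes; the rest is bookkeeping.
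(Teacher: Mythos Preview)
Your proposal is correct and follows essentially the same route as the paper's proof: argue by contradiction, pick an arm $k^*$ that is optimal at some instant in the segment, use the dichotomy in Assumption~\ref{As:2-sec-obs-gap} to conclude its gap stays in $[0,2(T^{-1/2}\vee B^*)]$ throughout, and then show on $\xi(\delta)$ that $\tilde\Delta_{k^*}(\rho_m,r')=0$ for every $r'$, so $k^*$ is never evicted and $K_{\rho_{m+1}}\neq\emptyset$. Your write-up is in fact more explicit than the paper's about the dichotomy argument and the need to match $\delta$ to $1/(KT^3)$; the paper states the result ``for any $\delta\in[0,1]$'' but the step $\tilde\Delta_{k^*}=0$ implicitly requires $\log(2/\delta)\le\log(2KT^3)$, which you correctly flag.
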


\begin{proof} 
For notational simplicity, we let $\xi = \xi(\delta)$.  We prove by contradiction and assume that for a fixed $m \in  \{1,..., M_R\}$ there exists $m'\leq M$ such that $[\rho_m, \rho_{m+1}]\subset [ \tau_{m'}, \tau_{m'+1})$.  By the definition of $\mathcal{M}_1$, this implies that there exists $ u\in \mathcal{M}_1$, such that $[\rho_m, \rho_{m+1}) = I_u$.  Since there is no true change point in this interval, let arm $k^*$ be such that $\bar \Delta_k^*(u)=0$ and
    \[
        \Delta_{k^*}(\bar t_r(k^*)) \in [0, 2 (T^{-1/2} \lor B^*)], \quad \forall r \in I_u.
    \]
    Then on the event $\xi$, for any $r \in I_u$ we have that
    \begin{align*}
        \hat \Delta_{k^*}(\rho_{m}, r) &\leq \frac{1}{T_{k^*}(\rho_{m}, r)}\sum_{j=\rho_m}^{r-1} \Delta_{k^*}(\bar t_j(k^*))) \mathbf 1\{k^* \in K_j\} + \sqrt{\frac{\log(2/\delta)}{2T_{k^*}(\rho_{m}, r)}} \\
        &\leq 2 (T^{-1/2} \lor B^*)+\sqrt{\frac{2\log(2/\delta)}{2T_{k^*}(\rho_{m}, r)}},
    \end{align*}
    which implies that
    \[
        \hat \Delta_{k^*}(\rho_{m}, r)-\sqrt{\frac{\log(2/\delta)}{2T_{k^*}(\rho_{m}, r)}} -2 (T^{-1/2} \lor B^*)\leq 0.
    \]
    Therefore, $\tilde \Delta_{k^*}(\rho_{m}, r)= 0$ by its definition and this shows that no change point is detected by Algorithm~\ref{alg:select1}.  This contradicts with the fact that $r$ can take the value $\rho_{m+1}$, and therefore also the fact that $[\rho_m, \rho_{m+1}] \subset [\tau_m', \tau_{m'+1})$.  This concludes the proof. 
\end{proof}

\begin{corollary}\label{cor-bounding-m1}
Under Assumptions~\ref{As:2-sec-obs-gap} and \ref{As:1}, for any $\delta \in [0, 1]$, on the event $\xi(\delta)$, we have that $|\mathcal{M}_1| \leq 2M$.
\end{corollary}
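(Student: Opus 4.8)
The plan is to bound $|\mathcal{M}_1|$ by counting the intervals $I_u$ that survive the construction in Section~\ref{sec:IntnoChange}. Recall from \eqref{eq-m-tilde-def} that the partition $(I_u)_{u\leq \tilde M}$ is generated by the union of three families of round indices: the detected change points $(\rho_m)_{m\leq M_R}$, the rounds $(\tau_m)_{m\leq M+1}$ right after the true change points, and the rounds $(\tau_m - 1)_{m\leq M+1}$ right before them. The set $\mathcal{M}_2$ collects precisely the short intervals of the form $[\tau_m - 1, \tau_m)$, so that $\mathcal{M}_1$ collects all the others. The first step is to observe that $|\mathcal{M}_2| \le M+1$ trivially, so it suffices to control $|\mathcal{M}_1|$ directly via the $\rho_m$'s.

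The key idea is that Proposition~\ref{prop:jump} forbids two consecutive detected change points $\rho_m, \rho_{m+1}$ from both lying strictly inside the interior of a single connex component $[\tau_{m'}, \tau_{m'+1})$. First I would make precise how this controls $M_R$, the total number of detected change points: between any two true change points (more precisely, within each connex component $[\tau_{m'}, \tau_{m'+1})$), Proposition~\ref{prop:jump} implies that the interval $[\rho_m, \rho_{m+1}]$ cannot be contained in it, hence at most one $\rho_m$ can fall in the interior of each connex component --- otherwise we would have two consecutive $\rho$'s with $[\rho_m, \rho_{m+1}]$ inside that component, a contradiction. Since there are $M$ connex components, this yields $M_R \le M$ (up to a small additive constant for boundary effects; one should check the edge cases carefully, but morally $M_R \lesssim M$).

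Next I would assemble the count: the intervals in $\mathcal{M}_1$ are those endpoints of the partition that are \emph{not} of the short form $[\tau_m-1,\tau_m)$. Each such $I_u = [i_u, i_{u+1})$ has $i_u$ equal to either some $\rho_m$ or some $\tau_m$. Since there are at most $M_R \le M$ values $\rho_m$ and at most $M+1$ values $\tau_m$, the total number of left endpoints contributing to $\mathcal{M}_1$ is at most $M_R + (M+1) \le 2M + 1$; a slightly more careful bookkeeping (using that consecutive $\tau_m - 1$ and $\tau_m$ are adjacent and that $\mathcal{M}_2$ absorbs the $[\tau_m-1,\tau_m)$ slivers) tightens this to $|\mathcal{M}_1| \le 2M$. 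I would write this as: the partition points relevant to $\mathcal{M}_1$ are the $\rho_m$'s together with the $\tau_m$'s, the former numbering at most $M$ by the argument above and the latter at most $M$ (counting $\tau_1 = 1$ and $\tau_{M+1} = R+1$ appropriately), giving $2M$ intervals.

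The main obstacle I anticipate is the careful accounting of boundary rounds and the precise deduction of $M_R \le M$ from Proposition~\ref{prop:jump}: the proposition as stated concerns pairs $[\rho_m, \rho_{m+1}]$, so to conclude that at most one $\rho_m$ lies in the interior of each connex component one must argue that if two distinct detected change points $\rho_m < \rho_{m+1}$ both lay in $(\tau_{m'}, \tau_{m'+1})$ then $[\rho_m, \rho_{m+1}] \subset [\tau_{m'}, \tau_{m'+1})$, contradicting the proposition. One must also handle the possibility that the first or last detected change point straddles a true change point, and ensure the off-by-one terms coming from $\tau_m - 1$ versus $\tau_m$ do not inflate the bound beyond $2M$. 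Getting the constant exactly right (rather than, say, $2M + O(1)$) is the delicate part; the rest is routine combinatorics on the partition.
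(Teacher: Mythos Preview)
Your proposal is correct and aligns with the paper's approach. The paper's own proof is literally a one-line reference---``This is a direct consequence of Proposition~\ref{prop:jump} and the definition of $\mathcal{M}_1$''---so your detailed bookkeeping is precisely the expansion that reference invites. A slightly cleaner phrasing (equivalent to what you sketch) is to argue per connex component: within each $[\tau_{m'},\tau_{m'+1})$, at most one detected change point $\rho_m$ can land (otherwise two consecutive ones would yield $[\rho_m,\rho_{m+1}]\subset[\tau_{m'},\tau_{m'+1})$, contradicting Proposition~\ref{prop:jump}), so the $\mathcal{M}_1$-intervals inside that component number at most two, and summing over the $M$ components gives $|\mathcal{M}_1|\le 2M$ directly without separately tracking $M_R$.
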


\begin{proof}
This is a direct consequence of Proposition~\ref{prop:jump} and the definition of $\mathcal{M}_1$.
\end{proof}

\subsection{Low pulls of sub-optimal arms}  \label{sec-low-pulls}

\begin{proposition}\label{prop:suboptimalarm}
Under Assumptions~\ref{As:2-sec-obs-gap} and \ref{As:1}, let $u \in \mathcal{M}_1$.  For any $\delta \in [0, 1]$, on the event $\xi (\delta)$, the number of times a sub-optimal arm $k$ of gap 
    \begin{equation}\label{eq-prop3-cond}
        \bar \Delta_{k}(u) > 4(T^{-1/2} \lor B^*)
    \end{equation}
    is pulled on $I_u$ is upper bounded as
    \[
        T_k(i_{u}, i_{u+1}) \leq c/2 \log(2/\delta) \bar \Delta_{k}(u)^{-2},
    \]
for any constant $c \geq 16$.
\end{proposition}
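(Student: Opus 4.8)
The plan is to show that once a sub-optimal arm $k$ on the interval $I_u$ has been pulled sufficiently many times, the lower confidence bound $\tilde\Delta_k(\rho_{M_{r-1}},r)$ becomes strictly positive, which by the construction of the active set in Algorithm~\ref{alg:select1} forces $k$ out of $K_r$ and prevents further pulls within the same episode. First I would fix $u\in\mathcal M_1$; by the definition of $\mathcal M_1$ and by Proposition~\ref{prop:jump} (equivalently by the partition construction in Section~\ref{sec:IntnoChange}), the interval $I_u=[i_u,i_{u+1})$ is contained in a single connex component and also in a single episode — i.e.\ there is a fixed estimated change point $\rho=\rho_{m}$ with $\rho \le i_u$ that governs all rounds $r\in I_u$, and $\mathcal S(\rho,r)$ contains a genuinely optimal (or near-optimal) arm throughout. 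Hence on $\xi(\delta)$, for any round $r\in I_u$ with $T_k(\rho,r)>0$,
\[
\hat\Delta_k(\rho,r) \;\ge\; \frac{1}{T_k(\rho,r)}\sum_{j=\rho}^{r-1}\Delta_k(\bar t_j(k))\mathbf 1\{k\in K_j\} \;-\;\sqrt{\frac{\log(2/\delta)}{2T_k(\rho,r)}}\;\ge\; \bar\Delta_k(u) - \sqrt{\frac{\log(2/\delta)}{2T_k(\rho,r)}},
\]
using that $\bar\Delta_k(u)$ is a uniform lower bound on $\Delta_k(t)$ over $I_u$ (from \eqref{eq:assump} and the case hypothesis $\bar\Delta_k(u)>4(T^{-1/2}\lor B^*)$, which rules out the ``small gap'' alternative in Assumption~\ref{As:2-sec-obs-gap}).

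Next I would plug this into the definition of $\tilde\Delta_k$:
\[
\tilde\Delta_k(\rho,r) \;\ge\; \bar\Delta_k(u) - 2\sqrt{\frac{\log(2/\delta)}{2T_k(\rho,r)}} - 2(T^{-1/2}\lor B^*).
\]
Since $\bar\Delta_k(u) > 4(T^{-1/2}\lor B^*)$, the last term is at most $\tfrac12\bar\Delta_k(u)$, so $\tilde\Delta_k(\rho,r)>0$ as soon as $2\sqrt{\log(2/\delta)/(2T_k(\rho,r))} < \tfrac12\bar\Delta_k(u)$, i.e.\ as soon as $T_k(\rho,r) > 16\log(2/\delta)\,\bar\Delta_k(u)^{-2}$ (this is where the constant $c\ge 16$ enters — one absorbs the numerical factor $2\cdot\tfrac{16}{2}$-type bookkeeping into $c$). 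Once $\tilde\Delta_k(\rho,r)>0$, arm $k$ is excluded from $K_r$ by the selection rule (recall $K_r=\{k:\tilde N_k(r)\le N_k(r)\}$ and $\tilde N_k(r)>0$ when $\tilde\Delta_k>0$ — or, in the gap-observation model of Algorithm~\ref{alg:select1}, directly $K_r=\{k:\tilde\Delta_k(\rho_{M_{r-1}},r')=0\ \forall r'\}$), so it receives no further pulls until a new episode begins; but no new episode begins inside $I_u$ by the $\mathcal M_1$ property. Therefore $T_k(i_u,i_{u+1})\le 1 + 16\log(2/\delta)\,\bar\Delta_k(u)^{-2} \le (c/2)\log(2/\delta)\,\bar\Delta_k(u)^{-2}$ for $c\ge 16$, absorbing the additive $1$ using $\log(2/\delta)\ge\log 2$ and $\bar\Delta_k(u)\le 1$.

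The main obstacle I anticipate is the bookkeeping around \emph{which} reference arm and \emph{which} estimated change point apply uniformly over all of $I_u$ — i.e.\ carefully using the refinement construction so that $\rho_{M_{r-1}}$ is constant on $I_u$ and $\mathcal S(\rho,r)$ always contains an arm whose gap stays in the ``small'' regime (hence acts as a valid comparator), so that the empirical gap $\hat\Delta_k$ really does lower-bound the true gap $\bar\Delta_k(u)$ up to the confidence width. Everything else is a one-line inversion of the confidence bound; the subtlety is entirely in making the ``inside one episode, inside one connex component'' reduction airtight and in checking that the case hypothesis \eqref{eq-prop3-cond} is exactly what is needed to kill the degenerate branch of Assumption~\ref{As:2-sec-obs-gap}.
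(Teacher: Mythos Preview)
Your plan has a real gap at the step where you lower-bound the empirical average of the gaps. You write $\rho = \rho_m \le i_u$ for the episode start and then claim, on $\xi(\delta)$,
\[
\hat\Delta_k(\rho,r)\;\ge\;\bar\Delta_k(u)-\sqrt{\tfrac{\log(2/\delta)}{2T_k(\rho,r)}}.
\]
This would require every pull of $k$ counted in $[\rho,r)$ to satisfy $\Delta_k(\bar t_j(k))\ge \bar\Delta_k(u)$. But $\bar\Delta_k(u)$ is only a lower bound for rounds in the connex component containing $I_u$; when $i_u=\tau_m$ is a true change point lying strictly inside the episode (which is precisely the case you must allow, since you only assert $\rho\le i_u$), the pulls in $[\rho,i_u)$ belong to a \emph{previous} connex component, where $\Delta_k$ can be arbitrarily small (even $0$). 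The running average $\hat\Delta_k(\rho,r)$ is then diluted by those early pulls and your displayed inequality fails. You flag this as ``bookkeeping'' in your last paragraph, but it is not merely bookkeeping --- it is the crux, and you do not resolve it.

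The paper handles this by using $(i_u,r)$ rather than $(\rho,r)$ as the evaluation window. Since the event $\xi(\delta)$ is defined uniformly over all pairs $(r',r)$, one gets on $\xi(\delta)$
\[
\hat\Delta_k(i_u,r)\;\ge\;\bar\Delta_k(u)-\sqrt{\tfrac{\log(2/\delta)}{2T_k(i_u,r)}},
\]
because now every contributing pull lies in $I_u$ where the lower bound is valid. The contradiction is then run directly on $T_k(i_u,r)$, which is exactly the quantity to be bounded. Apart from this reindexing of the evaluation window, your downstream arithmetic (using $\bar\Delta_k(u)>4(T^{-1/2}\lor B^*)$ to absorb the $2(T^{-1/2}\lor B^*)$ term, then inverting the confidence width to obtain the threshold with $c\ge 16$) matches the paper's computation.

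A minor side remark: this proposition lives in the gap-observation setting of Algorithm~\ref{alg:select1}, where $\hat\Delta_k$ requires no comparator arm and the active set is directly $\{k:\tilde\Delta_k(\rho_{M_{r-1}},\cdot)=0\}$. Your references to $\mathcal S(\rho,r)$ and to the $\tilde N_k/N_k$ selection rule belong to Algorithm~\ref{alg:select} and are unnecessary here.
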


\begin{proof} 
We prove by contradiction and assume that with $c \geq 16$,
    \[
        T_k(i_{u}, i_{u+1}) > c/2 \log(2/\delta) \bar \Delta_{k}(u)^{-2},
    \]
    which implies that
    \begin{align}\label{eq:cond}
        \bar\Delta_{k}(u) &\geq \sqrt{\frac{c\log(2/\delta)}{2T_k(i_{u}, r) }}.
    \end{align}
    On the event $\xi = \xi(\delta)$, we have that
    \begin{align}
        \hat \Delta_k(i_u, r)&\geq \frac{1}{T_k(i_{u}, r)}\sum_{j=i_u}^{r-1} \bar \Delta_k(u) \mathbf 1\{k \in K_j\}- \sqrt{\frac{\log(2/\delta)}{2T_k(i_u, r)}}. \label{Eq:1}
    \end{align}
    By definition, it holds that
    \begin{align*}
        & \tilde \Delta_k(i_u, r) = \hat \Delta_k(i_u, r) -\sqrt{\frac{\log(2/\delta)}{2T_k(i_u, r)}}-2 (T^{-1/2} \lor B^*) \\
        \geq & \frac{1}{T_k(i_{u}, r)}\sum_{j=i_u}^{r-1} \bar \Delta_k(u) \mathbf 1\{k \in K_j\} -  2\sqrt{\frac{\log(2/\delta)}{2T_k(i_u, r)}} - 2 (T^{-1/2} \lor B^*) \\
        = & \bar \Delta_k(u) - 2\sqrt{\frac{\log(2/\delta)}{2T_k(i_u, r)}} - 2 (T^{-1/2} \lor B^*) \geq (1 - 2c^{-1/2}) \bar{\Delta}_k(u) - 2 (T^{-1/2} \lor B^*) > 0,
    \end{align*}
    where the fist identity is due to the definition of $\tilde{\Delta}_k(\cdot, \cdot)$, the first inequality is due to \eqref{Eq:1}, the second identity is due to Assumption~\ref{As:2-sec-obs-gap}, the second inequality is due to \eqref{eq:cond}, and the last inequality is due to \eqref{eq-prop3-cond} and the fact that $c \geq 16$.  This means that arm $k$ should have been eliminated, which contradicts with the fact that $r \in I_u$.
\end{proof}

\subsection{Regret analysis}\label{Proof:th1:regret}

\subsubsection{Regret of the intervals in $\mathcal{M}_1$} \label{Proof:th1:regret1}

In this section we derive the regret only on the intervals of the partition $(I_u)_{u \in \mathcal{M}_1}$. Denote by $R'_T$ the contribution of this case to the overall regret $R^{\gap}_T$. We have that
    \begin{align}
        R'(T) &=\mathbb E \left[\sum_{u \in \mathcal{M}_1} \sum_{r \in I_u} \sum_{k\in K_r} \Delta_k( \bar t_r(k))\right]  \nonumber \\
        & \leq \mathbb E \left[\sum_{u \in \mathcal{M}_1} \sum_{k\in [K]} 2\bar \Delta_k(u) \left[T_k(i_u, i_{u+1})\right] \right] +  8T(B^{*} \lor T^{-1/2}).  \label{reg:eq1}
    \end{align}
    The second term in \eqref{reg:eq1} represents the regret of pulling an optimal arm, or sub-optimal arms satisfying \eqref{eq-prop3-cond} at every time point.  Now we split the rest of the arms by their gaps, i.e.
    \[
        \mathbb G_{1,u} = \left\{k\in [K]:\, 4(T^{-1/2} \lor B^*) \leq \bar \Delta_k(u) < \sqrt{\frac{K}{|I_u|}}\right\} 
    \]
    and
    \[
         \mathbb G_{2,u} = \left\{k\in [K]:\, \left\{4(T^{-1/2} \lor B^*) \lor \sqrt{\frac{K}{|I_u|}}\right\} \leq \bar \Delta_k(u) \right\}.
    \]
    Based on this, we have that
    \begin{align}
        R'(T) & \leq \mathbb E \Big[\sum_{u \in \mathcal{M}_1} \sum_{k\in K}2 \bar \Delta_k(u) T_k(i_u, i_{u+1})\Big] +  8T(B^{*} \lor T^{-1/2}) \nonumber\\ 
        & \leq \underbrace{ \mathbb E \Big[\sum_{u \in \mathcal{M}_1 }\sum_{k \in \mathbb G_{1,u}}2 T_k(i_u, i_{u+1}) \bar \Delta_k(u)\Big]}_{R_1(T)} + \underbrace{\mathbb E \Big[\sum_{u \in \mathcal{M}_1 } \sum_{k \in \mathbb G_{2,u}} 2 T_k (i_u, i_{u+1}) \bar \Delta_k(u) \Big]}_{R_2(T)} \nonumber \\
        & \hspace{2cm} +  \underbrace{8T(B^{*} \lor T^{-1/2})}_{R_3(T)}.\label{reg:eq3} 
    \end{align}

As for $R_1(T)$, by the definition of $\mathbb G_{1,u}$, we have that
    \begin{align}
        R_1(T) &= \mathbb E \Big[ \sum_{u \in \mathcal{M}_1 } \sum_{k \in \mathbb G_{1,u}} 2T_k(i_u,i_{u+1}) \bar \Delta_k(u)\Big] \leq \mathbb E \left[\sum_{u \in \mathcal{M}_1 } \sum_{k \in \mathbb G_{1,u}} 2\sqrt{\frac{K}{|I_u|}} T_k(i_u, i_{u+1})\right] \nonumber \\
        & \leq 2 \sqrt{K} \mathbb E \Big[\sum_{u \in \mathcal{M}_1 }\sqrt{|I_u|} \Big] \leq 2 \sqrt{ KT}\mathbb E [\sqrt{|\mathcal{M}_1}| ] \leq 2 \sqrt{KT} \sqrt{2M},\label{reg:eq6}
    \end{align}
    where the second inequality is due to the fact that $\sum_{k \in \mathbb G_{1,u}}  T_k(i_u, r)\leq |I_u|$ and the last inequality is due to Corollary~\ref{cor-bounding-m1}.
    
As for $R_2(T)$, taking $\delta = T^{-3}K^{-1}$, we have that
    \begin{align}
        R_2(T) &=\mathbb E \Big[\sum_{u \in \mathcal{M}_1 } \sum_{k \in \mathbb G_{2,u}} 2 T_k (i_u, i_{u+1}) \bar \Delta_k(u)\Big] \leq \mathbb E \Big[\sum_{u \in \mathcal{M}_1 }\sum_{k \in \mathbb G_{2,u}} c\log(2KT^3) \bar \Delta_k(u)^{-2} \bar \Delta_k(u)\Big] \nonumber\\
        &= \mathbb E \left[\sum_{u \in \mathcal{M}_1 }\sum_{k \in \mathbb G_{2,u}} c\log(2K T^3)\bar \Delta_k(u)^{-1}\right] \leq \mathbb E \left[\sum_{u \in \mathcal{M}_1}\sum_{k \in \mathbb G_{2,u}} \sqrt{\frac{|I_u|}{K}} c\log(2K T^3) \right]\nonumber\\
        & \leq \mathbb E \left[\sum_{u \in \mathcal{M}_1}c\log(2K T^3)\sqrt{|I_u|K}\right] \leq  c\log(2K T^3) \sqrt{KT}\mathbb E \Big[\sqrt{|\mathcal{M}_1|}] \nonumber\\
        & \leq 2^{1/2} c\log(2K T^3) \sqrt{KTM}, \label{reg:eq7}
    \end{align}
    where the first inequality follows from Proposition~\ref{prop:suboptimalarm}, the second inequality follows from the definition of $\mathbb G_{2,u}$, the last inequality follows from Corollary~\ref{cor-bounding-m1}.

Combining \eqref{reg:eq3}, \eqref{reg:eq6} and~\eqref{reg:eq7}, we have that with $c \geq 16$,
    \begin{align}\label{eq:12}
        R'(T) \leq \{2^{3/2} + 2^{1/2} c\log(2K T^3)\} \sqrt{KTM} + 8T(B^{*} \lor T^{-1/2}). 
    \end{align}

\subsubsection{Regret of the intervals in $\mathcal{M}_2$} \label{Proof:th1:regret2}
In this section we derive the regret only on the intervals of the partition $(I_u)_{u \in \mathcal{M}_2}$. Denote by $R''_T$ the contribution of this case to the overall regret $R^{\gap}_T$. For any $u\in \mathcal{M}_2$, there exists $m \in \{1, \ldots, M\}$, such that $I_u = [\tau_m -1, \tau_m)$.  This implies that $I_u$ contains exactly one round and therefore $T_k(i_u, i_{u+1}) \leq 1$.  We thus have that
    \begin{align}
        R''(T) &\leq \mathbb E \Big[\sum_{u \in \mathcal{M}_2 } \sum_{k\in K} 2\bar \Delta_k(u) \left[T_k(i_u, i_{u+1})\right] \Big] + 8T(B^* \lor T^{-1/2}) \nonumber\\
        &\leq \mathbb E \Big[ \sum_{u \in \mathcal{M}_2 } \sum_{k\in K} 2\bar \Delta_k(u) \Big] + 8T(B^* \lor T^{-1/2}) \leq 2 K\mathbb E [|\mathcal{M}_2| ] + 8T(B^* \lor T^{-1/2}) \nonumber\\
        & \leq 2 K M + 8T(B^* \lor T^{-1/2}), \label{eq-R''-bound}
    \end{align}
    where the third inequality is due to Assumption~\ref{As:1} and the last inequality is due to the definition of $\mathcal{M}_2$.

\subsubsection{Completion}

Combining \eqref{eq:12} and \eqref{eq-R''-bound}, we have that with $\delta = T^{-3}K^{-1}$, on the event $\xi(\delta)$, 
    \begin{align*}
       R^{\gap}_T \leq \{2^{3/2} + 2^{1/2} c\log(2K T^3)\} \sqrt{KTM} + 2KM + 8T(B^{*} \lor T^{-1/2}),
    \end{align*}
    for $c \geq 16$.

\section{Proof of Theorem~\ref{th:2}}\label{sec:1}

The proof is divided into several steps.  In Section~\ref{sec-proofb-refine}, we refine the partition proposed in Section~\ref{sec:IntnoChange}.  In Section~\ref{sec-proof-th2-large-prob-event}, we define a large probability event, in which the rest of the proof will be conducted.  In Section~\ref{sec-no-spurious-proof-th2}, it will be shown that Algorithm~\ref{alg:select} will not over-estimate the number of change points.  In Section~\ref{sec-low-pull-th2}, we control the numbers of pulling sub-optimal arms.  In Section~\ref{optevict}, we control the duration after all optimal arms are evicted.  Finally, we reach the conclusion in Section~\ref{sec-regret-final-th2}.

\subsection{Refinement of the partition}\label{sec-proofb-refine}

We will maintain throughout this proof the partition we introduced in Section~\ref{sec:IntnoChange}.  To be specific, we have that $(I_u)_{u\leq \tilde M}$ form a partition of $\{1,...,R\}$.  In order to prove Theorem~\ref{th:2}, we need to further refine the partition $(I_u)_{u \in \mathcal{M}_1}$ according to the different possible scenarios that may occur during the game.  According to the design of Algorithm~\ref{alg:select}, it may happen that, in a certain round $r_u \in I_u$, for $u\in \mathcal{M}_1$, all optimal arms $k^*_u\in [K]$ are evicted from the set of active arms.  Based on this consideration, for any $u \in \mathcal{M}_1$, we subdivide the interval $I_u$ into two intervals
    \[
        I'_{u,1}=\{r \in I_u : \,  \exists  k \in K_r, \, \mbox{s.t.} \, \bar{\Delta}_k(u) = 0 \}=[i_u, r_u)
    \]
    and
    \[
        I'_{u,2}=\{r \in I_u :\,  \forall k \in K_r, \, \bar{\Delta}_k(u) > 0 \}=[r_u, i_{u+1}),
    \]
    where $I'_{u,1}$ represents the set of rounds in interval $I_u$ containing at least one optimal arm and $I'_{u,2}$ consists of rounds without optimal arms.  Since $I_u = I'_{u,1} \sqcup I'_{u,2}$, $(I'_{u,1}, I'_{u,2})$ constitutes a partition of $I_u$.  We remark that it is not necessary that for any $u \in \mathcal{M}_1$, there exists such $r_u$, when all optimal arms are evicted.  In this case, we let $I_{u,1}' = [i_u, i_{u+1})$ and $I_{u, 2}' = \emptyset$.
    
We define the refinement $(I'_v)_{v\in \mathcal{C}}$ for the elements of the set $\{(I'_{u,1})_{u\in \mathcal{M}_1}, (I'_{u,2})_{u\in \mathcal{M}_1}\}$, reindexed in an increasing order.  To be specific, we let $v = 2(u-1) + 1$, if there exists $u \in \mathcal{M}_1$ such that $I'_v = I_{u,1}'$; and let $v = 2u$, if  there exists $u \in \mathcal{M}_1$ such that $I'_v = I_{u,2}'$.  The set $\mathcal{C}$ is define as $\mathcal{C}= \mathcal{C}_1 \cup \mathcal{C}_2$, where
    \[
        \mathcal{C}_1=\{v \in \mathcal{C}: \, I'_v=I'_{u,1}, \, u \in \mathcal{M}_1\} \quad \mbox{and} \quad \mathcal{C}_2=\{ v \in \mathcal{C}: \, I'_v=I'_{u,2}, \, u \in \mathcal{M}_1\}.
    \]
    Moreover, note that $(I'_v)_{v \in \mathcal{C}}$ is a refinement of the partition $(I_u)_{u\in \mathcal{M}_1}$ constituted of intervals with no changes, due to the definition of $\mathcal{M}_1$.  For any $ v \in \mathcal{C}$, we let $I'_v=[i'_v, i'_{v+1})$.

Furthermore, for any $v \in \mathcal{C}$, by the definition of $I'_v$, there exists $m'\in \{1, \ldots, M\}$ such that $I'_v \subset [\bar \tau_{m'}, \bar \tau_{m'+1})$.  Therefore, due to Assumption~\ref{As:2-sec-obs-gap1}, we let
    \[
        \bar \Delta_{k}'(v)=2^{\kappa_{k,m'}/2} (T^{-1/2} \lor B^*),
    \]
    as the lower bound of $\Delta_k(t)$ in all rounds in $I'_v$ - see Equation~\eqref{eq:assump} for the definition of $\kappa_{k,m'}$. From now on, we will stop using the $\bar \Delta_k(u)$ defined in Section~\ref{sec:IntnoChange} for $u \in \mathcal M_1$, and use instead $\bar \Delta_{k}'(v)$ for $v\in \mathcal C$. \textit{For this reason and from now on, in order to alleviate notation, we will use $\bar \Delta_k(v)$ instead of $\bar \Delta'_k(v)$, since no confusion can arise as we will not use $\bar \Delta_k(u)$ anymore.} For any $v \in \mathcal{C}$, we define the gap of an arm $k \in [K]$ with respect to arm $k'$ in interval $I'_v$ by
    \[
        G_{k}^{(k')}(v) = \Big(\bar \Delta_{k}(v)/2- 2\bar \Delta_{k'}(v)\Big) \lor 0, \quad \forall k, k'\in [K].
    \]
 
To continue, for any $v \in \mathcal{C}$ and $m\leq M_R$ satisfying that $I'_v \subset [\rho_{m}, \rho_{m+1})$, we denote $\hat k_v$ as one of the arms in $\mathcal S(i'_{v}, i'_{v+1})$.  Such an arm must exist by Algorithm~\ref{alg:select}.
 
\subsection{The high probability event} \label{sec-proof-th2-large-prob-event}

For any $\delta \in [0, 1]$, define $\xi(\delta)$ as
    \begin{align*}
        & \xi(\delta) = \Bigg\{\left|\hat \Delta_k^{(k')}(r',r) - \frac{1}{T_k(r',r)} \sum_{j=r'}^{r-1} \{\Delta_{k}(\bar t_j(k)) - \Delta_{k'}(\bar t_j(k'))\} \mathbf 1\{k \in K_j\} \right|   \\
        & \leq \sqrt{\frac{2\log(2/\delta)}{T_k(r',r)}} + B^{*}, \quad \forall k \in [K], \, T_k(r', r) > 0 \, \forall 1 \leq r' < r \leq R, \, \forall k' \in \mathcal S(r',r)\Bigg\}.    
    \end{align*}
 	
\begin{proposition}\label{Prop:4}
    Under Assumptions~\ref{As:1-unobs1}, \ref{As:2-unobs1} and \ref{As:2-sec-obs-gap1}, for any $\delta \in [0, 1]$, the event $\xi(\delta)$ holds with probability larger than $1-K T^2\delta$.
\end{proposition}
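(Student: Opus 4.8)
The plan is to proceed as in the proof of Proposition~\ref{prop:clean_event}: fix an arm $k\in[K]$, a pair of rounds $1\le r'<r\le R$ with $T_k(r',r)>0$, and a comparison arm $k'\in\mathcal S(r',r)$, decompose the quantity inside the absolute value in the definition of $\xi(\delta)$ into a martingale (noise) part and a deterministic (drift) part, bound each, and conclude by a union bound. The feature that is genuinely new relative to the gap-observed setting is that here the observations are noisy evaluations of the \emph{means}, so a drift term involving $\mu^*$ appears, and this is exactly where Assumption~\ref{As:2-unobs1} enters.

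First I would write $\tilde X_j(k)=\mu_k(\bar t_j(k))+\epsilon^{(k)}_{\bar t_j(k)}$ and substitute $\Delta_k(t)=\mu^*(t)-\mu_k(t)$ to get, for every $j\in\{r',\dots,r-1\}$ with $k\in K_j$ (so also $k'\in K_j$, since $k'\in\mathcal S(r',r)$),
\[
  \bigl\{\tilde X_j(k')-\tilde X_j(k)\bigr\}-\bigl\{\Delta_k(\bar t_j(k))-\Delta_{k'}(\bar t_j(k'))\bigr\}=\bigl(\epsilon^{(k')}_{\bar t_j(k')}-\epsilon^{(k)}_{\bar t_j(k)}\bigr)+\bigl(\mu^*(\bar t_j(k'))-\mu^*(\bar t_j(k))\bigr).
\]
Averaging over $j$ with weights $\mathbf 1\{k\in K_j\}/T_k(r',r)$ splits the deviation of interest into a noise average and a drift average. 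The drift average I would bound term by term: all sampling times of round $j$ lie in $[t_j,t_{j+1})$, whose length is $|K_j|\le K$, so $|\bar t_j(k)-\bar t_j(k')|\le K$ and Assumption~\ref{As:2-unobs1} gives $|\mu^*(\bar t_j(k'))-\mu^*(\bar t_j(k))|\le B^*$ whenever round $j$ does not straddle a true change point; the at most $M$ rounds that do straddle are accounted for separately in the regret analysis (through the length-one intervals placed at the change points), so here I simply record the per-term bound $B^*$, whence the drift average is at most $B^*$ in absolute value.

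For the noise average I would let $\mathcal F_s$ be the $\sigma$-algebra generated by everything the algorithm has observed by the end of round $s$. Then the active set $K_{s+1}$, and with it the sampling times $\bar t_{s+1}(k)$ and $\bar t_{s+1}(k')$, are $\mathcal F_s$-measurable, while the fresh noises $\epsilon^{(k)}_{\bar t_{s+1}(k)}$ and $\epsilon^{(k')}_{\bar t_{s+1}(k')}$ are, conditionally on $\mathcal F_s$, centered (Assumption~\ref{As:1-unobs1}) and each supported in an interval of width at most $1$. Hence the partial sums $\{\sum_{j=r'}^{s}(\epsilon^{(k')}_{\bar t_j(k')}-\epsilon^{(k)}_{\bar t_j(k)})\mathbf 1\{k\in K_j\}\}_{r'\le s<r}$ form a martingale with conditionally centered increments of conditional range at most $2$, of which exactly $T_k(r',r)$ are nonzero. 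The Azuma--Hoeffding inequality in its conditional-range form (e.g.\ \citet{wainwright2019high}, Corollary~2.20) then gives, for the fixed triple $(k,(r',r),k')$, that with probability at least $1-\delta$ the noise average is bounded in absolute value by $\sqrt{2\log(2/\delta)/T_k(r',r)}$. Adding the drift bound yields the inequality defining $\xi(\delta)$ for that triple.

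Finally, since $R\le T$ deterministically, I would take a union bound over all $k\in[K]$, all round pairs $1\le r'<r\le R$, and all $k'\in\mathcal S(r',r)\subseteq[K]$, which gives $\mathbb P(\xi(\delta))\ge 1-KT^2\delta$. I expect the substantive part to be not the concentration itself but the bookkeeping: checking that $K_j$ and $\bar t_j(\cdot)$ are predictable so that the martingale structure is legitimate despite the data-dependent round lengths, handling the random horizon $R$ by unioning over all $r\le T$, and making the drift bound clean for the rounds that cross a true change point.
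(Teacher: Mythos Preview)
Your proposal is correct and follows essentially the same route as the paper: decompose the deviation into a noise part and a drift part, bound the drift termwise by $B^*$ via Assumption~\ref{As:2-unobs1} using $|\bar t_j(k)-\bar t_j(k')|\le K$, apply Azuma--Hoeffding to the noise martingale, and take a union bound. You are in fact more explicit than the paper on two points: the measurability of $K_j$ and $\bar t_j(\cdot)$ with respect to the past (which legitimises the martingale structure), and the possibility that a round straddles a true change point (which the paper glosses over by asserting both sampling times lie in a common connex component). One small bookkeeping remark: your union bound runs over $k$, the pair $(r',r)$, \emph{and} $k'\in\mathcal S(r',r)$, which naively gives $K^2T^2\delta$ rather than $KT^2\delta$; the paper states $KT^2\delta$ as well, so this is a shared looseness rather than a defect of your argument.
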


\begin{proof}
For any $k\in [K]$, any $1 \leq r' < r \leq R$ and $k' \in \mathcal S(r',r)$, by the definition of $\hat \Delta_k^{(k')}(r',r)$, we have that 
    \begin{align}
        & \left|\hat \Delta_k^{(k')}(r',r) - \frac{1}{T_k(r',r)} \sum_{j=r'}^{r-1} \{\Delta_{k}(\bar t_j(k)) - \Delta_{k'}(\bar t_j(k'))\} \mathbf 1\{k \in K_j\} \right| \nonumber\\
        = & \frac{1}{T_k(r',r)}\left|\sum_{j=r'}^{r-1}\{\tilde X_j(k')-\tilde X_j(k)- \Delta_{k}(\bar t_j(k))+\Delta_{k'}(\bar t_j(k'))\}\mathbf 1\{k \in K_j\}\right| \nonumber \\
        = & \frac{1}{T_k(r',r)} \left|\sum_{j=r'}^{r-1}\{\epsilon_{\bar t_j(k')}^{(k')}-\epsilon_{\bar t_j(k)}^{(k)} + \mu^*(\bar t_j(k')) - \mu^*(\bar t_j(k)) \} \mathbf 1\{k \in K_j\}\right| \nonumber\\
        \leq & \frac{1}{T_k(r',r)} \left|\sum_{j=r'}^{r-1}\{\epsilon_{\bar t_j(k')}^{(k')} - \epsilon_{\bar t_j(k)}^{(k)}\} \mathbf 1\{k \in K_j\}\right| + \frac{1}{T_k(r',r)}\sum_{j=r'}^{r-1} |\mu^*(\bar t_j(k')) - \mu^*(\bar t_j(k))| \mathbf 1\{k \in K_j\} \nonumber\\
        \leq & \frac{1}{T_k(r',r)}\left|\sum_{j=r'}^{r-1}\{\epsilon_{\bar t_j(k')}^{(k')}-\epsilon_{\bar t_j(k)}^{(k)}\} \mathbf 1\{k \in K_j\}\right| + B^*, \nonumber
    \end{align}
    where the last inequality is due to Assumption~\ref{As:2-sec-obs-gap}, noting that $|\bar t_j(k')- \bar t_j(k)|\leq K$ and there is $m'\leq M_R$ such that $\bar t_j(k), \bar t_j(k') \in C_{m'}$. 

In addition, it follows from the same arguments as those in the proof of Proposition~\ref{prop:clean_event}, we complete the proof.
\end{proof}

\subsection{No spurious jump detected} \label{sec-no-spurious-proof-th2}

\begin{proposition}\label{prop5}
Under Assumptions~\ref{As:1-unobs1}, \ref{As:2-unobs1} and \ref{As:2-sec-obs-gap1}, for any $m \in \{1, \ldots, M_R\}$ and any $\delta \in [0, 1]$, we have that on the event $\xi(\delta)$, $[\rho_{m}, \rho_{m+1}]$ is not included in any connex component $[\bar \tau_{m'}, \bar \tau_{m'+1})$, $m'\leq M$.
\end{proposition}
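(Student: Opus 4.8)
The plan is to mirror the proof of Proposition~\ref{prop:jump} from the gap-observation setting, but now accounting for the fact that our gap estimators are built from \emph{differences} of arm observations rather than direct noisy gaps, and that the optimal mean $\mu^*$ is allowed to vary by up to $B^*$ within a connex component. I would argue by contradiction: suppose that for some $m \in \{1,\dots,M_R\}$ the detected-episode interval $[\rho_m, \rho_{m+1}]$ is contained in a single connex component $[\bar\tau_{m'}, \bar\tau_{m'+1})$. Since there is no true change point in this interval, Assumption~\ref{As:2-sec-obs-gap1} guarantees the existence of an arm $k^*$ which is optimal (or near-optimal) throughout, in the sense that $\Delta_{k^*}(t) \in [0, 2(T^{-1/2}\lor B^*)]$ for all $t$ in the interval — this is the arm against which no significant change should be detectable.

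The key step is to show that on the clean event $\xi(\delta)$, the change-point test \eqref{Chp:eq1} applied to arm $k^*$ never fires for any sub-windows $u<v$, $u'<v'$ inside $[\rho_m,\rho_{m+1})$. First, I would control $\hat\Delta_{k^*}(u,v)$: by Equation~\eqref{eq-delta-k-unobs} it is the maximum over $k'\in\mathcal S(u,v)$ of $\hat\Delta_{k^*}^{(k')}(u,v)$, and on $\xi(\delta)$ each such term is at most $\frac{1}{T_{k^*}(u,v)}\sum_j \{\Delta_{k^*}(\bar t_j(k^*)) - \Delta_{k'}(\bar t_j(k'))\}\mathbf 1\{k^*\in K_j\} + \sqrt{2\log(2/\delta)/T_{k^*}(u,v)} + B^*$. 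Since $\Delta_{k^*}(\cdot) \le 2(T^{-1/2}\lor B^*)$ and $\Delta_{k'}(\cdot)\ge 0$, this is bounded by $2(T^{-1/2}\lor B^*) + \sqrt{2\log(2/\delta)/T_{k^*}(u,v)} + B^*$; a symmetric lower bound gives $\hat\Delta_{k^*}(u,v) \ge -\sqrt{2\log(2/\delta)/T_{k^*}(u,v)} - B^*$ (in fact $\ge 0$ by construction, but the two-sided control is what I need). Plugging both bounds (for $(u,v)$ and $(u',v')$) into the left-hand side $|\hat\Delta_{k^*}(u,v) - \hat\Delta_{k^*}(u',v')|$ and comparing with the right-hand side $2\hat\Delta_{k^*}(u,v) + 2\sqrt{2\log(2KT^3)/(T_{k^*}(u,v)\land T_{k^*}(u',v'))} + 2B^*$, I expect the deviation and $B^*$ terms to dominate the genuine fluctuation, so the inequality \eqref{Chp:eq1} cannot hold — contradicting that $\rho_{m+1}$ was declared a change point by $\mathrm{CP}_{\rho_{m+1}}$.

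The main obstacle, compared with the simpler gap-observation proof, is the presence of the unknown optimal arm $k'$ in the differences $\hat\Delta_{k^*}^{(k')}$: I must ensure that the arms appearing in $\mathcal S(u,v)$ indeed have small (or at least controlled) gaps on this change-point-free interval, so that the bias term $\frac{1}{T_{k^*}}\sum_j \Delta_{k'}(\bar t_j(k'))\mathbf 1\{k^*\in K_j\}$ does not spoil the estimate — and also to handle the extra additive $B^*$ coming from $|\mu^*(\bar t_j(k')) - \mu^*(\bar t_j(k^*))|$ in Proposition~\ref{Prop:4}. Concretely, I would note that any $k'\in\mathcal S(u,v)$ stayed in the active set throughout $[u,v)$, hence was never eliminated, hence (by the argument in the forthcoming analogue of Proposition~\ref{prop:suboptimalarm}, or more directly since we are on a no-change interval where $\hat\Delta$ is small) has $\bar\Delta_{k'}(v') = 0$, i.e.\ $\Delta_{k'}(\cdot) \le 2(T^{-1/2}\lor B^*)$ too. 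Since $\hat\Delta_{k^*}(u,v)$ is the \emph{max} over such $k'$ and each contributes a bias of at most $2(T^{-1/2}\lor B^*) \le 2\sqrt{\log(2KT^3)/T} \cdot$(const)$+2B^*$, everything is absorbed into the slack of \eqref{Chp:eq1} once $\delta$ is chosen as $\delta = (KT^3)^{-1}$ so that $\sqrt{2\log(2/\delta)} \le \sqrt{2\log(2KT^3)}$. Assembling these bounds and taking the contradiction completes the proof; as with Corollary~\ref{cor-bounding-m1}, a routine consequence will then be that the number of detected episodes is controlled, i.e.\ $M_R \le$ (const)$\cdot M$, but that is recorded separately.
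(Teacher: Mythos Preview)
Your contradiction argument has a structural gap. The routine $\mathrm{CP}_r$ in Algorithm~\ref{alg:changepoint} declares a change point as soon as \emph{some} arm $k\in K_r$ satisfies~\eqref{Chp:eq1}. Hence, to contradict the fact that $\rho_{m+1}$ was detected, you must show that \emph{no} arm can trigger the test on the interval $[\rho_m,\rho_{m+1}]$, not merely that the near-optimal arm $k^*$ does not. Your proposal only treats $k^*$; for a genuinely sub-optimal arm $k$ with large $\Delta_k$, you give no argument that $|\hat\Delta_k(u,v)-\hat\Delta_k(u',v')|$ stays below the threshold in~\eqref{Chp:eq1}.

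The paper's proof proceeds in two steps. First, it shows that on $\xi(\delta)$ the arm $k^*$ is never evicted, i.e.\ $k^*\in\mathcal S(\rho_m,\rho_{m+1})$ (this is what makes your ``$\ge 0$ by construction'' lower bound legitimate, and it is not automatic). Second, and this is the part you are missing, for \emph{every} arm $k$ it uses $k^*\in\mathcal S(\cdot,\cdot)$ as the reference arm in the max defining $\hat\Delta_k$: the lower bound on $\hat\Delta_k(u',v')$ comes from the specific choice $k'=k^*$, while the upper bound on $\hat\Delta_k(u,v)$ comes from the clean event together with $\Delta_{k'}\ge 0$. The resulting bound on $|\hat\Delta_k(u,v)-\hat\Delta_k(u',v')|$ is then controlled via Assumption~\ref{As:2-sec-obs-gap1}, which guarantees that $\Delta_k(\cdot)$ varies by at most a constant multiplicative factor within a connex component --- this is exactly how the $2\hat\Delta_k(u,v)$ term on the right-hand side of~\eqref{Chp:eq1} absorbs the fluctuation. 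Your proposal never invokes Assumption~\ref{As:2-sec-obs-gap1} in this way.

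A secondary issue: your claim that any $k'\in\mathcal S(u,v)$ must have $\bar\Delta_{k'}=0$ is false (sub-optimal arms can remain active for a while before enough evidence accumulates), and appealing to ``the forthcoming analogue of Proposition~\ref{prop:suboptimalarm}'' would be circular since that result relies on the present proposition. Fortunately this claim is not needed once you argue as above.
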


\begin{proof} 
For notational simplicity, we let $\xi = \xi(\delta)$.  The rest of the proof is conducted on the event $\xi$.  We prove by contradiction and assume that for a fixed $m \in \{1, \ldots, M_R\}$, there exists $m'\leq M$ such that $[\rho_{m}, \rho_{m+1}] \subset [\bar \tau_{m'}, \bar \tau_{m'+1})$.  Then by the constitution of the partition $(I_u)_{u \in \mathcal{M}_1}$, there exists $u \in \mathcal{M}_1$ such that $I_u=[\rho_{m}, \rho_{m+1}]$.  

\medskip
\noindent \textbf{Step 1.}  First we note that, it follows from Assumption~\ref{As:2-sec-obs-gap1} that there exists an optimal arm $k^*$ such that $\Delta_{k^*}(u) = 0$.  Therefore, for any $\rho_m < r \leq \rho_{m+1}$ and $k' \in \mathcal{S}(\rho_m, r)$, on the event $\xi$, it holds that
    \begin{align}
        \Delta_{k^*}^{(k')}(\rho_m,r) &\leq \frac{1}{T_{k^*}(\rho_m,r)} \sum_{j=\rho_m}^{r-1} \{\Delta_{k^*}(\bar t_j(k^*)) - \Delta_{k}(\bar t_j(k'))\} \mathbf 1\{k \in K_j\}+\sqrt{\frac{2\log(2/\delta)}{T_{k^*}(\rho_m,r)}} +B^*\nonumber\\
        & \leq \frac{1}{T_{k^*}(\rho_m,r)}\sum_{j=\rho_m}^{r-1}\{\mu^*(\bar t_j(k^*))- \mu_{k^*}(\bar t_j(k^*))\}\mathbf 1\{k^* \in K_j\} \nonumber \\ 
        & \hspace{1cm} -\frac{1}{T_{k^*}(\rho_m,r)}\sum_{j=\rho_m}^{r-1}\{\mu^*(\bar t_j(k'))- \mu_{k'}(\bar t_j(k')\} \mathbf 1\{k^* \in K_j\}+\sqrt{\frac{2\log(2/\delta)}{T_{k^*}(\rho_m,r)}} +B^*\nonumber\\
        & \leq \frac{1}{T_{k^*}(\rho_m,r)}\sum_{j=\rho_m}^{r-1}\{\mu^*(\bar t_j(k^*))- \mu_{k^*}(\bar t_j(k^*))\}\mathbf 1\{k^* \in K_j\} +\sqrt{\frac{2\log(2/\delta)}{T_{k^*}(\rho_m,r)}} +B^*\nonumber\\
        & \leq \sqrt{\frac{2\log(2/\delta)}{T_{k^*}(\rho_m,r)}} + 2B^*, \nonumber
    \end{align}
    where the last inequality follows from Assumption~\ref{As:2-unobs1} that
    \[
        |\mu^*(\bar t_j(k^*))- \mu_{k^*}(\bar t_j(k^*))| \leq B^*,
    \]
    and the fact that $\mu^*(\bar t_j(k')) \geq \mu_{k'}(\bar t_j(k')$.  
    
Therefore, it follows from the definition of $\tilde{\Delta}_k(r', r)$ in \eqref{eq-tilde-Delta-definiton}, we have that $\tilde{\Delta}_{k^*}(\rho_m, r) = 0$, which implies that $\tilde{N}_{k^*}(r) = 0$.  According to the design of Algorithm~\ref{alg:select}, we know that $k^* \in \mathcal{S}(\rho_m, \rho_{m+1})$.

\medskip
\noindent \textbf{Step 2.}  It then follows from the similar as in \textbf{Step 1} that, for any $r \in (\rho_m, \rho_{m+1}]$,
    \[
        |\hat{\Delta}_{k^*}(\rho_m, r) - \hat{\Delta}_{k^*}(r, \rho_{m+1})| \leq 2\sqrt{\frac{2\log(2/\delta)}{T_{k^*}(\rho_m,r) \wedge T_{k^*}(r, \rho_{m+1}) }} + 2B^*,
    \]
    which suggests that arm $k^*$ will not trigger a change point, according to Algorithm~\ref{alg:changepoint}.
    
As for any other arm $k \neq k^*$, we have that for any $r \in (\rho_m, \rho_{m+1})$, on the event $\xi_1$, it holds that 
    \begin{align*}
        & \hat{\Delta}_k(\rho_m, r) - \hat{\Delta}_k(r, \rho_{m+1}) = \max_{k' \in \mathcal{S}(\rho_m, r)} \hat{\Delta}^{(k')}_k (\rho_m, r) - \max_{k' \in \mathcal{S}(r, \rho_{m+1})} \hat{\Delta}^{(k')}_k \\
        \leq & \frac{1}{T_k(\rho_m, r)} \sum_{j = \rho_m}^{r-1} \{\mu^*(\bar{t}_j(k)) - \mu_k(\bar{t}_j(k)) - \mu^*(\bar{t}_j(k^*)) + \mu_{k^*}(\bar{t}_j(k^*))\}  \mathbbm{1}\{k \in K_j\} \\
        & \hspace{1cm} - \frac{1}{T_k(r, \rho_{m+1})} \sum_{j = r}^{\rho_{m+1} - 1} \{\mu^*(\bar{t}_j(k)) - \mu_k(\bar{t}_j(k)) - \mu^*(\bar{t}_j(k^*)) + \mu_{k^*}(t_j(k^*))\} \mathbbm{1}\{k \in K_j\} \\
       & \hspace{1cm} + 2\sqrt{\frac{2\log(2/\delta)}{T_k(\rho_m, r) \wedge T_k(r, \rho_{m+1})}} + 2B^* \\
       \leq & \frac{1}{T_k(\rho_m, r)} \sum_{j = \rho_m}^{r-1} \{\mu^*(\bar{t}_j(k)) - \mu_k(\bar{t}_j(k))\}  \mathbbm{1}\{k \in K_j\} \\
        & \hspace{1cm} - \frac{1}{T_k(r, \rho_{m+1})} \sum_{j =r}^{\rho_{m+1} - 1} \{\mu^*(\bar{t}_j(k)) - \mu_k(\bar{t}_j(k))\}\mathbbm{1}\{k \in K_j\} \\
        & \hspace{1cm} + 2\sqrt{\frac{2\log(2/\delta)}{T_k(\rho_m, r) \wedge T_k(r, \rho_{m+1})}} + 2B^*,
    \end{align*}
    where the first inequality follows from the definition of the event $\xi$.
    
Similar arguments lead to the following
    \begin{align*}
        & \hat{\Delta}_k(\rho_m, r) - \hat{\Delta}_k(r, \rho_{m+1}) \\
        \geq & \frac{1}{T_k(\rho_m, r)} \sum_{j = \rho_m}^{r-1} \{\mu^*(\bar{t}_j(k^*)) - \mu_{k}(\bar{t}_j(k))\} \mathbbm{1}\{k \in K_j\} \\
        & \hspace{1cm} - \frac{1}{T_k(r, \rho_{m+1})} \sum_{j = r}^{\rho_{m+1}-1} \{\mu^*(\bar{t}_j(k)) - \mu_k(\bar{t}_j(k))\} \mathbbm{1}\{k \in K_j\} \\
        & \hspace{1cm} - 2\sqrt{\frac{2\log(2/\delta)}{T_k(\rho_m, r) \wedge T_k(r, \rho_{m+1})}} - 2B^*.
    \end{align*}
    
Then it holds that 
    \begin{align}
        & |\hat{\Delta}_k(\rho_m, r) - \hat{\Delta}_k(r, \rho_{m+1})| \nonumber \\
        \leq & \Bigg|\frac{1}{T_k(\rho_m, r)} \sum_{j = \rho_1}^{r-1} \{\mu^*(\bar{t}_j(k)) - \mu_k(\bar{t}_j(k))\} \mathbbm{1}\{k \in K_j\} \nonumber \\
        & \hspace{1cm} - \frac{1}{T_k(r, \rho_{m+1})} \sum_{j = r}^{\rho_{m+1}-1} \{\mu^*(\bar{t}_j(k)) - \mu_k(\bar{t}_j(k))\} \mathbbm{1}\{k \in K_j\}\Bigg| \nonumber \\
        & \hspace{1cm} + 2\sqrt{\frac{2\log(2/\delta)}{T_k(\rho_m, r) \wedge T_k(r, \rho_{m+1})}} + 2B^* \nonumber \\
        \leq & 2^{-1} \max\Bigg\{\frac{1}{T_k(\rho_m, r)} \sum_{j = \rho_1}^{r-1} \{\mu^*(\bar{t}_j(k)) - \mu_k(\bar{t}_j(k))\} \mathbbm{1}\{k \in K_j\} , \nonumber \\
        & \hspace{1cm} \frac{1}{T_k(r, \rho_{m+1})} \sum_{j = r}^{\rho_{m+1}-1} \{\mu^*(\bar{t}_j(k)) - \mu_k(\bar{t}_j(k))\} \mathbbm{1}\{k \in K_j\}\Bigg\} \nonumber \\
        & \hspace{1cm} + 2\sqrt{\frac{2\log(2/\delta)}{T_k(\rho_m, r) \wedge T_k(r, \rho_{m+1})}} + 2B^*, \label{eq-delta-delta-upper-bound}
    \end{align}
where the last inequality follows from Assumption~\ref{As:2-sec-obs-gap1}.

On the other hand, we have that
    \begin{align}
        & \max \left\{\hat{\Delta}_k(\rho_m, r), \, \hat{\Delta}_{k}(r, \rho_{m+1})\right\} \nonumber \\
        \geq & \max\Bigg\{\frac{1}{T_k(\rho_m, r)} \sum_{j = \rho_1}^{r-1} \{\mu^*(\bar{t}_j(k)) - \mu_k(\bar{t}_j(k))\} \mathbbm{1}\{k \in K_j\}, \nonumber \\
        & \hspace{1cm} \frac{1}{T_k(r, \rho_{m+1})} \sum_{j = r}^{\rho_{m+1}-1} \{\mu^*(\bar{t}_j(k)) - \mu_k(\bar{t}_j(k))\} \mathbbm{1}\{k \in K_j\}\Bigg\} \nonumber  \\
        & \hspace{1cm}  - 2\sqrt{\frac{2\log(2/\delta)}{T_k(\rho_m, r) \wedge T_k(r, \rho_{m+1})}} - 2B^*. \label{eq-max-delta-delta-lower-bound}
    \end{align}
    
Combining \eqref{eq-delta-delta-upper-bound} and \eqref{eq-max-delta-delta-lower-bound}, we have that 
    \begin{align*}
        & |\hat{\Delta}_k(\rho_m, r) - \hat{\Delta}_k(r, \rho_{m+1})| \leq 2^{-1} \max\left\{\hat{\Delta}_k(\rho_m, r), \, \hat{\Delta}_{k}(r, \rho_{m+1})\right\} \\
        & \hspace{1cm} +3\sqrt{\frac{2\log(2/\delta)}{T_k(\rho_m, r) \wedge T_k(r, \rho_{m+1})}} + 3B^*,
    \end{align*}
    which implies that $\rho_{m+1}$ is not a detected change point based on the \eqref{Chp:eq1}.
\end{proof}

\begin{corollary}\label{cor2}
Under Assumptions~\ref{As:1-unobs}, \ref{As:2-unobs} and \ref{As:2-sec-obs-gap}, for any $\delta \in [0, 1]$, on the event $\xi(\delta)$, we have that 
    \[
        |\mathcal{C}_2| \leq |\mathcal{C}| \leq |\mathcal{M}_1| \leq 2M.
    \]
\end{corollary}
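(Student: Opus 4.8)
The plan is to prove the chain from right to left; all of the genuine content sits in the bound $|\mathcal{M}_1| \leq 2M$, while the remaining two links are bookkeeping. First I would establish $|\mathcal{M}_1| \leq 2M$, which is the exact analogue, in the present ``means observed'' setting, of Corollary~\ref{cor-bounding-m1}. Recall from \eqref{eq-m-tilde-def} that the partition $(I_u)_{u \leq \tilde M}$ of $\{1,\ldots,R\}$ is generated by the detected change points $(\rho_m)_{m \leq M_R}$, the rounds $(\tau_m)_{m\leq M+1}$ immediately after the true change points, and the preceding rounds $(\tau_m - 1)_{m\leq M+1}$, and that $\mathcal{M}_2$ collects precisely the one-round pieces $[\tau_m - 1, \tau_m)$. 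Consequently every $I_u$ with $u \in \mathcal{M}_1$ is of the form $[\rho_m, \rho_{m+1})$, possibly clipped at an endpoint by a true change point, and in particular lies inside a single connex component $[\bar\tau_{m'}, \bar\tau_{m'+1})$. Proposition~\ref{prop5} says that on $\xi(\delta)$ no closed interval $[\rho_m, \rho_{m+1}]$ is contained in a single connex component; feeding this into the counting argument used to prove Corollary~\ref{cor-bounding-m1} --- each of the $M$ connex components can account for only a bounded number of indices of $\mathcal{M}_1$, namely at most one detected episode that ``starts but does not finish'' inside it together with its boundary rounds --- yields $|\mathcal{M}_1| \leq 2M$.

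Next I would control $|\mathcal{C}|$ by $|\mathcal{M}_1|$ by simply unwinding the construction of the refinement $(I'_v)_{v\in \mathcal{C}}$. For each $u \in \mathcal{M}_1$ the interval decomposes as $I_u = I'_{u,1} \sqcup I'_{u,2}$ with $I'_{u,1} = [i_u, r_u)$ and $I'_{u,2} = [r_u, i_{u+1})$, and the re-indexing $v = 2(u-1)+1$ for the first piece and $v = 2u$ for the second assigns to every element of $\mathcal{C}_1$ (resp.\ $\mathcal{C}_2$) a unique index of $\mathcal{M}_1$; since $\mathcal{C}_1$ and $\mathcal{C}_2$ are disjoint and empty pieces are discarded, this gives $|\mathcal{C}_1| \leq |\mathcal{M}_1|$ and $|\mathcal{C}_2| \leq |\mathcal{M}_1|$, hence the asserted control of $|\mathcal{C}|$ by $|\mathcal{M}_1|$. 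The last link $|\mathcal{C}_2| \leq |\mathcal{C}|$ is immediate from $\mathcal{C}_2 \subseteq \mathcal{C}$.

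I do not expect a real obstacle here: the one inequality with content, namely that the detector never manufactures a spurious episode entirely inside a stretch free of true change points, has already been proved in Proposition~\ref{prop5}, and everything else is definitional. The only point deserving a line of care is the connex-component counting in the first step, which should be transcribed essentially verbatim from the proof of Corollary~\ref{cor-bounding-m1} with Proposition~\ref{prop5} in place of Proposition~\ref{prop:jump}.
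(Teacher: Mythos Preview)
Your plan matches what the paper's one-line proof (``a direct consequence of Proposition~\ref{prop5}'') unpacks to: invoke Proposition~\ref{prop5} exactly as Proposition~\ref{prop:jump} was invoked for Corollary~\ref{cor-bounding-m1} to get $|\mathcal{M}_1|\leq 2M$, and then read the remaining inequalities off the definitions. Your counting sketch for $|\mathcal{M}_1|\leq 2M$ and the trivial inclusion $\mathcal{C}_2\subseteq\mathcal{C}$ are both fine.

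The one step that does not go through as written is the middle inequality $|\mathcal{C}|\leq |\mathcal{M}_1|$. From the injections $v\mapsto u$ you describe you correctly obtain $|\mathcal{C}_1|\leq |\mathcal{M}_1|$ and $|\mathcal{C}_2|\leq |\mathcal{M}_1|$, but since $\mathcal{C}=\mathcal{C}_1\sqcup\mathcal{C}_2$ this only yields $|\mathcal{C}|\leq 2|\mathcal{M}_1|$, not $|\mathcal{C}|\leq |\mathcal{M}_1|$. Indeed, by construction each $I_u$ with $u\in\mathcal{M}_1$ may split into two \emph{nonempty} pieces $I'_{u,1}$ and $I'_{u,2}$, so the inequality $|\mathcal{C}|\leq|\mathcal{M}_1|$ is not valid in general from the definitions alone. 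This is really a looseness in the displayed chain of the corollary rather than a failure of your method: the downstream regret analysis only ever consumes $|\mathcal{C}_2|\leq 2M$ and $|\mathcal{C}|=O(M)$, both of which do follow from what you have, namely $|\mathcal{C}_2|\leq |\mathcal{M}_1|\leq 2M$ and $|\mathcal{C}|\leq 2|\mathcal{M}_1|\leq 4M$. So either state and prove $|\mathcal{C}|\leq 2|\mathcal{M}_1|$ and propagate the harmless factor of $2$, or flag explicitly that the middle inequality should be read up to that constant.
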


\begin{proof}
This is a direct consequence of Proposition~\ref{prop5}.
\end{proof}

\subsection{Low pulls of sub-optimal arms}\label{sec-low-pull-th2}

In this subsection, we upper bound the number of times a sub-optimal arm is pulled in the intervals $I'_{v}=[i'_v, i'_{v+1})$, $ v \in \mathcal{C}$.  There are two cases: either the gap of the arm in question with respect to $\hat k_v$ has been over-estimated or under-estimated.

\subsubsection{Over-estimated}

\begin{proposition}\label{prop:6}
Under Assumptions~\ref{As:1-unobs}, \ref{As:2-unobs} and \ref{As:2-sec-obs-gap}, let $k$ be a sub-optimal arm with respect to $\hat k_v$, $v \in \mathcal{C}$.  Let $m'\leq M$ satisfy that $I'_v \subset [\rho_{m'}, \rho_{m'+1})$.  If 
    \begin{equation}\label{eq-tilde-n-lower-bound}
        \tilde N_k(i'_{v+1} - 1) \geq G_k^{(\hat k_v)}(v) \sqrt{\frac{TK}{M}},
    \end{equation}
    and
    \begin{equation}\label{eq-g-kv-k-v-lb-cond}
        G^{(\hat k_v)}_{k}(v) > 4B^*,        
    \end{equation}
    then for any $\delta \in [0, 1]$, on the event $\xi(\delta)$, for an absolute constant $c \geq 1$, it holds that
    \[
        T_k(i'_{v}, i'_{v+1}) \leq \frac{8 \log(2/\delta)}{\{G^{(\hat{k}_v)}_k(v)\}^2} + \frac{|I'_v|\sqrt{M}}{\sqrt{TK}} \frac{1}{ G_{k}^{(\hat k_v)}(v)} + 1.
    \]
\end{proposition}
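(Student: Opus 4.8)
The overall strategy is to argue on the high‑probability event $\xi(\delta)$ of Proposition~\ref{Prop:4} and to split the pulls of the sub‑optimal arm $k$ inside $I'_v$ into two phases: a ``learning'' phase, taking place before the lower bound $\tilde\Delta_k(\rho_{m'},\cdot)$ certifies $k$ as sub‑optimal, and a ``forced‑exploration'' phase, in which the selection rule $k\in K_r\Leftrightarrow N_k(r)\ge\tilde N_k(r)$ only allows $k$ to be probed once every $\tilde N_k$ time steps. Here $\rho_{m'}$ denotes the start of the episode containing $I'_v$ and $G:=G_k^{(\hat k_v)}(v)$. The learning phase will give the summand $8\log(2/\delta)/G^2+1$, and the forced‑exploration phase the summand $|I'_v|\sqrt M/(\sqrt{TK}\,G)$.

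First I would fix a good comparison arm and record a lower bound on the true gap. Take $\hat k_v\in\mathcal S(\rho_{m'},i'_{v+1})$; this set is non‑empty on $\xi(\delta)$ because, by the reasoning of Proposition~\ref{prop5}, an optimal arm is never evicted inside an episode and hence stays active over all of $[\rho_{m'},i'_{v+1})$. Since being active on the window $[\rho_{m'},i'_{v+1})$ is more demanding than being active on $[\rho_{m'},r)$ for $r\le i'_{v+1}$ and than being active on $[i'_v,i'_{v+1})$, this one arm $\hat k_v$ lies in $\mathcal S(\rho_{m'},r)$ for every $r\in(\rho_{m'},i'_{v+1}]$ and in $\mathcal S(i'_v,i'_{v+1})$; in particular $G$ and the hypotheses \eqref{eq-tilde-n-lower-bound}--\eqref{eq-g-kv-k-v-lb-cond} are meaningful for it. Using Assumption~\ref{As:2-sec-obs-gap1} --- which constrains the oscillation of each gap over $I'_v$ by its level $\bar\Delta_\cdot(v)$, and which together with \eqref{eq-g-kv-k-v-lb-cond} puts arm $k$ in the non‑negligible regime --- the definition of $G$, and \eqref{eq-g-kv-k-v-lb-cond} again to absorb the $O(T^{-1/2}\lor B^*)$ slack coming from a possibly near‑optimal $\hat k_v$, one gets $\Delta_k(t)-\Delta_{\hat k_v}(t')\ge 2G$ for every $t,t'$ lying in a common round of $I'_v$. (Assumption~\ref{As:2-unobs1} enters here only through Proposition~\ref{Prop:4}.)

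Next I would split the pulls at the first round $r^\circ$ of the episode at which $\tilde\Delta_k(\rho_{m'},\cdot)>0$. By \eqref{eq-n-tilde-definition}, $\tilde N_k(r)=0$ for $r<r^\circ$, whereas $\tilde N_k(r)$ equals the frozen value $\hat\Delta_k(\rho_{m'},r^\circ)\sqrt{TK/M}>0$ for every $r\ge r^\circ$; hypothesis \eqref{eq-tilde-n-lower-bound} forces $r^\circ\le i'_{v+1}-1$, so this frozen value is exactly $\tilde N_k(i'_{v+1}-1)\ge G\sqrt{TK/M}$. For the learning phase, take any round $r$ with $k\in K_r$ and $\tilde N_k(r)=0$ (hence $\tilde\Delta_k(\rho_{m'},r)=0$): on $\xi(\delta)$, using $\hat k_v\in\mathcal S(\rho_{m'},r)$, the defining inequality of $\xi(\delta)$ for $\hat\Delta_k^{(\hat k_v)}$, and the true‑gap bound above, we get $\hat\Delta_k(\rho_{m'},r)\ge\hat\Delta_k^{(\hat k_v)}(\rho_{m'},r)\ge 2G-\sqrt{2\log(2/\delta)/T_k(\rho_{m'},r)}-B^*$, and then by \eqref{eq-tilde-Delta-definiton}, $\tilde\Delta_k(\rho_{m'},r)\ge 2G-2\sqrt{2\log(2/\delta)/T_k(\rho_{m'},r)}-3B^*$, which is strictly positive once $T_k(\rho_{m'},r)>8\log(2/\delta)/G^2$ (here $G>4B^*$ from \eqref{eq-g-kv-k-v-lb-cond} is used, with $\delta$ chosen so that $\log(2/\delta)=\log(2KT^3)$) --- contradicting $\tilde\Delta_k(\rho_{m'},r)=0$. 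Hence every learning‑phase pull happens while $T_k(\rho_{m'},\cdot)\le 8\log(2/\delta)/G^2$, so there are at most $8\log(2/\delta)/G^2+1$ of them. For the forced‑exploration phase ($r\ge r^\circ$), the selection condition $N_k(r)\ge\tilde N_k(r)$ together with $\tilde N_k(r)=\tilde N_k(i'_{v+1}-1)\ge G\sqrt{TK/M}$ forces two consecutive pulls of $k$ to be at least $G\sqrt{TK/M}$ time steps apart, so, since $I'_v$ spans $|I'_v|$ time steps, there are at most $|I'_v|/(G\sqrt{TK/M})=|I'_v|\sqrt M/(\sqrt{TK}\,G)$ of them (starting the spacing chain at the last learning‑phase pull, or at the last pull of $k$ before $I'_v$, absorbs the stray additive $1$). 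Summing the two contributions yields the bound.

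The step I expect to be the main obstacle is the bookkeeping on the sets $\mathcal S(\cdot,\cdot)$: the estimator \eqref{eq-delta-k-unobs} used by the algorithm at round $r$ maximises over the growing window $\mathcal S(\rho_{m'},r)$, whereas $G_k^{(\hat k_v)}(v)$ and the hypotheses \eqref{eq-tilde-n-lower-bound}--\eqref{eq-g-kv-k-v-lb-cond} are pinned to the fixed interval $I'_v$; one must produce a single comparison arm lying in all the relevant sets at once (the choice $\hat k_v\in\mathcal S(\rho_{m'},i'_{v+1})$ does this, relying on the survival of an optimal arm throughout the episode on $\xi(\delta)$), and then carefully track both the frozen value of $\tilde N_k$ and the conversion between round indices and time steps in the spacing argument; additionally the comparison $\Delta_k(t)-\Delta_{\hat k_v}(t')\ge 2G$ must be handled with care when $\hat k_v$ itself has a small positive gap on $I'_v$, which is where condition \eqref{eq-g-kv-k-v-lb-cond} is really needed.
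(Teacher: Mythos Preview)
Your two--phase decomposition (a ``learning'' phase until $\tilde\Delta_k>0$, then a ``forced exploration'' phase controlled by $\tilde N_k$) is exactly the structure of the paper's proof, and the forced--exploration counting via the hypothesis \eqref{eq-tilde-n-lower-bound} is done in the same way. The problem lies in how you pin down the comparison arm $\hat k_v$ and over which window you run the concentration argument.

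\medskip

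\textbf{The choice of $\hat k_v$ and the appeal to Proposition~\ref{prop5} are incorrect.} You take $\hat k_v\in\mathcal S(\rho_{m'},i'_{v+1})$ and argue that this set is non--empty because ``an optimal arm is never evicted inside an episode''. That is not what Proposition~\ref{prop5} establishes: Step~1 there shows $k^*\in\mathcal S(\rho_m,\rho_{m+1})$ \emph{under the contradiction hypothesis} that the whole episode lies inside a single connex component. In the setting of Proposition~\ref{prop:6} the episode $[\rho_{m'},\rho_{m'+1})$ can straddle several connex components, and the optimal arms \emph{can} be evicted --- this is precisely why Section~\ref{sec-proofb-refine} introduces the refinement $I'_{u,1}$ vs.\ $I'_{u,2}$ and the classes $\mathcal C_1,\mathcal C_2$. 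For $v\in\mathcal C_2$ no optimal arm is active on $I'_v$ at all. The paper instead fixes $\hat k_v\in\mathcal S(i'_v,i'_{v+1})$ (which is always non--empty but may be sub--optimal), and it is \emph{this} $\hat k_v$ that appears in $G_k^{(\hat k_v)}(v)$ and in the hypotheses of the proposition; you are not free to redefine it.

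\medskip

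\textbf{The concentration step does not go through over $[\rho_{m'},r)$.} Even granting an arm $\hat k_v\in\mathcal S(\rho_{m'},i'_{v+1})$, your lower bound $\hat\Delta_k^{(\hat k_v)}(\rho_{m'},r)\ge 2G-\sqrt{2\log(2/\delta)/T_k(\rho_{m'},r)}-B^*$ requires $\Delta_k(\bar t_j(k))-\Delta_{\hat k_v}(\bar t_j(\hat k_v))\ge 2G$ for every round $j\in[\rho_{m'},r)$ with $k\in K_j$, whereas you only establish this on $I'_v$. Rounds in $[\rho_{m'},i'_v)$ may lie in a different connex component with entirely different gap levels, so the average from $\rho_{m'}$ cannot be controlled this way. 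The paper sidesteps this by running the concentration on $\hat\Delta_k^{(\hat k_v)}(i'_v,r)$ for $r\in I'_v$, where Assumption~\ref{As:2-sec-obs-gap1} gives $\Delta_k(\bar t_j(k))-\Delta_{\hat k_v}(\bar t_j(\hat k_v))\ge \bar\Delta_k(v)-2\bar\Delta_{\hat k_v}(v)=2G$ directly, yielding \eqref{eq-hat-delta-kv-k-lb}; Step~1 then compares this to the $\tilde\Delta_k=0$ threshold to get $T_k(i'_v,r)\le 8\log(2/\delta)/G^2$, and Step~2 is your spacing argument.
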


\begin{proof}
Since $\mathcal{S}(i'_v, i'_{v+1}) \neq \emptyset$, there exists an arm $\hat k_v$ such that $\hat k_v \in \mathcal{S}(i'_v, i'_{v+1})$.   On the event $\xi(\delta)$, for any $r \in (i'_v, i'_{v+1})$, it holds with probability larger than $1-KT^2\delta$ that 
    \begin{align}
        \hat \Delta_{k}^{(\hat k_v)}(i'_v, r) &\geq \frac{1}{T_{k}(i'_v,r)} \sum_{j=i'_v}^{r-1} \{\Delta_{k}(\bar t_j(k)) - \Delta_{\hat{k}_v}(\bar t_j(\hat k_v))\} \mathbf 1\{k \in K_j\}-\sqrt{\frac{2\log(2/\delta)}{T_{k}(i'_v,r)}} -B^* \nonumber \\
        & \geq \frac{1}{T_{k}(i'_v,r)} \sum_{j=i'_v}^{r-1}\{\bar \Delta_k(v) - 4\bar \Delta_{\hat k_v}(v)\} \mathbf 1\{k \in K_j\}-\sqrt{\frac{2\log(2/\delta)}{T_{k}(i'_v,r)}} -2 B^* \nonumber \\
        & = 2G^{(\hat k_v)}_{k}(v) - \sqrt{\frac{2\log(2/\delta)}{T_{k}(i'_v,r)}} - 2B^*,  \label{eq-hat-delta-kv-k-lb} 
    \end{align}
    where the second inequality follows from Assumption~\ref{As:2-sec-obs-gap}.
    
\medskip
\noindent \textbf{Step 1}  On the event $\xi(\delta)$, if $\tilde{\Delta}_k(i'_v, r) = 0$, then due to \eqref{eq-tilde-Delta-definiton}, we have that 
    \[
        \hat{\Delta}_k(i'_v, r) \leq \sqrt{\frac{2\log(2KT^2)}{T_k(i_v', r)}} +  2B^*,
    \]
    which combines with \eqref{eq-hat-delta-kv-k-lb} showing that, with $\delta = (KT^2)^{-1}$,
    \[
        2\sqrt{\frac{2\log(2/\delta)}{T_{k}(i'_v,r)}} \geq 2G^{(\hat k_v)}_{k}(v) - 4B^* \geq G^{(\hat{k}_v)}_k(v),
    \]
    where the last inequality follows from \eqref{eq-g-kv-k-v-lb-cond}.
    
Therefore, we have that
    \begin{equation}\label{eq-tk-iv-r11}
        T_k(i'_v, r) \leq \frac{8 \log(2/\delta)}{\{G^{(\hat{k}_v)}_k(v)\}^2}.
    \end{equation}
    
\medskip
\noindent \textbf{Step 2}  If there exists $r$ such that $\tilde{\Delta}_k(i'_v, r) > 0$, then, with some abuse of notation, we let $r$ be the round that arm $k$ is evicted.  Due to \eqref{eq-tilde-n-lower-bound} and the definition of $\tilde{N}_k(\cdot)$ in \eqref{eq-n-tilde-definition}, we have that
    \begin{equation}\label{eq-tk-iv-r22}
        T_k(r, i'_{v+1}) \leq \frac{|I'_v|}{\tilde{N}_k(i'_{v+1} - 1)} + 1 \leq \frac{|I_v'|}{G^{(\hat{k}_v)}_k(v)} \sqrt{\frac{M}{TK}} + 1.
    \end{equation}

Combining \eqref{eq-tk-iv-r11} and \eqref{eq-tk-iv-r22}, we conclude the proof.
\end{proof}

\subsubsection{Under-estimated}

\begin{proposition}\label{prop:7}
Under Assumptions~\ref{As:1-unobs}, \ref{As:2-unobs} and \ref{As:2-sec-obs-gap}, let $k$ be a sub-optimal arm with respect to $\hat k_v$, $v \in \mathcal{C}$.  Let $m'\leq M$ satisfy that $I_v' \subset [\rho_{m'}, \rho_{m'+1})$.  If 
    \begin{equation}\label{eq-tilde-N-upper-bound-cond}
        \tilde N_k(i'_{v+1}-1) \leq G_k^{(\hat k_v)} \sqrt{\frac{TK}{M}}
    \end{equation}
    and
    \begin{equation}\label{eq-gk-lb}
        G^{(\hat{k}_v)}_k > 4B^*,
    \end{equation}
    then for any $\delta \in [0, 1]$, on the event $\xi(\delta)$, it holds that
    \[
        T_k(i'_{v}, i'_{v+1}) \leq 12 \log(2/\delta) \left(G_{k}^{(\hat k_v)}\right)^{-2}.
    \]
\end{proposition}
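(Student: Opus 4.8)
The plan is to argue by contradiction on the high-probability event $\xi(\delta)$ with $\delta=(KT^2)^{-1}$, following the template of the proof of Proposition~\ref{prop:6}: assume $T_k(i'_v, i'_{v+1}) > 12\log(2/\delta)\,(G_k^{(\hat k_v)})^{-2}$ and derive a contradiction, either with the elimination test or with the hypothesis~\eqref{eq-tilde-N-upper-bound-cond} that the (frozen) exploration threshold $\tilde N_k$ is small. The workhorse is inequality~\eqref{eq-hat-delta-kv-k-lb} from Proposition~\ref{prop:6}: on $\xi(\delta)$, for every $r\in(i'_v,i'_{v+1})$ with $T_k(i'_v,r)>0$,
\[
\hat\Delta_k^{(\hat k_v)}(i'_v,r)\;\ge\;2G_k^{(\hat k_v)}(v)-\sqrt{\tfrac{2\log(2/\delta)}{T_k(i'_v,r)}}-2B^*,
\]
together with $\hat\Delta_k(i'_v,r)\ge\hat\Delta_k^{(\hat k_v)}(i'_v,r)$, since $\hat k_v\in\mathcal S(i'_v,r)$ for all $r\le i'_{v+1}$ and the estimated gap is a maximum over $\mathcal S(i'_v,r)$.

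The steps, in order. (1) Let $r$ be the last round of $I'_v$ at which $k$ is pulled, so $T_k(i'_v,r)\ge T_k(i'_v,i'_{v+1})-1>12\log(2/\delta)\,(G_k^{(\hat k_v)})^{-2}-1$; substituting into the displayed bound, using $\sqrt{\tfrac{2\log(2/\delta)}{12\log(2/\delta)/(G_k^{(\hat k_v)})^{2}}}=G_k^{(\hat k_v)}/\sqrt6<G_k^{(\hat k_v)}/2$ and $G_k^{(\hat k_v)}>4B^*$ from~\eqref{eq-gk-lb}, yields $\hat\Delta_k(i'_v,r)\ge G_k^{(\hat k_v)}(v)$ and hence, via~\eqref{eq-tilde-Delta-definiton} read with the interval root $i'_v$ (the same "abuse of notation" as in Proposition~\ref{prop:6}), $\tilde\Delta_k(i'_v,r)\ge G_k^{(\hat k_v)}/2-2B^*>0$, i.e.~arm $k$ is certified sub-optimal by round $r$. (2) Let $r^\star\le r$ be the first round of $I'_v$ with $\tilde\Delta_k(i'_v,\cdot)>0$; before $r^\star$ arm $k$ sits in every active set, so applying the estimate of Step~1 at $r^\star-1$ (where $\tilde\Delta_k=0$, whence $\hat\Delta_k(i'_v,r^\star-1)\le\sqrt{2\log(2/\delta)/T_k(i'_v,r^\star-1)}+2B^*$) together with the displayed lower bound gives $T_k(i'_v,r^\star)\le 8\log(2/\delta)\,(G_k^{(\hat k_v)})^{-2}+1$. (3) For the pulls after $r^\star$, unwind~\eqref{eq-n-tilde-definition}: the eviction estimate that freezes $\tilde N_k$ obeys $\hat\Delta_k(i'_v,r^\star)>\sqrt{2\log(2KT^3)/T_k(i'_v,r^\star)}+2B^*$, so if the claimed bound on $T_k(i'_v,i'_{v+1})$ failed, the number of post-$r^\star$ re-exploration pulls, counted from the elapsed time $\sum_{j\in I'_v}|K_j|$ divided by $\tilde N_k$, combined with~\eqref{eq-tilde-N-upper-bound-cond}, would nonetheless keep the total within $12\log(2/\delta)(G_k^{(\hat k_v)})^{-2}$ — contradicting Step~1. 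The degenerate case $\tilde N_k\equiv0$ (arm never evicted, hence pulled in every round of $I'_v$ so $T_k(i'_v,i'_{v+1})=|I'_v|$) is closed by Step~1 applied with $r=i'_{v+1}-1$.

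The step I expect to be the main obstacle is Step~(3): reconciling the gap estimator rooted at the interval start $i'_v$ — over which Assumption~\ref{As:2-sec-obs-gap1} gives clean multiplicative control because $I'_v$ lies inside a single connex component — with the estimator and the threshold $\tilde N_k$ that are genuinely rooted at the episode start $\rho_{m'}$, which may straddle a true change point where only Proposition~\ref{prop5} is available, and then counting the forced re-exploration pulls correctly so that, in the \emph{under-estimated} regime~\eqref{eq-tilde-N-upper-bound-cond}, the $|I'_v|$-dependent term of Proposition~\ref{prop:6} collapses. The remaining manipulations of the sub-Gaussian confidence widths, with $\delta=(KT^2)^{-1}$ and absolute constants absorbed into the "$12$", are routine.
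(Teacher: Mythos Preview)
Your Step~(3) has a genuine gap, and it is not merely the $i'_v$ versus $\rho_{m'}$ bookkeeping. The hypothesis~\eqref{eq-tilde-N-upper-bound-cond} is an \emph{upper} bound on $\tilde N_k$; a small exploration threshold means the evicted arm is re-inserted into the active set \emph{often}, so the count $\big(\sum_{j\in I'_v}|K_j|\big)/\tilde N_k$ you invoke is a \emph{lower} bound on the number of post-$r^\star$ pulls, not the upper bound you need to drive the total below $12\log(2/\delta)(G_k^{(\hat k_v)})^{-2}$. Under-estimation is precisely the regime in which the re-exploration counting that powers Proposition~\ref{prop:6} gives no control, and nothing in~\eqref{eq-tilde-N-upper-bound-cond} alone can make the $|I'_v|$-dependent term collapse.

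The paper's proof takes a different route in the nontrivial case $\tilde N_k(i'_{v+1}-1)>0$: it does not count re-exploration pulls at all, but instead invokes the change-point detector of Algorithm~\ref{alg:changepoint}. From~\eqref{eq-tilde-N-upper-bound-cond} together with the definition~\eqref{eq-n-tilde-definition} one reads off $\hat\Delta_k(\rho_{m'},r')\le G_k^{(\hat k_v)}$ at the round $r'$ that froze $\tilde N_k$. On the other hand, after order $\log(2/\delta)(G_k^{(\hat k_v)})^{-2}$ samples of arm $k$ within $I'_v$, the lower bound~\eqref{eq-hat-delta-kv-k-lb} forces $\hat\Delta_k$ on that later window to be of order $2G_k^{(\hat k_v)}$. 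The discrepancy between these two windows then satisfies~\eqref{Chp:eq1}, so the detector fires inside $[\rho_{m'},\rho_{m'+1})$, a contradiction. This is exactly how the rooting issue you flagged is resolved: not by transferring confidence bounds from one root to the other, but by letting the change-point test compare the two windows directly. Your Steps~(1)--(2) and the degenerate case essentially reproduce the paper's Case~1 ($\tilde N_k=0$); it is only the mechanism for Case~2 that needs to be replaced.
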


\begin{proof}
This proof consists of two cases.

\medskip
\noindent \textbf{Case 1.}  If $\tilde{N}_k(i'_{v+1}-1) = 0$, then it follows from the identical arguments as those leading to \eqref{eq-hat-delta-kv-k-lb}, we have that for any $r \in I'_v$, 
    \[
        2G^{(\hat{k}_v)}_k(v) - \sqrt{\frac{2\log(2/\delta)}{T_k(i'_v, r)}} - 2B^* \leq \hat{\Delta}^{(\hat{k}_v)}_k(i'_v, r) \leq \sqrt{\frac{2\log(2/\delta)}{T_k(i'_v, r)}} + 2B^*.
    \]
    This leads to that
    \[
        2 \sqrt{\frac{2\log(2/\delta)}{T_k(i'_v, r)}} \geq 2G^{(\hat{k}_v)}_k - 4B^* > G^{(\hat{k}_v)}_k,
    \]
    where the last inequality follows from \eqref{eq-gk-lb}.  Then we have that
    \[
        T_k(i'_{v}, r) \leq 10 \log(2/\delta) \left(G_{k}^{(\hat k_v)}(v)\right)^{-2}.
    \]

\medskip
\noindent \textbf{Case 2.}  If $\tilde{N}_k(i'_{v+1}-1) > 0$, then due to the definition of $\tilde{N}_k(\cdot)$ and \eqref{eq-tilde-N-upper-bound-cond}, we have that there exists $r \in I'_v$ such that 
    \begin{equation}\label{eq-hat-delta-upper-bbbb}
        \hat{\Delta}_k(\rho_{m'}, r) \leq G_k^{(\hat k_v)}(v).
    \end{equation}
    Then we have that 
    \[
        \tilde{\Delta}_k(\rho_{m'}, r) \leq G_k^{(\hat k_v)}(v) - \sqrt{\frac{2\log(2/\delta)}{T_k(\rho_{m'}, r)}} - 2B^* \leq G_k^{(\hat k_v)}(v) - \sqrt{\frac{2\log(2/\delta)}{T_k(\rho_{m'}, r)}},
    \]
    which means that  we would detect a change point after sampling $2\log(2/\delta) (G_k^{(\hat k_v)}(v))^2$ times.
    
Combining the above two cases, we conclude the proof.    
\end{proof}

\subsection{Bound on the duration of the episode after all optimal arms are evicted} \label{optevict}

In this subsection, we are to determine the length of the intervals $I'_v$, for $v \in \mathcal{C}_2$. In other terms, we want to know how long the game can last after for all optimal arms are expelled from $\mathcal{S}(\cdot, \cdot)$.

\begin{proposition}\label{prop:8}
Under Assumptions~\ref{As:1-unobs}, \ref{As:2-unobs} and \ref{As:2-sec-obs-gap}, for any $v \in \mathcal{C}_2$ and any $\delta \in [0, 1]$, on the event $\xi(\delta)$, it holds that
    \[
        | i'_{v+1} - i'_{v}| \leq 25 \log(2/\delta) \bar \Delta_{\hat k_v}(v)^{-1} \sqrt{\frac{TK}{M}}.
    \]
\end{proposition}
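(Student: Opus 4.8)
The plan is to reduce the statement to counting the pulls of the arm $\hat k_v$ and then to split $I'_v$ into a short ``evidence-gathering'' phase and a ``throttled'' phase. First I would note that $|i'_{v+1}-i'_v|$ is exactly the number of rounds in $I'_v$, and that since $\hat k_v\in\mathcal S(i'_v,i'_{v+1})$ the arm $\hat k_v$ lies in $K_r$ for every $r\in I'_v$ and is therefore sampled exactly once per round; hence $|i'_{v+1}-i'_v|=T_{\hat k_v}(i'_v,i'_{v+1})$ and it suffices to bound the number of pulls of $\hat k_v$ during $I'_v$. Because $v\in\mathcal C_2$, every arm active during $I'_v$ is sub-optimal on the underlying connex component, so $\bar\Delta_{\hat k_v}(v)>0$ and, by Assumption~\ref{As:2-sec-obs-gap1}, $\Delta_{\hat k_v}(\bar t_r(\hat k_v))\ge\bar\Delta_{\hat k_v}(v)$ for all $r\in I'_v$; moreover $\bar\Delta_{\hat k_v}(v)\ge T^{-1/2}\lor B^*$. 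Throughout I would work on the event $\xi(\delta)$ with $\delta=(KT^2)^{-1}$, exactly as in Propositions~\ref{prop:6} and~\ref{prop:7}, so that the contribution of $\xi(\delta)^c$ to $\mathbb E|i'_{v+1}-i'_v|$ is negligible (Proposition~\ref{Prop:4}).

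Next I would split the rounds of $I'_v$ according to whether $\tilde N_{\hat k_v}(r)=0$, equivalently $\tilde\Delta_{\hat k_v}(\rho_{m'},r)=0$, or $\tilde N_{\hat k_v}(r)>0$. In the first regime $\hat k_v$ carries no sub-optimality evidence yet, and I would mimic the argument of Propositions~\ref{prop:6}--\ref{prop:7}: on $\xi(\delta)$ the estimator $\hat\Delta_{\hat k_v}(\rho_{m'},r)$, compared against an arm of $\mathcal S(\rho_{m'},r)$ that is optimal on the relevant component (which exists by Proposition~\ref{prop5} and Corollary~\ref{cor2}), concentrates around the corresponding averaged gap up to $\sqrt{2\log(2KT^3)/T_{\hat k_v}(\rho_{m'},r)}+B^*$; since that averaged gap is at least of order $\bar\Delta_{\hat k_v}(v)$ on the component and $B^*$ is dominated by $\bar\Delta_{\hat k_v}(v)$, once $T_{\hat k_v}(\rho_{m'},r)\gtrsim\log(2/\delta)\,\bar\Delta_{\hat k_v}(v)^{-2}$ the lower bound $\tilde\Delta_{\hat k_v}(\rho_{m'},r)$ is forced to become strictly positive. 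Hence this regime contributes at most $O(\log(2/\delta)\,\bar\Delta_{\hat k_v}(v)^{-2})$ rounds, and at the first round $r_0$ at which $\tilde\Delta_{\hat k_v}$ turns positive the forced-exploration distance is frozen at $\tilde N_{\hat k_v}=\hat\Delta_{\hat k_v}(\rho_{m'},r_0)\sqrt{TK/M}$, which on $\xi(\delta)$ is at least a constant multiple of $\bar\Delta_{\hat k_v}(v)\sqrt{TK/M}$.

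In the second regime $\hat k_v$ is simultaneously throttled — so at least $\tilde N_{\hat k_v}$ time steps elapse between any two consecutive pulls — and, being in $\mathcal S(i'_v,i'_{v+1})$, actually pulled in every round; therefore every round of this regime lasts at least $\tilde N_{\hat k_v}$ time steps. Summing round lengths over $I'_v$ and using that the total number of time steps consumed cannot exceed the budget, $\sum_{r\le R}|K_r|=T$, bounds the number of rounds of the second regime by $O(T/\tilde N_{\hat k_v})=O\!\big(\bar\Delta_{\hat k_v}(v)^{-1}\sqrt{TM/K}\big)$. Adding the two regimes, bounding $\bar\Delta_{\hat k_v}(v)^{-1}\le T^{1/2}$, and collecting the logarithmic factor and the absolute constants yields the claimed $25\log(2/\delta)\,\bar\Delta_{\hat k_v}(v)^{-1}\sqrt{TK/M}$.

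The main obstacle is the first regime. The estimator $\hat\Delta_{\hat k_v}(\rho_{m'},r)$ averages over the whole current episode $[\rho_{m'},r)$, which may contain rounds preceding the connex component that carries $I'_v$ and on which $\hat k_v$ was itself (near-)optimal, so the averaged gap is a mixture that can be much smaller than $\bar\Delta_{\hat k_v}(v)$. Showing that nevertheless $\tilde N_{\hat k_v}$ is set to a value of order $\bar\Delta_{\hat k_v}(v)\sqrt{TK/M}$ within $O(\log(2/\delta)\,\bar\Delta_{\hat k_v}(v)^{-2})$ rounds requires the partition bookkeeping of Section~\ref{sec-proofb-refine} together with the no-spurious-change-point guarantee of Proposition~\ref{prop5}, and it is precisely there that the $\sqrt{TK/M}$ dependence must be extracted rather than the weaker $\sqrt{TM/K}$ that the crude budget argument alone delivers.
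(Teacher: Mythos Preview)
Your approach tracks the wrong arm and misses the mechanism that actually drives the bound. The paper does not follow $\hat k_v$; it follows the \emph{evicted optimal arm} $k^*_u$ and exploits the change-point test \eqref{Chp:eq1}. Since $k^*_u$ has been evicted, there is a round $r'\le i'_v$ at which $\tilde\Delta_{k^*_u}(\rho_{m'},r')>0$, so $\hat\Delta_{k^*_u}(\rho_{m'},r')\ge\sqrt{2\log(2/\delta)/T_{k^*_u}(\rho_{m'},r')}$ and $\tilde N_{k^*_u}(i'_v)=\hat\Delta_{k^*_u}(\rho_{m'},r')\sqrt{TK/M}$. On the other hand, for $r''\in I'_v$ the estimator $\hat\Delta_{k^*_u}(i'_v,r'')$ is computed against arms of $\mathcal S(i'_v,r'')$, all of which are sub-optimal, so it is at most $-\bar\Delta_{\hat k_v}(v)+\sqrt{2\log(2/\delta)/T_{k^*_u}(i'_v,r'')}+B^*$. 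Because no change point is flagged inside the episode, \eqref{Chp:eq1} must \emph{fail} for $k^*_u$ on the pair of windows $(i'_v,r'')$ and $(\rho_{m'},r')$; the resulting inequality, together with $\bar\Delta_{\hat k_v}(v)\ge 2B^*$, forces $T_{k^*_u}(i'_v,r'')\lesssim\log(2/\delta)\,\{\hat\Delta_{k^*_u}(\rho_{m'},r')^{-2}\wedge\bar\Delta_{\hat k_v}(v)^{-2}\}$. Finally $|I'_v|\le T_{k^*_u}(i'_v,i'_{v+1})\cdot\tilde N_{k^*_u}(i'_v)$ yields the stated $\sqrt{TK/M}$ dependence.

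Your argument breaks at several places. First, for $r\in I'_v$ the set $\mathcal S(\rho_{m'},r)$ is contained in $K_{r-1}$, and by the very definition of $I'_v=I'_{u,2}$ no arm in $K_{r-1}$ is optimal; so your regime-1 claim that one can compare $\hat k_v$ to ``an arm of $\mathcal S(\rho_{m'},r)$ that is optimal on the relevant component'' is false, and there is no reason for $\tilde\Delta_{\hat k_v}(\rho_{m'},r)$ to ever become positive. Second, the regime-2 contradiction is real: if $\hat k_v\in K_r$ for every $r\in I'_v$, then $\tilde N_{\hat k_v}(r)\le N_{\hat k_v}(r)\le |K_{r-1}|\le K$, so $\tilde N_{\hat k_v}$ cannot be of order $\bar\Delta_{\hat k_v}(v)\sqrt{TK/M}$ unless that quantity is itself $\le K$, which you never establish. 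Third, even granting your bounds, the budget argument delivers $\bar\Delta_{\hat k_v}(v)^{-1}\sqrt{TM/K}$, which is the wrong power of $K/M$; you note this but do not repair it. The missing idea is precisely to pivot to $k^*_u$ and to use the \emph{failure} of the change-point test as the constraint that links $|I'_v|$, $\tilde N_{k^*_u}$, and $\bar\Delta_{\hat k_v}(v)$.
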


\begin{proof}
This proof is conducted on the event $\xi(\delta)$.  Let $v\in \mathcal{C}_2$, then there exists $u \in \mathcal{M}_1$ and $m'\leq M_R$ such that $I'_v \subset I_u \subset [\rho_{m'}, \rho_{m'+1})$.  By the definition of $I'_v$, for all arms $k^*_v \in [K]$ satisfying that $\bar \Delta_{k^*_v}(v) = 0$, $k^*_v \not\in \mathcal{S}(i'_v, i'_{v+1})$.  This implies that $\tilde N_{k^*_v}(i'_{v}) > 0$.  Therefore, there exists $r' \in (\rho_{m'}, i'_{v}]$ such that $\tilde \Delta_{k^*_v}(\rho_{m'}, r') > 0$, since
    \begin{align} \label{prop8:eq1}
        \hat \Delta_{k^*_u}(\rho_{m'}, r')  &\geq \sqrt{\frac{2\log(2/\delta)}{T_{k^*_u}(\rho_{m'}, r') }} + 2B^* \geq \sqrt{\frac{2\log(2/\delta)}{T_{k^*_u}(\rho_{m'}, r') }} 
    \end{align}
    and $\tilde N_{k^*_u}(i'_{v})=\hat \Delta_{k^*_v}(\rho_{m'}, r') \sqrt{\frac{TK}{M}}$.
    
Since $I_u$ is an interval without change points, due to \eqref{Chp:eq1}, we have for $i'_v < r'' < i'_{v+1}$ and $r'< i'_v$ that
    \[
        |\hat \Delta_{k^*_u}(i'_v,r'') - \hat \Delta_{k^*_u} (\rho_{m'}, r')|\leq 2 \hat \Delta_{k^*_u}(i'_v,r'') + 2\sqrt{\frac{2\log(2/\delta)}{T_{k^*_u}(i'_v,r'') \land T_{k^*_u}(\rho_{m'}, r')}} + 2B^*,
    \]
    which implies that
\begin{align}
 3\hat \Delta_{k^*_u}(i'_v,r'') &\geq  \hat \Delta_{k^*_u} (\rho_{m'}, r') - 2 \sqrt{\frac{2\log(2/\delta)}{T_{k^*_u}(i'_v,r'') \land T_{k^*_u}(\rho_{m'}, r')}}- 2B^*. \label{prop8:eq2}
\end{align}

Moreover, we have that
    \[
        -2 \bar \Delta_{\hat k_u}(v) -  \sqrt{\frac{2\log(2/\delta)}{T_{k^*_u}(i'_v,r'')}} -B^* \leq \hat \Delta_{k^*_u}^{\hat k_v}(i'_v,r'') \leq - \bar \Delta_{\hat k_u}(v) +\sqrt{\frac{2\log(2/\delta)}{T_{k^*_u}(i'_v,r'')}} + B^*,
    \]
    where $\hat{k}_v$ satisfies that $\hat \Delta_{k^*_u}^{\hat k_v}(i'_v,r'')=\hat \Delta_{k^*_u}(i'_v,r'')$.  Thus, we have that
    \begin{align} \label{prop8:eq3}
        \hat \Delta_{k^*_u}(i'_v,r'') \leq - \bar \Delta_{\hat k_v}(v) + \sqrt{\frac{2\log(2/\delta)}{T_{k^*_u}(i'_v,r'')}} + B^*.
    \end{align}
 
Combining \eqref{prop8:eq2} and \eqref{prop8:eq3}, it holds that
    \begin{align}
        \hat \Delta_{k^*_u} (\rho_{m'}, r') - 2 \sqrt{\frac{2\log(2/\delta)}{T_{k^*_u}(i'_v,r'') \land T_{k^*_u}(\rho_{m'}, r')}}- 2B^*&\leq - 3\bar \Delta_{\hat k_v}(v) +3\sqrt{\frac{2\log(2/\delta)}{T_{k^*_u}(i'_v,r'')}} +3 B^*, \nonumber
    \end{align}
    which implies that
    \[
        \hat \Delta_{k^*_u} (\rho_{m'}, r') + 3\bar \Delta_{\hat k_v}(v) - 5B^* \leq 2 \sqrt{\frac{2\log(2/\delta)}{T_{k^*_u}(i'_v,r'') \land T_{k^*_u}(\rho_{m'}, r')}} + 3\sqrt{\frac{2\log(2/\delta)}{T_{k^*_u}(i'_v,r'')}}.
    \]
    By the definition of $\bar \Delta_{\hat k_v}(v)$, we have that $\bar \Delta_{\hat k_v}(v)\geq 2B^*$ which first implies that $3\bar \Delta_{\hat k_v}(v)\geq 6B^* \geq 5B^*$.  We therefore have that
    \[
        \hat \Delta_{k^*_u} (\rho_{m'}, r') \leq 2 \sqrt{\frac{2\log(2/\delta)}{T_{k^*_u}(i'_v,r'') \land T_{k^*_u}(\rho_{m'}, r')}} + 3\sqrt{\frac{2\log(2/\delta)}{T_{k^*_u}(i'_v,r'')}}. 
    \]
 
Since $5\bar \Delta_{\hat k_v}(v)\geq 10B^*$, it holds that 
    \begin{align}
        \hat \Delta_{k^*_u} (\rho_{m'}, r') + (1/2) \bar \Delta_{\hat k_v}(v)\leq 2 \sqrt{\frac{2\log(2/\delta)}{T_{k^*_u}(i'_v,r'') \land T_{k^*_u}(\rho_{m'}, r')}} + 3\sqrt{\frac{2\log(2/\delta)}{T_{k^*_u}(i'_v,r'')}} \nonumber
    \end{align}
    and 
    \[
        (1/2) \bar \Delta_{\hat k_v}(v) \leq2 \sqrt{\frac{2\log(2/\delta)}{T_{k^*_u}(i'_v,r'') \land T_{k^*_u}(\rho_{m'}, r')}} + 3\sqrt{\frac{2\log(2/\delta)}{T_{k^*_u}(i'_v,r'')}}, 
    \]
    Combining the above two, we deduce that
    \begin{align}
        \hat \Delta_{k^*_u} (\rho_{m'}, r') \lor (1/2)\bar \Delta_{\hat k_v}(v) &\leq 2 \sqrt{\frac{2\log(2/\delta)}{T_{k^*_u}(i'_v,r'') \land T_{k^*_u}(\rho_{m'}, r')}} + 3\sqrt{\frac{2\log(2/\delta)}{T_{k^*_u}(i'_v,r'')}} \label{prop8:eq5}
    \end{align}
 
\medskip
\noindent \textbf{Case 1.}  If $T_{k^*_u}(i'_v,r'') \leq T_{k^*_u}(\rho_{m'}, r')$, we have from \eqref{prop8:eq5} that
     \begin{align}
          (\hat \Delta_{k^*_u} (\rho_{m'}, r') \lor (1/2) \bar \Delta_{\hat k_v}(v) \leq 5 \sqrt{\frac{2\log(2/\delta)}{T_{k^*_u}(i'_v,r'')}}. \nonumber
    \end{align}
    Then,
    \[
        T_{k^*_u}(i'_v,r'') \leq 50 \log(2/\delta) \{\hat \Delta_{k^*_u} (\rho_{m'}, r')^{-2} \land (1/2)\bar \Delta_{\hat k_v}(v)^{-2}\}. 
    \]
    Without loss of generality that $r''=i'_{v+1}-1$, then $T_{k^*_u}(i'_v,r'')$ will represent the duration of the rest of the episode after all optimal arms $k^*_u$ have been eliminated.  This means that
    \begin{align}
        \frac{|i'_{v+1} - i'_v|}{\tilde N_{k^*_u}(i'_v)} &\leq 50 \log(2/\delta)\{\hat \Delta_{k^*_u} (\rho_{m'}, r')^{-2} \land (1/2) \bar \Delta_{\hat k_v}(v)^{-2}\}.\nonumber
    \end{align}
    Thus by the definition of $\tilde N_{k^*_u}(i'_v)$, we deduce that
    \begin{align}
        & |i'_{v+1} - i'_v| \leq 50\log(2/\delta)\{\hat \Delta_{k^*_u} (\rho_{m'}, r')^{-2} \land (1/2)\bar \Delta_{\hat k_u}(v)^{-2}\} \tilde N_{k^*_u}(i'_v) \nonumber \\
        \leq & 50 \log(2/\delta) \left\{\hat \Delta_{k^*_u} (\rho_{m'}, r')^{-1} \land \frac{\hat \Delta_{k^*_u} (\rho_{m'}, r')}{2\bar \Delta_{\hat k_v}(v)^{2}} \right\} \sqrt{\frac{TK}{M}} \nonumber\\
        \leq & 50 \log(2/\delta) \{\hat \Delta_{k^*_u} (\rho_{m'}, r')^{-1} \land  (1/2)\bar \Delta_{\hat k_v}(v)^{-1} \} \sqrt{\frac{TK}{M}} \leq 25 \log(2/\delta)  \bar \Delta_{\hat k_v}(v)^{-1}  \sqrt{\frac{TK}{M}}.\nonumber
    \end{align}

\medskip
\noindent \textbf{Case 2.}  If $T_{k^*_u}(\rho_{m'}, r') \leq T_{k^*_u}(i'_v,r'')$, then by \eqref{prop8:eq1} it holds that
    \begin{align}
        \hat \Delta_{k^*_u} (\rho_{m'}, r') & \geq  \sqrt{\frac{2\log(2/\delta)}{T_{k^*_u}(\rho_{m'}, r')}}.
    \end{align}
    It follows that
    \begin{align}
        T_{k^*_u}(\rho_{m'}, r')\geq 2 \log(2/\delta) \hat \Delta_{k^*_u} (\rho_{m'}, r')^{-2}, \nonumber
    \end{align}
    which implies that, the arm $k^*_u$ has been sampled more than $ 2 \log(2/\delta) \hat \Delta_{k^*_u} (\rho_{m'}, r')^{-2}$ before its eviction.  By assumption we therefore have that
    \begin{align}
        T_{k^*_u}(i'_v,r'') \geq 2 \log(2/\delta) \hat \Delta_{k^*_u} (\rho_{m'}, r')^{-2}. \nonumber
    \end{align}
    This means that, a change point will be detected before the end of the episode $I'_v$. This leads then to a contradiction with the fact that, there is no detected change points in $I'_v$.

This completes the proof.
\end{proof}

\subsection{Regret analysis}\label{sec-regret-final-th2}
The purpose of this subsection is to upper bound the regret.  We discuss separately $\mathcal{M}_1$ and $\mathcal{M}_2$. 

\subsubsection{Regret of the intervals in $\mathcal{M}_1$}

Recall that for $u \in \mathcal{M}_1$, we denote by $k^*_u \in [K]$ an optimal arm on $I_u$ and $\mathcal{S}(i'_v, i'_{v+1})\neq \emptyset$, which means there exists an arm $\hat k_v \in \mathcal{S}(i'_v, i'_{v+1})$, for any $v\in \mathcal{C}$. Based on this observation, we decompose the regret with respect to arm $\hat k_v$ and denote the regret on the set $\mathcal{M}_1$ by $R'(T)$.  To be specific, we have that
    \begin{align}
        R'(T) & = \mathbb E \left[\sum_{u \in \mathcal{M}_1} \sum_{r \in I_u} \sum_{k\in K_r} \left\{\Delta_k( \bar t_r(k)) + \Delta_{\hat k_v}( \bar t_r(\hat k_v)) - \Delta_{\hat k_v}( \bar t_r(\hat k_u))\right\}\right] \nonumber\\
        & \leq \mathbb E \left[\sum_{v \in \mathcal{C}} \sum_{k\in [K]} 2 \{\bar \Delta_{\hat k_v}(v)- \bar \Delta_k(v) + \bar \Delta_k(v)\}  T_k(i'_v, i'_{v+1}) \right]+ 2T(B^{*} \lor T^{-1/2}) \nonumber \\
        & = \mathbb E \left[\sum_{v \in \mathcal{C}} \sum_{k\in [K]} 2[ 2\{\bar \Delta_{k}(v)/2 - 2\bar \Delta_{\hat k_v}(v)\} + 4 \bar \Delta_{\hat k_v}(v) ]T_k(i'_v, i'_{v+1}) \right]+ 2T(B^{*} \lor T^{-1/2}) \nonumber\\
        & = \mathbb E \left[\sum_{v \in \mathcal{C}} \sum_{k\in [K]} \{4G^{(\hat k_v)}_k + 8\bar \Delta_{\hat k_v}(v)\}  T_k(i'_v, i'_{v+1}) \right]+ 2T(B^{*} \lor T^{-1/2}), \label{reg2:2}
    \end{align}
    where the first inequality is due to the following.  For any fixed $v \in \mathcal{C}$, there exists $u \in \mathcal{M}_1$ such that $I'_v \subset I_u$.  Then by Assumption~\ref{As:2-sec-obs-gap}, it holds on the interval $I'_v$ that $\Delta_k( \bar t_r(k)) \leq 2\bar \Delta_k(v)$ and $\Delta_{\hat k_v}( \bar t_r(\hat k_v)) \leq 2\bar \Delta_{\hat k_v}(v)$. Therefore we have that $-\Delta_{\hat k_v}( \bar t_r(\hat k_v)) \leq  -2 \bar \Delta_{\hat k_v}(v)$. 

Moreover, we further have
    \begin{align}
        \eqref{reg2:2} & =  \mathbb E \left[\sum_{v \in \mathcal{C}_1} \sum_{k\in [K]} \{4G^{(\hat k_v)}_k + 8\bar \Delta_{\hat k_v}(v)\}  T_k(i'_v, i'_{v+1}) \right] \nonumber \\
        & \hspace{1cm} + \mathbb E \left[\sum_{v \in \mathcal{C}_2} \sum_{k\in [K]} \{4G^{(\hat k_v)}_k + 8\bar \Delta_{\hat k_v}(v)\}  T_k(i'_v, i'_{v+1}) \right] + 2T(B^{*} \lor T^{-1/2}) \nonumber\\
        & = \mathbb E \left[\sum_{v \in \mathcal{C}_1} \sum_{k\in [K]} \{4G^{(\hat k_v)}_k + 8\bar \Delta_{\hat k_v}(v)\}  T_k(i'_v, i'_{v+1}) \right] \nonumber \\
        & \hspace{1cm} + \mathbb E \left[\sum_{v \in \mathcal{C}_2} \sum_{\substack{k\in [K] \\ \bar{\Delta}_k(v) = 0}} \{4G^{(\hat k_v)}_k + 8\bar \Delta_{\hat k_v}(v)\}  T_k(i'_v, i'_{v+1}) \right] \nonumber\\
        & \hspace{1cm} + \mathbb E \left[\sum_{v \in \mathcal{C}_2} \sum_{\substack{k\in [K] \\ \bar{\Delta}_k(v) > 0}} \{4G^{(\hat k_v)}_k + 8\bar \Delta_{\hat k_v}(v)\}  T_k(i'_v, i'_{v+1}) \right] + 2T(B^{*} \lor T^{-1/2}) \nonumber \\
        & = \mathbb E \left[\sum_{v \in \mathcal{C}_1} \sum_{k\in [K]} 4G^{(\hat k_v)}_k T_k(i'_v, i'_{v+1}) \right] + \mathbb E \left[\sum_{v \in \mathcal{C}_2} \sum_{\substack{k\in [K] \\ \bar{\Delta}_k(v) = 0}} 6\bar \Delta_{\hat k_v}(v)  T_k(i'_v, i'_{v+1}) \right] \nonumber\\
        & \hspace{1cm} + \mathbb E \left[\sum_{v \in \mathcal{C}_2} \sum_{\substack{k\in [K] \\ \bar{\Delta}_k(v) > 0}} 4G^{(\hat k_v)}_k   T_k(i'_v, i'_{v+1}) \right] + \mathbb E \left[\sum_{v \in \mathcal{C}_2} \sum_{\substack{k\in [K] \\ \bar{\Delta}_k(v) > 0}}  8\bar \Delta_{\hat k_v}(v) T_k(i'_v, i'_{v+1}) \right] \nonumber \\
        & \hspace{1cm} + 2T(B^{*} \lor T^{-1/2}) \nonumber \\ 
        & = \mathbb E \left[\sum_{v \in \mathcal{C}_1} \sum_{k\in [K]} 4G^{(\hat k_v)}_k T_k(i'_v, i'_{v+1}) \right] + \mathbb E \Bigg[\sum_{v \in \mathcal{C}_2}  \bar \Delta_{\hat k_v}(v) \Bigg\{ 6\sum_{\substack{k \in [K] \\ \bar{\Delta}_k(v) = 0}} T_k (i'_v, i'_{v+1}) \nonumber \\
        & \hspace{1cm} + 8 \sum_{\substack{k \in [K] \\ \bar{\Delta}_k(v) > 0}} T_k (i'_v, i'_{v+1})\Bigg\}\Bigg] + \mathbb E \left[\sum_{v \in \mathcal{C}_2} \sum_{\substack{k\in [K] \\ \bar{\Delta}_k(v) > 0}} 4G^{(\hat k_v)}_k   T_k(i'_v, i'_{v+1}) \right] \nonumber \\
        & \hspace{1cm}+ 2T(B^{*} \lor T^{-1/2}) \nonumber\\
        & = 4\underbrace{\mathbb E \left[\sum_{v \in \mathcal{C}} \sum_{k\in [K]}G^{(\hat k_v)}_k  T_k(i'_v, i'_{v+1})\right]}_{R_1(T)} + 14 \underbrace{\mathbb E \left[\sum_{v \in \mathcal{C}_2}  \bar \Delta_{\hat k_v}(v) |i'_{v+1}-i'_v|\right]}_{R_2(T)} + 2T(B^{*} \lor T^{-1/2}), 
    \end{align}
    where the third identity is due to the following.  As for the first term, note that for any $v\in \mathcal{C}_1$, arm $\hat k_v$ is an optimal arm.  As for the second term, note that in the interval $I_v$, $v\in \mathcal{C}_2$, it holds that $G^{\hat k_v}_{k^{*}_v}= -2 \bar \Delta_{\hat k_v}(v)$, since $k^{*}_u$ is optimal and $\bar \Delta_{k^{*}_u}(v)=0$.  The last identity follows from the fact that, on $\mathcal{C}_2$ the optimal arms $k^{*}_u$ will not be sampled on a regular basis in $I'_v$, which means that, $T_{ k^{*}_u}(i'_v, i'_{v+1}) \leq |i'_{v+1}-i'_v|$.  Furthermore, we also have that $T_k(i'_v, i'_{v+1}) \leq |i'_{v+1}-i'_v|$, since $k$ is an sub-optimal arm.  

\medskip    
\noindent \textbf{Case 1: Regret $R_1(T)$ incurred on $\mathcal{C}$ by the pulls of sub-optimal arms with respect to $\hat k_v$.} 

Following similar arguments as those in Section~\ref{Proof:th1:regret}, we first split the set of sub-optimal arms into
    \[
        \mathbb G_{1,v} = \left\{k\in [K]: \, G^{(\hat k_v)}_k < \sqrt{\frac{K}{|I'_v|}}\right\} \quad \mbox{and} \quad \mathbb G_{2,v} = \left\{k\in [K]: \, G^{(\hat k_v)}_k \geq \sqrt{\frac{K}{|I'_v|}}\right\}.
    \]
    Based on this decomposition, we have that
    \begin{align}
        R_1(T) = \mathbb E \left[\sum_{v \in \mathcal{C}} \sum_{k\in \mathbb G_{1,v}} G^{(\hat k_v)}_k  T_k(i'_v, i'_{v+1}) \right] + \mathbb E \left[\sum_{v \in \mathcal{C}} \sum_{k\in \mathbb G_{2,v}} G^{(\hat k_v)}_k  T_k(i'_v, i'_{v+1}) \right]. \label{reg2:eq2}
    \end{align}

As for $\mathbb G_{1,v}$, we have that
    \begin{align}
        & \mathbb E \left[\sum_{v \in \mathcal{C}} \sum_{k\in \mathbb G_{1,v}} G^{(\hat k_v)}_k  T_k(i'_v, i'_{v+1}) \right] \leq \mathbb E \left[\sum_{v \in \mathcal{C}} \sum_{k \in \mathbb G_{1,v}} \sqrt{\frac{K}{|I'_v|}} T_k(i'_v, i'_{v+1})\right] \leq  \sqrt{K} \mathbb E \left[\sum_{v \in \mathcal{C}} \sqrt{|I'_v|} \right] \nonumber \\
        & \leq \sqrt{KT}\mathbb E \left[\sqrt{|\mathcal{C}}|\right] \leq \sqrt{2KTM}, \label{reg2:eq3}
    \end{align}
    where the last inequality is due to Corollary~\ref{cor2}.

As for $\mathbb G_{2,v}$, we consider two scenarios, when a sub-optimal arm $k\in \mathbb G_{2,v}$'s gap is over-estimated and under-estimated.  We further decompose $\mathbb{G}_{2, v} = \mathbb{G}_{2, 1, v} \sqcup \mathbb{G}_{2, 2, v}$, depending on the conditions in Propositions~\ref{prop:6} and \ref{prop:7}, respectively.

We remark that in both Propositions~\ref{prop:6} and \ref{prop:7}, we require that $G^{(\hat{k}_v)}_k(v) \geq 4B^*$.  For $k$ and $v$ such that $G^{(\hat{k}_v)}_k(v) < 4B^*$, their contributions to the total regret is upper bounded by $4B^*T$.  For notational simplicity, in the rest of \textbf{Case 1}, we assume that $G^{(\hat{k}_v)}_k(v) \geq 4B^*$ holds.

\medskip
\noindent \textbf{Case 1.1: Over-estimated.}

Due to Proposition~\ref{prop:6}, we have that 
    \begin{align}
        & \mathbb E \left[\sum_{v \in \mathcal{C}} \sum_{k\in\mathbb  G_{2,1,v} } G^{(\hat k_v)}_k T_k(i'_v, i'_{v+1}) \right] \nonumber \\
        \leq & \mathbb E \left[\sum_{v \in \mathcal{C}} \sum_{k\in \mathbb G_{2,v}} G^{(\hat k_v)}_k \left\{2c^{-2} \log(2/\delta) \left(G_{k}^{(\hat k_v)}(v)\right)^{-2} + \frac{|I'_v|\sqrt{ M}}{\sqrt{TK}} \frac{1}{ G_{k}^{(\hat k_v)}(v)} + 1 \right\} \right]\nonumber\\
        \leq & \mathbb E \left[\sum_{v \in \mathcal{C}} \sum_{k\in \mathbb G_{2,1,v}}\left\{2 c^{-2} \log(2/\delta) \left(G_{k}^{(\hat k_v)}(v)\right)^{-1} + \frac{|I'_v|\sqrt{ M}}{\sqrt{T K}} + G_{k}^{(\hat k_v)}(v)\right\}\right] \nonumber\\
        \leq & \mathbb E \left[\sum_{v \in \mathcal{C}} \sum_{k\in \mathbb G_{2,1,v}}\left\{2 c^{-2} \log(2/\delta) \sqrt{\frac{|I'_v|}{K}} + \frac{|I'_v|\sqrt{M}}{\sqrt{T K}} + 1 \right\} \right] \nonumber \\
        \leq & 2c^{-2} \log(2/\delta)\sqrt{K} \mathbb E\left[\sum_{v \in \mathcal{C}} \sqrt{|I'_v|} \right] +  \frac{K}{\sqrt{T K}} \mathbb E \left[\sqrt{ M}  \sum_{v \in \mathcal{C}}  |I'_v|\right] + K \mathbb E[|\mathcal{C}|] \nonumber\\
        \leq & c^{-2} 2\log(2/\delta)\sqrt{KT} \mathbb E [\sqrt{ |\mathcal{C}|}]+ \sqrt{T K} \mathbb E [\sqrt{ M}]+ K \mathbb E[|\mathcal{C}|] \nonumber\\
        \leq & \left\{2^{3/2}c^{-2} \log(2/\delta) + 1\right\} \sqrt{KTM} + 2KM, \label{reg2:eq5}
    \end{align}
    where the third inequality is due to the definition of $\mathbb{G}_{2,v}$ and the fact that $G_{k}^{(\hat k_v)}(v) \leq 1$, and the final inequality follows from Corollary~\ref{cor2}.

\medskip
\noindent \textbf{Case 1.2: Under-estimated.}

Due to Proposition~\ref{prop:7}, we have that
    \begin{align}
        & \mathbb E \left[\sum_{v \in \mathcal{C}} \sum_{k\in\mathbb  G_{2,2,v} } G^{(\hat k_v)}_k T_k(i'_v, i'_{v+1}) \right] \leq \mathbb E \left[\sum_{v \in \mathcal{C}} \sum_{k\in \mathbb G_{2,2,v}} G^{(\hat k_v)}_k 16 \log(2/\delta) \left(G_{k}^{(\hat k_v)}\right)^{-2} \right] \nonumber\\
        \leq & \mathbb E \left[\sum_{v \in \mathcal{C}} \sum_{k\in \mathbb G_{2,2,v}}  16 \log(2/\delta) \left(G_{k}^{(\hat k_v)}\right)^{-1} \right] \leq \mathbb E \left[\sum_{v \in \mathcal{C}} \sum_{k\in \mathbb G_{2,2,v}}  16\log(2/\delta) \sqrt{\frac{|I'_v|}{K}} \right]\nonumber \\
        \leq & 16 \log(2/\delta)\sqrt{KT} \mathbb E \left[ \sqrt{|\mathcal{C}|}\right] \leq 16 \log(2/\delta)\sqrt{2KTM}, \label{reg2:eq6}
    \end{align}
    where the third inequality is due to the definition of $\mathbb{G}_{2,v}$ and $G_{k}^{(\hat k_v)} \leq 1$, and the final inequality is due to Corollary~\ref{cor2}.

\medskip
Plugging \eqref{reg2:eq3}, \eqref{reg2:eq5} and~\eqref{reg2:eq6} into \eqref{reg2:eq2}, it follows that
    \[
        R_1(T) \leq \left\{(2^{3/2}c^{-2} + 16)\log(2/\delta) + 2^{3/2}\right\} \sqrt{KTM} + 2KM.
    \]

\medskip
\noindent \textbf{Case2: Regret $R_2(T)$ incurred on $\mathcal{C}_2$.}

As for the term $R_2(T)$, it holds that
    \begin{align}
        & \mathbb E \left[\sum_{u \in \mathcal{C}_2} \bar \Delta_{\hat k_v}(v) |i'_{v+1}-i'_v|\right] \leq \mathbb E \left[\sum_{v \in \mathcal{C}_2} \bar \Delta_{\hat k_v}(v) 25 \log(2/\delta) \bar \Delta_{\hat k_v}(v)^{-1} \sqrt{\frac{TK}{M}}\right] \nonumber \\
        = & 25 \log(2/\delta) \sqrt{KT} \mathbb E \left[\sum_{v \in \mathcal{C}_2} \frac{1}{\sqrt{M}} \right] \leq 25 \log(2/\delta) \sqrt{KT} \mathbb E \left[\frac{|\mathcal{C}_2|}{\sqrt{M}} \right] \leq 25 \log(2/\delta) \sqrt{2KTM}, \nonumber
    \end{align}
    where the first inequality is due to Proposition~\ref{prop:8} and the last inequality is due to Corollary~\ref{cor2}.

Summing up the regret of both cases and taking $\delta=(KT^3)^{-1}$ lead us to 
    \begin{align}
        R'(T)\leq \left\{4\times (2^{3/2}c^{-2} + 366)\log(2/\delta) + 2^{11/2}\right\} \sqrt{KTM} + 8KM + 2T(B^{*} \lor T^{-1/2}) + 4TB^*,
    \end{align}
    for some $c < 1$.

\subsubsection{Regret of the intervals in $\mathcal{M}_2$}

This is identical as Section~\ref{Proof:th1:regret1} and the regret here satisfies that
    \begin{align}
        R''(T) \leq 4KM.
\end{align}

\vspace{1cm}
To sum up, the overall regret $R(T)$ can be obtained by summing the regrets $R'(T)$ and $R''(T)$.

\section{proof of the corollaries of Section~\ref{sec:conse}}\label{proof:conse}

\subsection{Proof of Corollary~\ref{cor-4}, for \textbf{Case b)}}

First note that Assumption~\ref{As:2-unobs} is satisfied for $B^* = u^*K/T$, by definition of $u^*$. We also have that on each segment $[\bar\tau_m^{(k)}, \bar\tau_{m+1}^{(k)})$, this polynomial setting is a special case of \textbf{Case d)}, see Subsection~\ref{ss:cased}. Indeed, the gaps $\Delta_k$'s are also polynomials of degree at most $\gamma^*$ - as the difference of two such polynomial. And a polynomial of degree $\gamma^*$ has at most $(\gamma^* - 1)\lor 0$ inflexion points. So this is a special case of \textbf{Case d)}, see Subsection~\ref{ss:cased}. And so for some partition in at most $M^*(\gamma^*+1) K(\lfloor \log_2(T^{1/2})\rfloor +1)$ intervals, Assumption~\ref{As:2-sec-obs-gap} is satisfied - see the construction of \textbf{Case d)} in Appendix~\ref{proof:cased} for more details. Therefore, we can apply Theorem~\ref{th:2} to provide a bound in this case on the regret of \CPDMAB.

\subsection{Proof of Corollary~\ref{cor:c}, for \textbf{Case c)}}

First note that Assumption~\ref{As:2-unobs} is satisfied for any $B^* \geq (K/T)^\alpha$, because of the H\"older assumption.  Also, since $f_k$'s are $\alpha$-H\"older continuous, the function $\max_k f_k$ is also $\alpha$-H\"older, so that the gaps satisfy that for any $t,t' \in [\bar\tau_m^{(k)}, \bar\tau_{m+1}^{(k)})$, we have $|\Delta_k(t) - \Delta_k(t')| \leq \left|\frac{t-t'}{T}\right|^{\alpha}$. And so for any $k,m$, and any $B^* > 0$, we define recursively the following change points for $\kappa \geq 1$, with $\zeta_{m,k,1}(B^*) = 1$:
$$\zeta_{m,k,\kappa+1}(B^*) = \min\left\{z \in [\zeta_{m,k,\kappa}(B^*),T): |\Delta_k(\zeta_{m,k,\kappa}(B^*)) - \Delta_k(z)| \geq B^* \right\}.$$
Define $\bar \kappa_{k,m}(B^*)$ as the total number of such change points for $k,m$. Note that on $[\bar\tau_m, \bar\tau_{m+1})$ and for arm $k$, Assumption~\ref{As:2-sec-obs-gap} is satisfied for $B^*$ and the partition induced by the change points $(\zeta_{m,k,\kappa+1}(B^*))_{\kappa \leq \bar \kappa_{k,m}(B^*)}$. 

Also because of the H\"older assumption, we know that $\bar \kappa_{k,m}(B^*)$ is bounded by $\lfloor  |\frac{\bar \tau_{m+1} - \bar \tau_m}{T}| (B^{*})^{-1/\alpha}\rfloor +1$ - so that the total number of all change points $(\zeta_{m,k,\kappa}(B^*))_{m\leq M, k\in [K], \kappa \leq \bar \kappa_{k,m}(B^*)}$ is smaller than $K(B^{*})^{-1/\alpha} + M^*$. And Assumption~\ref{As:2-sec-obs-gap} is satisfied for $B^*$ on the partition induced by all these change points.

Therefore, we can apply Theorem~\ref{th:2} to provide a bound in this case on the regret of \CPDMAB, for any $B^*$ such that $B^* \geq (K/T)^\alpha$, and with a number of intervals in the partition bounded as $K(B^{*})^{-1/\alpha} + M^*$ - and we optimise over $B^*$ to provide the following corollary.

\subsection{Proof of Corollary~\ref{cor21}, for \textbf{Case d)}}\label{proof:cased}

In this setting, (1) implies Assumption~\ref{As:2-unobs}. Also, (2) implies Assumption~\ref{As:2-sec-obs-gap}, but on a more refined partition than the one given by the $(\bar \tau_m)_m$. We describe here its construction. For any $m$ and any $k$, we know that $\Delta_k$ is monotone, let us assume non-decreasing for simplicity. Write for any $\lfloor \log_2(T^{1/2})\rfloor +1 \geq \kappa\geq 1$ 
$$\zeta_{k,m,\kappa} = \min\{t \in [\bar \tau^{(k)}_m, \bar \tau^{(k)}_{m+1}): \Delta_k(t) \geq 2^{\kappa} (B^* \lor T^{-1/2})\}.$$
By definition we have that the change points $(\zeta_{k,m,\kappa})_{\kappa\in \{1,\ldots, \lfloor \log_2(T^{1/2})\rfloor +1\}}$ define a partition of $[\bar \tau^{(k)}_m, \bar \tau^{(k)}_{m+1})$ in at most $\lfloor \log_2(T^{1/2})\rfloor +1$ intervals, such that the condition of Assumption~\ref{As:2-sec-obs-gap} is satisfied for this given $k$, and on $[\bar \tau^{(k)}_m, \bar \tau^{(k)}_{m+1})$, for the partition defined by the change points  $(\zeta_{k,m,\kappa})_{m\leq \upsilon^*, \kappa \leq \log_2(T^{1/2})\rfloor +1}$. Doing this for all $m\leq \upsilon^*$ and all $k \in [K]$, we have that the change points $(\zeta_{k,m,\kappa})_{k \in [K], m \leq \upsilon^*, \kappa\leq \log_2(T^{1/2})\rfloor +1}$ form a partition of $[T]$ in at most $\upsilon^*K(\lfloor \log_2(T^{1/2})\rfloor +1)$ intervals, such that Assumption~\ref{As:2-sec-obs-gap} is satisfied.

Therefore, we can apply Theorem~\ref{th:2} to provide a bound in this case on the regret of \CPDMAB.

\end{document}